\theoremstyle{plain}
\newtheorem{theorem}{Theorem}[section]
\newtheorem{lemma}[theorem]{Lemma}
\newtheorem{corollary}[theorem]{Corollary}
\theoremstyle{definition}
\theoremstyle{remark}
\icmltitlerunning{Are Neurons Actually Collapsed? On the Fine-Grained Structure in Neural Representations}
\begin{document}

\twocolumn[
\icmltitle{Are Neurons Actually Collapsed? On the Fine-Grained Structure in\\Neural Representations}




\begin{icmlauthorlist}
\icmlauthor{Yongyi Yang}{umich}
\icmlauthor{Jacob Steinhardt}{ucb}
\icmlauthor{Wei Hu}{umich}
\end{icmlauthorlist}
\icmlaffiliation{umich}{University of Michigan, Ann Arbor, Michigan, USA}
\icmlaffiliation{ucb}{UC Berkeley, Berkeley, California, USA}

\icmlcorrespondingauthor{Wei Hu}{vvh@umich.edu}

\icmlkeywords{Machine Learning, ICML}

\vskip 0.3in
]



\printAffiliationsAndNotice{} 

\begin{abstract}
Recent work has observed an intriguing ``Neural Collapse'' phenomenon in well-trained neural networks, where the last-layer representations of training samples with the same label collapse into each other. This appears to suggest that the last-layer representations are completely determined by the labels, and do not depend on the intrinsic structure of input distribution. We provide evidence that this is not a complete description, and that the apparent collapse hides important fine-grained structure in the representations.
Specifically, even when representations apparently collapse, the small amount of remaining variation can still faithfully and accurately captures the intrinsic structure of input distribution.
As an example, if we train on CIFAR-10 using only 5 coarse-grained labels (by combining two classes into one super-class) until convergence, we can reconstruct the original 10-class labels from the learned representations via unsupervised clustering. The reconstructed labels achieve $93\%$ accuracy on the CIFAR-10 test set, nearly matching the normal CIFAR-10 accuracy for the same architecture.
We also provide an initial theoretical result showing the fine-grained representation structure in a simplified synthetic setting.
Our results show concretely how the structure of input data can play a significant role in determining the fine-grained structure of neural representations, going beyond what Neural Collapse predicts.
\end{abstract}

\section{Introduction}

Much of the success of deep neural networks has, arguably, been attributed to their ability to learn useful \emph{representations}, or \emph{features}, of the data~\citep{rumelhart1985learning}. Although neural networks are often trained to optimize a single objective function with no explicit requirements on the inner representations, there is ample evidence suggesting that these learned representations contain rich information about the input data~\citep{levy-goldberg-2014-linguistic,olah2017feature}. As a result, formally characterizing and understanding the structural properties of neural representations is of great theoretical and practical interest, and can provide insights on how deep learning works and how to make better use of these representations. 

One intriguing phenomenon recently discovered by~\citet{neuralcollapse-original} is \emph{Neural Collapse}, which identifies structural properties of last-layer representations during the terminal phase of training (i.e. after zero training error is reached). The simplest of these properties is that the last-layer representations for training samples with the same label collapse into a single point, which is referred to as ``variability collapse (NC1).''
This is surprising, since the collapsed structure is not necessary to achieve small training or test error, yet it arises consistently in standard architectures trained on standard classification datasets.

A series of recent papers were able to theoretically explain Neural Collapse under a simplified model called the \emph{unconstrained feature model} or \emph{layer-peeled model} (see \Cref{sec:related} for a list of references). In this model, the last-layer representation of each training sample is treated as a free optimization variable and therefore the training loss essentially has the form of a matrix factorization. Under a variety of different setups, it was proved that the solution to this simplified problem should satisfy Neural Collapse. Although Neural Collapse is relatively well understood in this simplified model, this model completely ignores the role of the input data because the loss function is independent of the input data. Conceptually, this suggests that \textbf{Neural Collapse is only determined by the labels} and may happen regardless of the input data distribution. \citet{zhu2021geometric} provided further empirical support of this claim via a random labeling experiment.
 
On the other hand, it is conceivable that the \textbf{intrinsic structure of the input distribution \emph{should} play a role} in determining the structure of neural net representations. For example, if a class contains a heterogeneous set of input data (such as different subclasses), it is possible that their heterogeneity is also respected in their feature representations~\citep{sohoni2020no}.
However, this appears to contradict Neural Collapse, because Neural Collapse would predict that all the representations collapse into each other as long as they have the same class label. This dilemma motivates us to study the following main question in this paper:
\begin{center}
    \emph{How can we reconcile the roles of the \textbf{intrinsic structure of input distribution} vs. the \textbf{explicit structure of the labels} in determining the last-layer representations in neural networks?}
\end{center}

\begin{figure}[t]
\centering
\includegraphics[width=\linewidth]{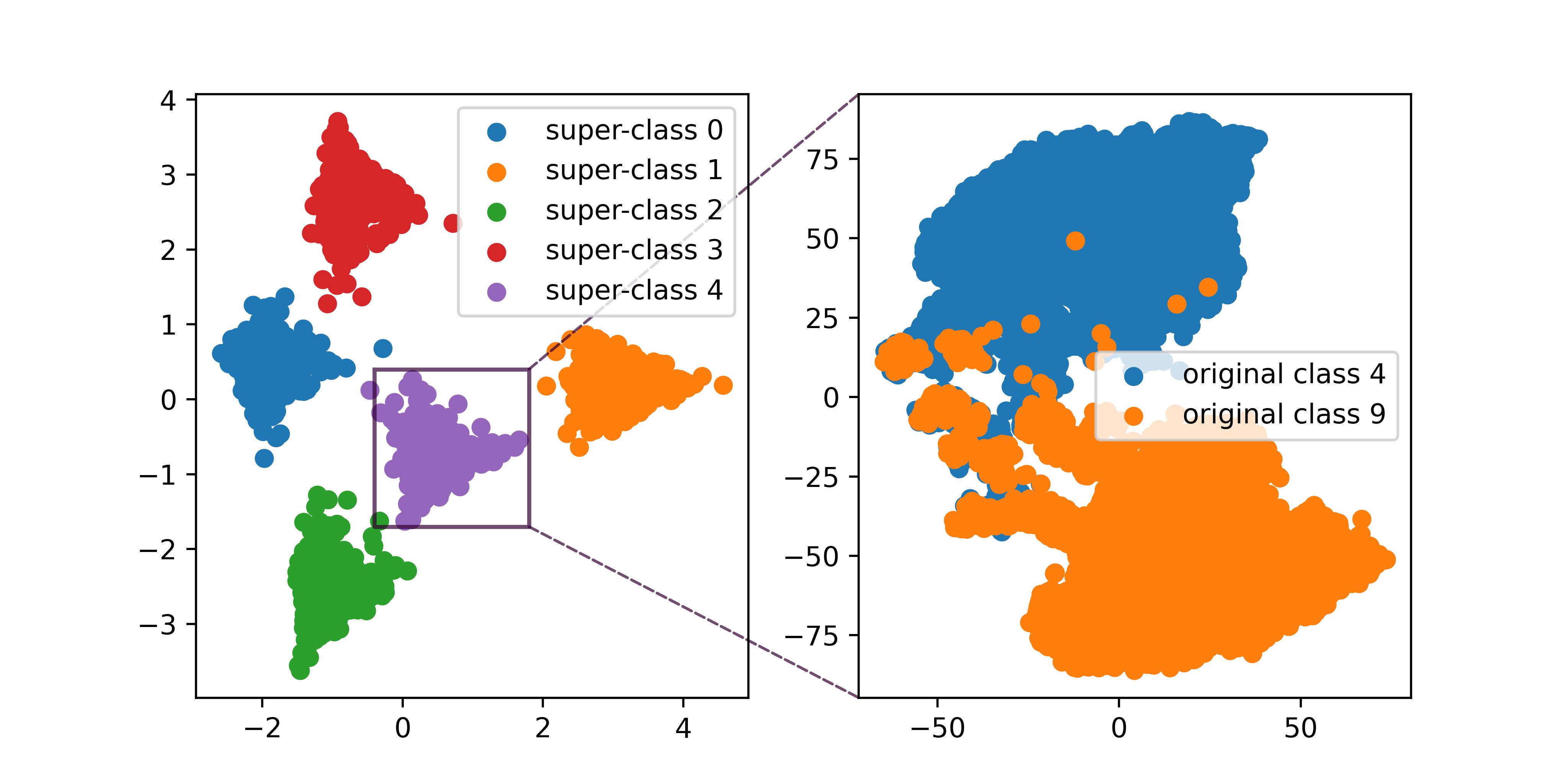}
    \caption{Fine-grained clustering structure of the last-layer representations of ResNet-18 trained on Coarse CIFAR-10 (5 super-classes). Left figure: PCA visualization for all training samples. Right figure: t-SNE visualization for all training samples in super-class 4 (which consists of original classes 4 and 9).}
    \label{fig:intro-visualize}
\end{figure}

\paragraph{Our methodology and findings.}
To study the above question, we design experiments to manually create a \emph{mismatch} between the intrinsic structure of the input distribution and the explicit labels provided for training in standard classification datasets and measure how the last-layer representations behave in response to our interventions. This allows us to isolate the effect of the input distribution from the effect of labels.
As an illustrative example, for the CIFAR-10 dataset (a 10-class classification task), we alter its labels in two different ways, resulting in a coarsely-labeled and a finely-labeled version:
\begin{itemize}
\item Coarse CIFAR-10: combine every two class labels into one and obtain a 5-class task (see \cref{fig:dataset_info} for an illustration);
\item Fine CIFAR-10: split every class label randomly into two labels and obtain a 20-class task.
\end{itemize}
We train standard network architectures (e.g. ResNet, DenseNet) using SGD on these altered datasets. Our main findings are summarized below.

First, both the intrinsic structure of the input distribution and the explicit labels provided in training clearly affect the structure of the last-layer representations. The effect of input distribution emerges earlier in training, while the effect of labels appears at a later stage.
For example, for both Coarse CIFAR-10 and Fine CIFAR-10, at some point the representations naturally form 10 clusters according to the original CIFAR-10 labels (which comes from the intrinsic input structure), even though 5 or 20 different labels are provided for training. Later in training (after 100\% training accuracy is reached), the representations collapse into 5 or 20 clusters driven by the explicit labels provided, as predicted by Neural Collapse. 

Second, even after Neural Collapse has occurred according to the explicit label information, the seemingly collapsed representations corresponding to each label can still exhibit \emph{fine-grained structures} determined by the input distribution. As an illustration, \Cref{fig:intro-visualize} visualizes the representations from the last epoch of training a ResNet-18 on Coarse CIFAR-10. While globally there are 5 separated clusters as predicted by Neural Collapse, if we zoom in on each cluster, it clearly consists of two subclusters which correspond to the original CIFAR-10 classes. We also find that this phenomenon persists even after a very long training period (e.g. 1,000 epochs), indicating that the effect of input distribution is not destroyed by that of the labels, at least not within a normal training budget.

To further validate our finding that significant input information is present in the last-layer representations despite Neural Collapse, we perform a simple \emph{Cluster-and-Linear-Probe (CLP)} procedure on the representations from ResNet-18 trained on Coarse CIFAR-10, in which we use an unsupervised clustering method to reconstruct the original labels, and then train a linear classifier on top of these representations using the reconstructed labels. We find that CLP can achieve $>93\%$ accuracy on the original CIFAR-10 test set, matching the standard accuracy of ResNet-18, even though only 5 coarse labels are provided the entire time.

\paragraph{Theoretical result in a synthetic setting.}
To complement our findings, we provide a theoretical explanation of the fine-grained representation structure in a simplified synthetic setting --- a one-hidden-layer neural network trained on coarsely labeled Gaussian mixture data. We prove that such a network trained by gradient descent produces separable hidden-layer representations for different clusters even if they are given the same label for training.

\paragraph{Takeaway.}
While Neural Collapse is an intriguing phenomenon that consistently happens, we provide concrete evidence showing that it is not the most comprehensive description of the behavior of last-layer representations in practice, as it fails to capture the possible fine-grained properties determined by the intrinsic structure of the input distribution.

\section{Related Work} \label{sec:related}

The Neural Collapse phenomenon was originally discovered by \citet{neuralcollapse-original}, and has led to a series of further investigations.

A number of papers~\citet{fang2021exploring, lu2020neural, wojtowytsch2020emergence, mixon2022neural, zhu2021geometric, ji2021unconstrained,han2021neural, zhou2022optimization, tirer2022extended, yaras2022neural} studied a simplified \emph{unconstrained feature model}, also known as \emph{layer-peeled model}, and showed that Neural Collapse provably happens under a variety of settings. This model treats the last-layer representations of all training samples as free optimization variables. By doing this, the loss function no longer depends on the input data, and therefore this line of work is unable to capture any effect of the input distribution on the structure of the representations.
\citet{ergen2021revealing, tirer2022extended, weinan2022emergence} considered more complicated models but still did not incorporate the role of the input distribution.

\citet{hui2022limitations} studied the connection of Neural Collapse to generalization and concluded that Neural Collapse occurs only on the training set, not on the test set.
\citet{galanti2021role} found that Neural Collapse does generalize to test samples as well as new classes, and used this observation to study transfer learning and few-shot learning.

\citet{sohoni2020no} observed that the last-layer representations of different subclasses within the same class are often separated into different clusters, and used this observation to design an algorithm for improving group robustness. The fine-grained representation phenomenon we observe is in a qualitatively different regime from that of \citet{sohoni2020no}. First of all, we focus on the Neural Collapse regime and find that fine-grained representation structure can co-exist with Neural Collapse. Furthermore, \citet{sohoni2020no} looked at settings in which different subclasses have very different accuracies and explicitly attributed the representation separability phenomenon to this performance difference. On the other hand, we find that representation separability can happen even when there is no performance gap between different subclasses.


\section{Preliminaries and Setup}


Consider a classification dataset $\mathcal D = \{(\bx_k , y_k)\}_{k=1}^n$, where $(\bx_k , y_k) \in \mathbb R^{d'} \times [C]$ is a pair of input features and label, $n$ is the number of samples, $d'$ is the input dimension, and $C$ is the number of classes. Here $[C] = \{1,\ldots,C\}$.

For a given neural network, we denote its last-layer representations corresponding to the dataset $\mathcal D$ by $H \in \mathbb R^{n \times d}$, i.e. the hidden representation before the final linear transformation, where $d$ is the last-layer dimensionality. For an original class $c \in [C]$, we denote the number of samples in class $c$ by $n_c$, and the last-layer representation of $k$-th sample in class $c$ by ${\boldsymbol{h}}^{(c)}_k$.

\subsection{Preliminaries of Neural Collapse}\label{sec:preliminaries-of-NC}

Neural Collapse~\citep{neuralcollapse-original} characterizes 4 phenomena, named NC1-NC4. Here we introduce NC1 and NC2 which concern the structure of the last-layer representations.

NC1, or variability collapse, asserts that the variance of last-layer representations of samples within the same class vanishes as training proceeds. Formally, it can be measured by $\mathsf{NC_1} = \frac{1}{C}\mathrm{Tr} \left({\Sigma}_W {\Sigma}_B^{\dagger}\right)$ \citep{neuralcollapse-original,zhu2021geometric}, which is observed to go to $0$. Here ${{\Sigma}}_W$ and ${{\Sigma}}_B$ are defined as\begin{equation}{\Sigma}_W  = \frac{1}{C} \sum_{c \in [C]} \frac{1}{n_c} \sum_{i=1}^{n_c} \left(\bh_{i}^{(c)} - \boldsymbol{\mu}_c\right)\left(\bh_{i}^{(c)} - \boldsymbol{\mu}_c\right)^\top  \label{eq:NC1}\end{equation}and\begin{equation}
{\Sigma}_B = \frac{1}{C} \sum_{c\in[C]} \left({\boldsymbol{\mu}}_c - {\boldsymbol{\mu}}_G\right)\left({\boldsymbol{\mu}}_c - {\boldsymbol{\mu}}_G\right)^\top,
\end{equation}
where ${\boldsymbol{\mu}}_c=\frac{1}{n_c}\sum_{k=1}^{n_c}{\boldsymbol{h}}_k^{(c)}$ are the class means and ${\boldsymbol{\mu}}_G = \frac{1}{n}\sum_{c=1}^C \sum_{k=1}^{n_c}{\boldsymbol{h}}_k^{(c)}$ is the global mean.

NC2 predicts that the class means form a special structure, namely, their normalized covariance converges to the Simplex Equiangular Tight Frame (ETF). This can be characterize by\begin{equation}\mathsf{NC_2} \overset{\text{def}}{=} \left\| \frac{MM^\top}{\left\|MM^\top\right\|_\mathcal F} - \frac{1}{\sqrt{C-1}}\left(I - \frac{1}{C}{\boldsymbol{1}}_C {\boldsymbol{1}}_C^\top \right)\right\|_\mathcal F \to 0\end{equation}
during training, where $M \in \R^{C\times d}$ is the stack of centralized class-means, whose $c$-th row is ${\boldsymbol{\mu}}_c - {\boldsymbol{\mu}}_G$, and $\mathbbm 1_C \in \mathbbm R^C$ is the all-one vector, and $I$ is the identity matrix.

\subsection{Experiment Setup}\label{sec:experiment-setup}

In our experiment, we explore the role of input distribution and labels through assigning coarser or finer labels to each sample, and then explore the structure of last-layer representation of a model trained on the dataset with coarse or fine labels and see to what extent the information of original labels are preserved.

The coarse labels are created in the following way. Choose a number $\tilde C$ divides $C$, and create coarse labels by \begin{equation}
\tilde y_k = y_k \mathop{ \mathrm{ mod } } \tilde C,
\end{equation}
which merges the classes whose indices have the same remainder w.r.t. $\tilde C$ and thus creates $\tilde C$ super-classes. Since the original indices of classes generally have no special meanings, this process should act similarly to randomly merging classes.\footnote{We adopt this deterministic process for simplicity and reproducibility. However, we do provide additional results with random merging in \cref{sec:complete-rand-cifar-10}.} We say the samples with the same coarse label belong to the same super-class, and call the dataset $\tilde {\mathcal D} = \{(x_k , \tilde y_k)\}_{k=1}^n$ the \emph{coarse dataset}, which we use to train the model. Figure \ref{fig:dataset_info} provides an illustration of the coarse labels on CIFAR-10 with $\tilde C = 5$, which we call Coarse CIFAR-10.

\begin{figure*}[htbp]
    \centering
    \includegraphics[width=\linewidth]{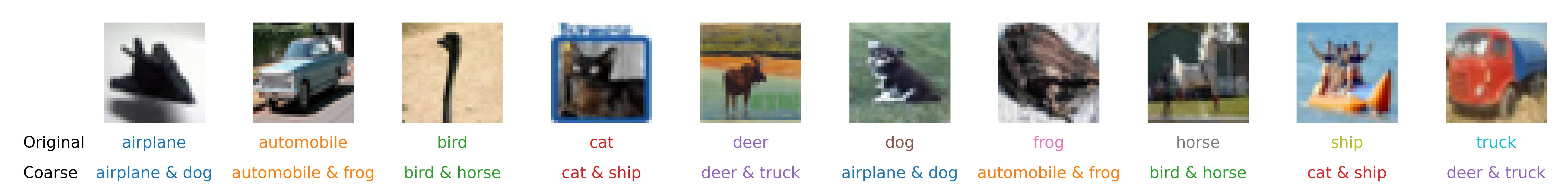}
    \caption{Illustration of Coarse CIFAR-10.}
    \label{fig:dataset_info}
\end{figure*}

To create fine labels, we randomly split each class into two sub-classes. Specifically, the fine labels are created by\begin{equation}\hat y_k = y_k + \beta C,\end{equation} 
where $\hat y_k$ is the fine label of sample $k$ and $C$ is the number of original classes and $\beta$ is a Bernoulli Variable. This process result in a dataset $\hat {\mathcal D} = \left\{\left(x_k , \hat y_k\right)\right\}_{k=1}^n$ with $2C$ classes. Same as before, we call $\hat {\mathcal D}$ the fine dataset.

\section{Exploring the Fine-Grained Representation Structure with Coarse CIFAR-10}\label{sec:experiment-on-coarse-cifar-10}

In this section, we experiment with coarsely labeled datasets, using Coarse CIFAR-10 as an illustrative example. Specifically, the model is trained on the training set of Coarse CIFAR-10 for a certain number of steps that is sufficient for the model to converge. We then take the last-layer representations of the model throughout training and explore their structure.


In order to make an exhaustive observation, the experiments are performed using different learning rates and weight-decay rates. Specifically, we choose the initial learning rate in $\{10^{-1},10^{-2},10^{-3}\}$ (we apply a standard learning rate decay schedule) and the weight-decay rate in $\{5\times 10^{-3} , 5\times 10^{-4}, 5\times 10^{-5}\}$ and run all 9 possible combinations of them. The experiments are also conducted on multiple datasets and network architectures. Due to space limit, we defer complete results to Appendix (see \cref{sec:complete-cifar-5,sec:complete-rand-cifar-100,sec:complete-rand-cifar-10}). In this section we focus on ResNet-18 on Coarse CIFAR-10, where the original number of classes is $C = 10$ and the number of coarse labels is $\tilde C = 5$. In this section, we only report results for one group of representative hyper-parameter combinations: learning rate is $0.1$ and weight-decay rate is $5\times 10^{-4}$. Results for other hyper-parameters are presented in \cref{sec:complete-cifar-5,sec:complete-cifar-20,sec:complete-rand-cifar-100,sec:complete-rand-cifar-10,sec:densenet,sec:vgg}. 

First, we verify that Neural Collapse does happen, i.e. the representations converge to $5$ clusters, and the class means form a Simplex ETF structure. Specifically, we measure $\mathsf{NC_1}$ and $\mathsf{NC_2}$ defined in \Cref{sec:preliminaries-of-NC}, with $C$ replaced by $\tilde C$ since we are calculating it on the coarsely labeled dataset. The results are shown in \Cref{fig:nc_verify}, which matches previous results in \citet{neuralcollapse-original,zhu2021geometric}, which verify Neural Collapse happens.

\begin{figure}[htbp]
    \centering
    \includegraphics[scale=0.40]{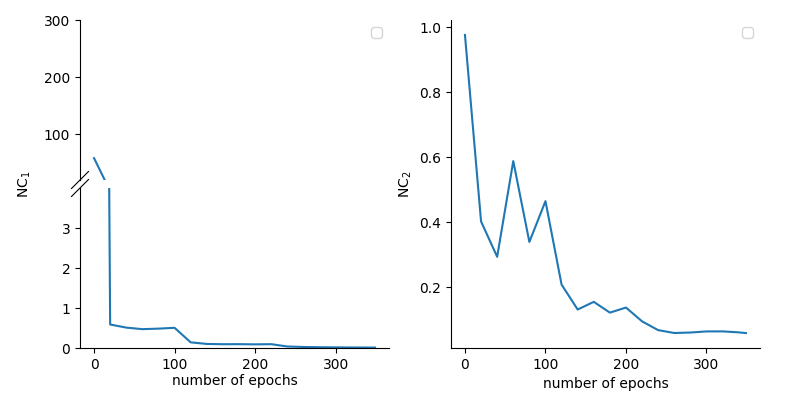}
    \caption{The value of $\mathrm{NC}_1$ and $\mathrm{NC}_2$ w.r.t. number of training epochs.}
    \label{fig:nc_verify}
\end{figure}

\subsection{Class Distance}\label{sec:class-distance}

Now, we look at the average square Euclidian distance of last-layer representations between each two \textit{original} classes. Formally, we calculate a class distance matrix $D \in \mathbb R^{C \times C}$, whose entries are \begin{equation} D_{i,j} = \frac{1}{n_in_j}  \sum_{u=1}^{n_i}\sum_{v=1}^{n_j} \left\|\bh^{(i)}_u - \bh^{(j)}_v\right\|_{2}^2,\label{eqn:mean-square-distance}\end{equation}
for all $i,j \in [C]$, where ${\boldsymbol{h}}_k^{(c)}$ represents the last-layer representation of the $k$-th sample of super-class $u$. 

\begin{figure}[htbp]
    \centering
    \includegraphics[width=\linewidth]{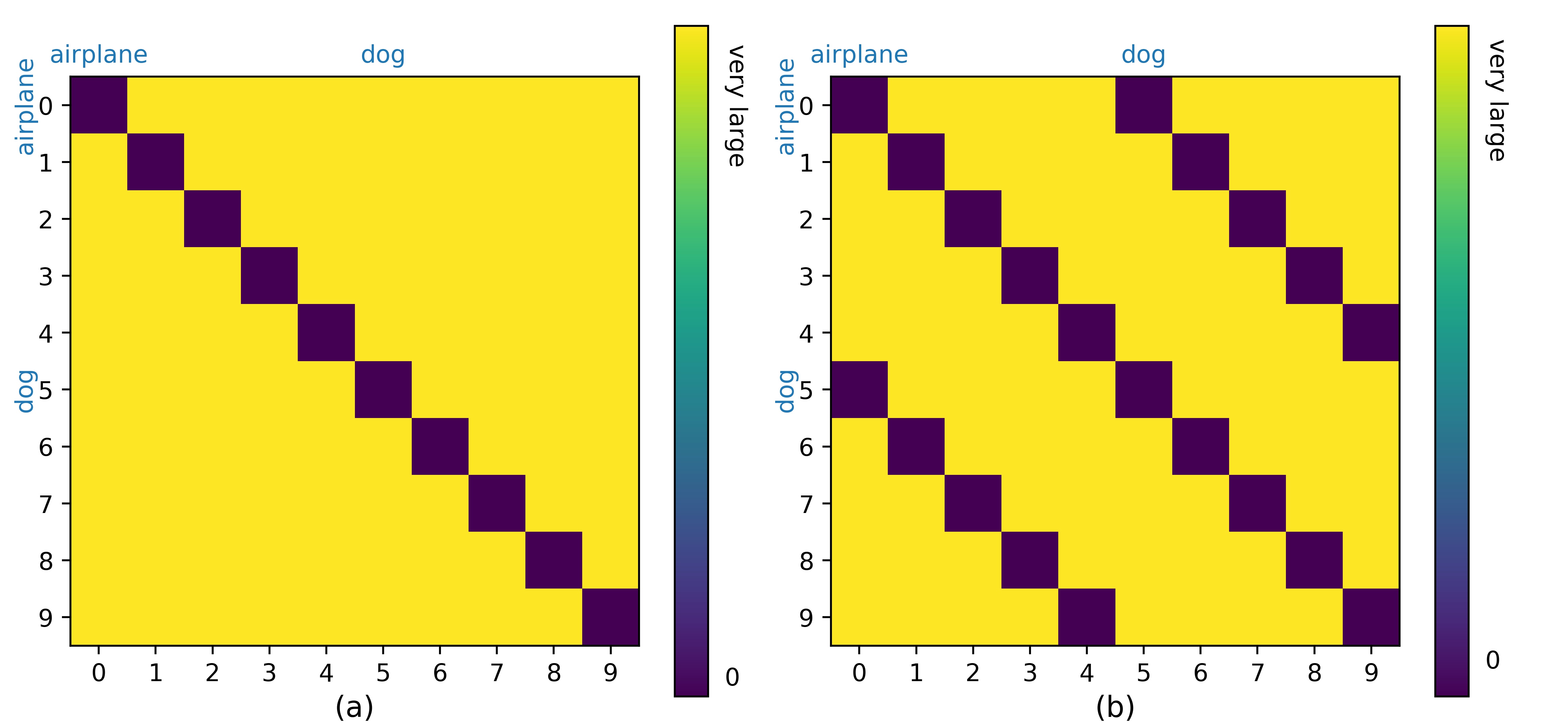}
    \caption{An illustration of predicted class distance matrix heatmaps of Coarse CIFAR-10, each row and column represents an original class. (a): If input distribution dominates the last-layer representations. (b): If Neural Collapse dominates the last-layer representations.}
    \label{fig:distance-intro}
\end{figure}

Since the model is trained on the coarse dataset, Neural Collapse asserts that for every original class pair $i,j$ in the same super-class (including the case of $i=j$), the class distance $D_{i,j}$ should be very small. In Coarse CIFAR-10, this will result in three darks lines (darker color represents lower value) in the heatmap of $D$ since each super class contains two original classes, as illustrated in \Cref{fig:distance-intro} (b). For example, in Coarse CIFAR-10 the original class "airplane" and "dog" both belong to the super class "airplane \& dog", therefore per Neural Collapse's prediction, their last-layer representations would collapse to each other, making the average square distance extremely small compared to other entries. In contrast, if the last-layer representations perfectly reflect the distribution of input, i.e. original classes, the class distance matrix should be a diagonal matrix as shown in \Cref{fig:distance-intro} (a), because the last-layer representation of samples in each original class only collapse to the class-mean of this original class.

\Cref{fig:distance-result-group-2} displays the heatmaps of class distance matrix $D$ at different stages in training, which shows that:
(i) There are indeed three dark lines that show up, but they do not show up simultaneously. In particular, the central diagonal line -- representing the samples in the same original classes -- emerges earlier in training.
(ii) Even in the final stage of training when the training error is zero, the three lines are not of the same degree of darkness. The central line is clearly darker, indicating a smaller distance within the original classes. 

Those observations suggest that the actual behaviour of the last-layer representations is between the cases predicted in \Cref{fig:distance-intro} (a) and (b): the input distribution and training labels both have an impact on the distribution of the last-layer representations, both of them can be present even after reaching zero training error for a long time, and the impact of input distribution emerges earlier in training. These observations suggest both the existence of Neural Collapse and the inadequacy of Neural Collapse to completely describe the behaviour of last-layer representations.

\begin{figure}[htbp]
\centering
    \subfigure[epoch = 20]{\includegraphics[scale=0.22]{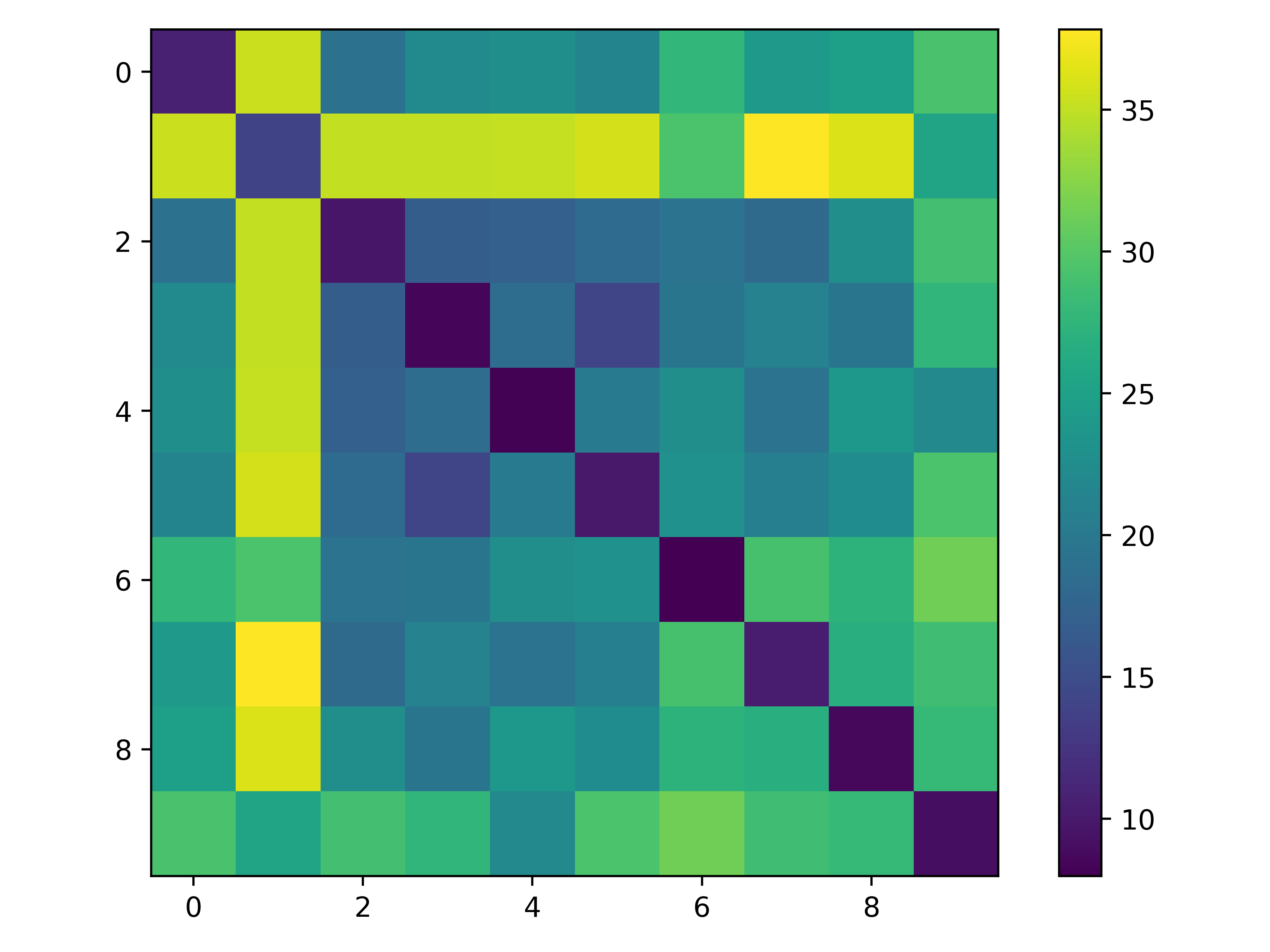}}
    \subfigure[epoch = 120]{\includegraphics[scale=0.22]{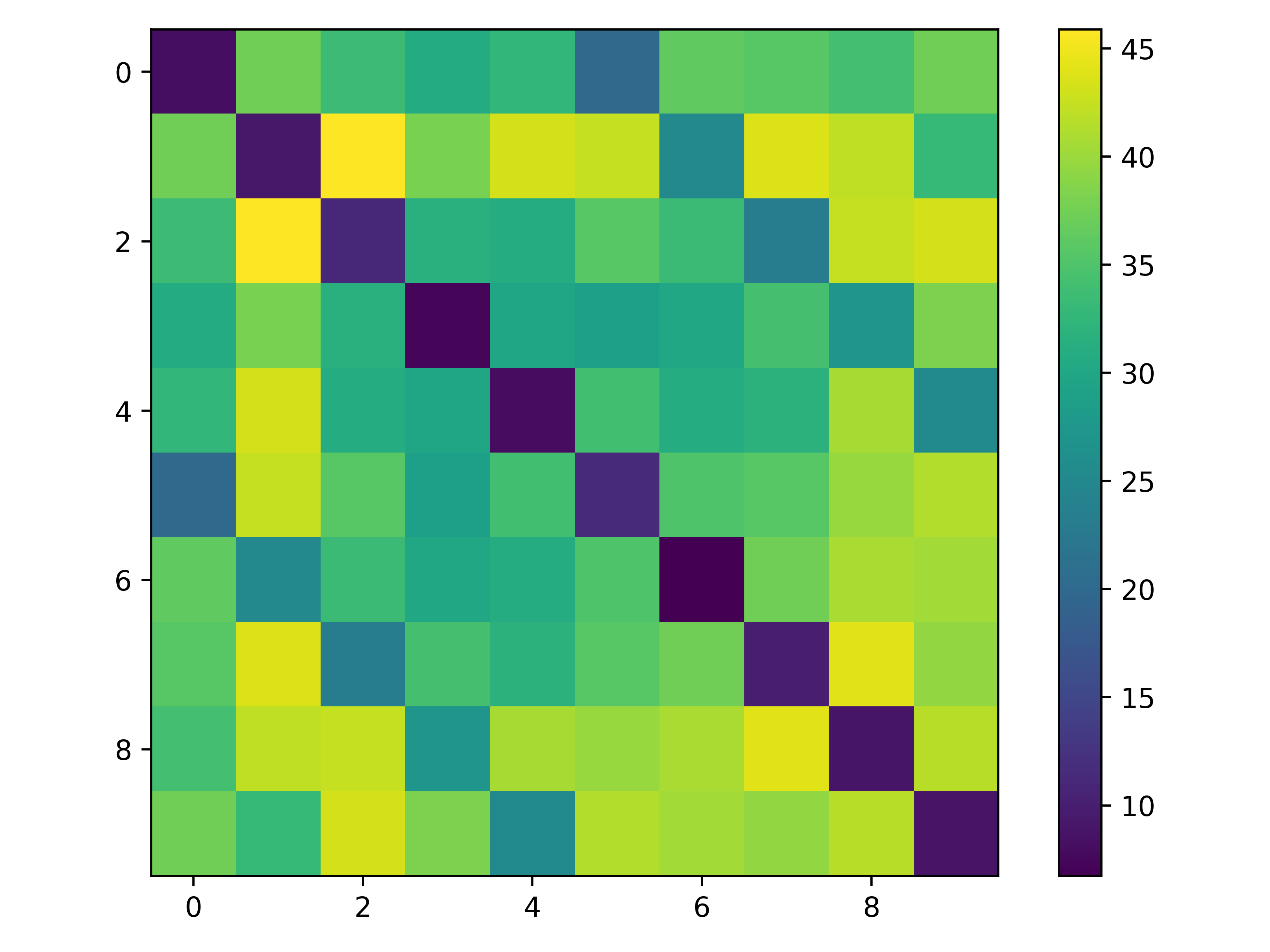}}
    \subfigure[epoch = 240]{\includegraphics[scale=0.22]{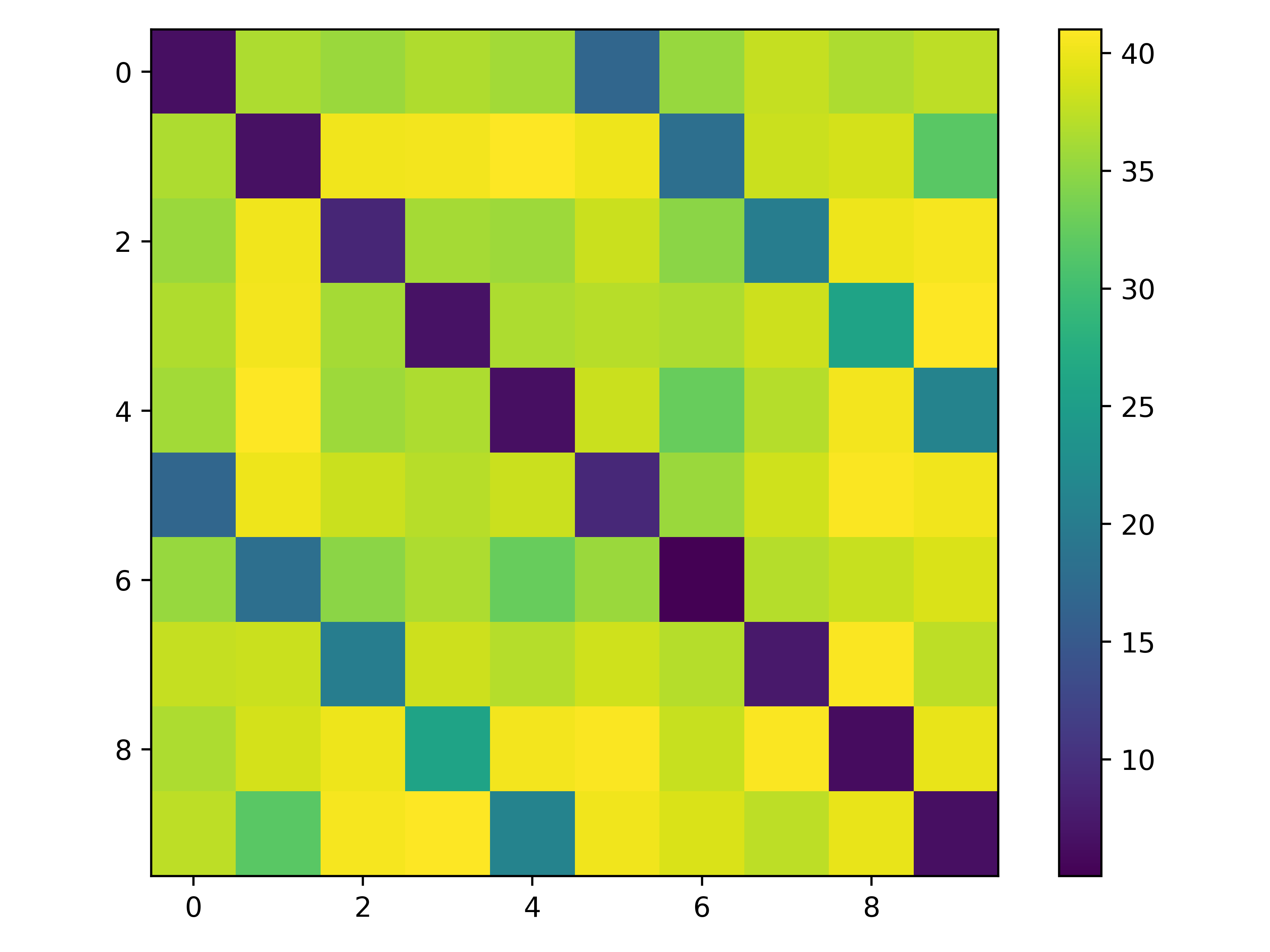}}
    \subfigure[epoch = 350]{\includegraphics[scale=0.22]{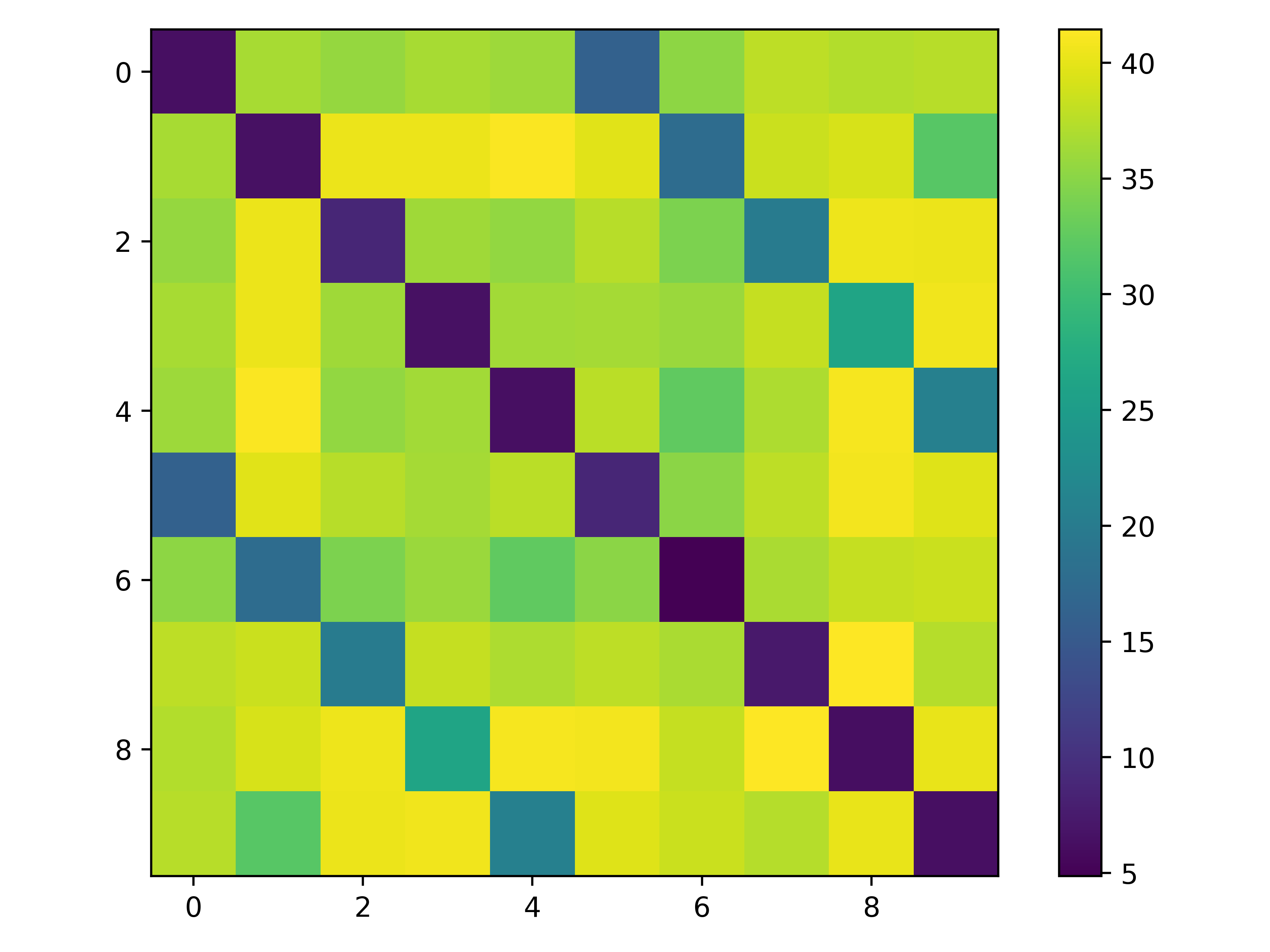}}
    \caption{The heatmap of class distance matrix.} 
    \label{fig:distance-result-group-2}
\end{figure}


\subsection{Visualization}\label{sec:visualization}

In this subsection, we take a closer look at the last-layer representations of the model at the end of training by reducing the dimensionality of the last-layer representations to $2$ through t-SNE \citep{t-sne} and visualize them. Specifically, we visualize each super-class separately, but color the samples whose original labels are different with different colors. 

The visualization results are displayed in \Cref{fig:tsne-cifar-5-349-group2}, from which we observe a distinguishable difference between different original classes --- their representations form well-separated clusters in the 2-dimensional space. This suggests that the input distribution information, i.e. the original label information, is well preserved in the last-layer representations. 

\begin{figure*}[htbp]
    \centering
    \includegraphics[width=\linewidth]{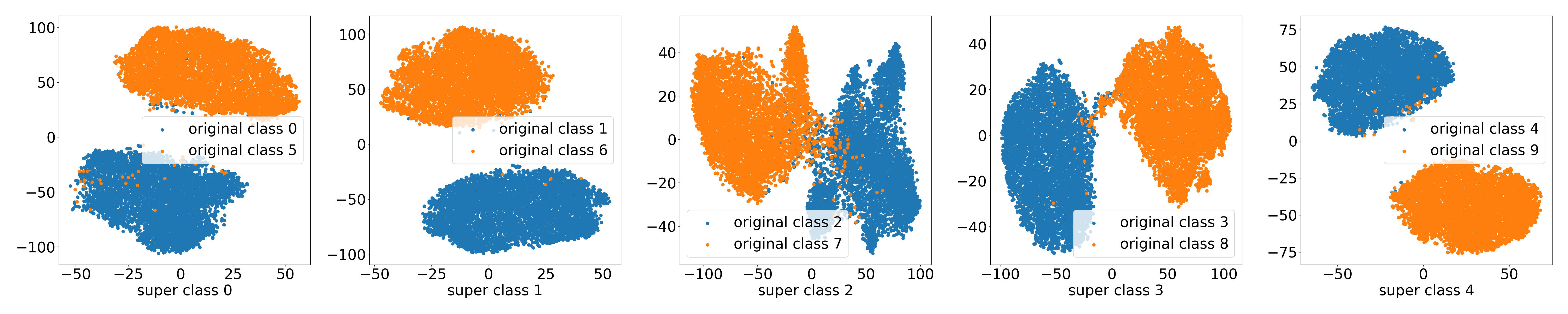}
    \caption{The t-SNE visualization of the last-layer representations of ResNet-18 trained on Coarse CIFAR-10. Each plot corresponds to a super-class.}
    \label{fig:tsne-cifar-5-349-group2}
\end{figure*}

\paragraph{Training extremely long.} In order to explore if the fine-grained structures are still preserved even after a extremely long time of training, we further train the model with 1,000 epochs. The heatmap of the distance matrix is presented in \Cref{fig:cifar-10-999-distance}. We also produce the t-SNE visualizations, but only include the result of the first super-class in \Cref{fig:tsne-cifar-10-999-1} due to space limitation.

\begin{figure}[htbp]
\centering
\begin{minipage}{0.48\linewidth}
\centering
    \includegraphics[scale=0.20]{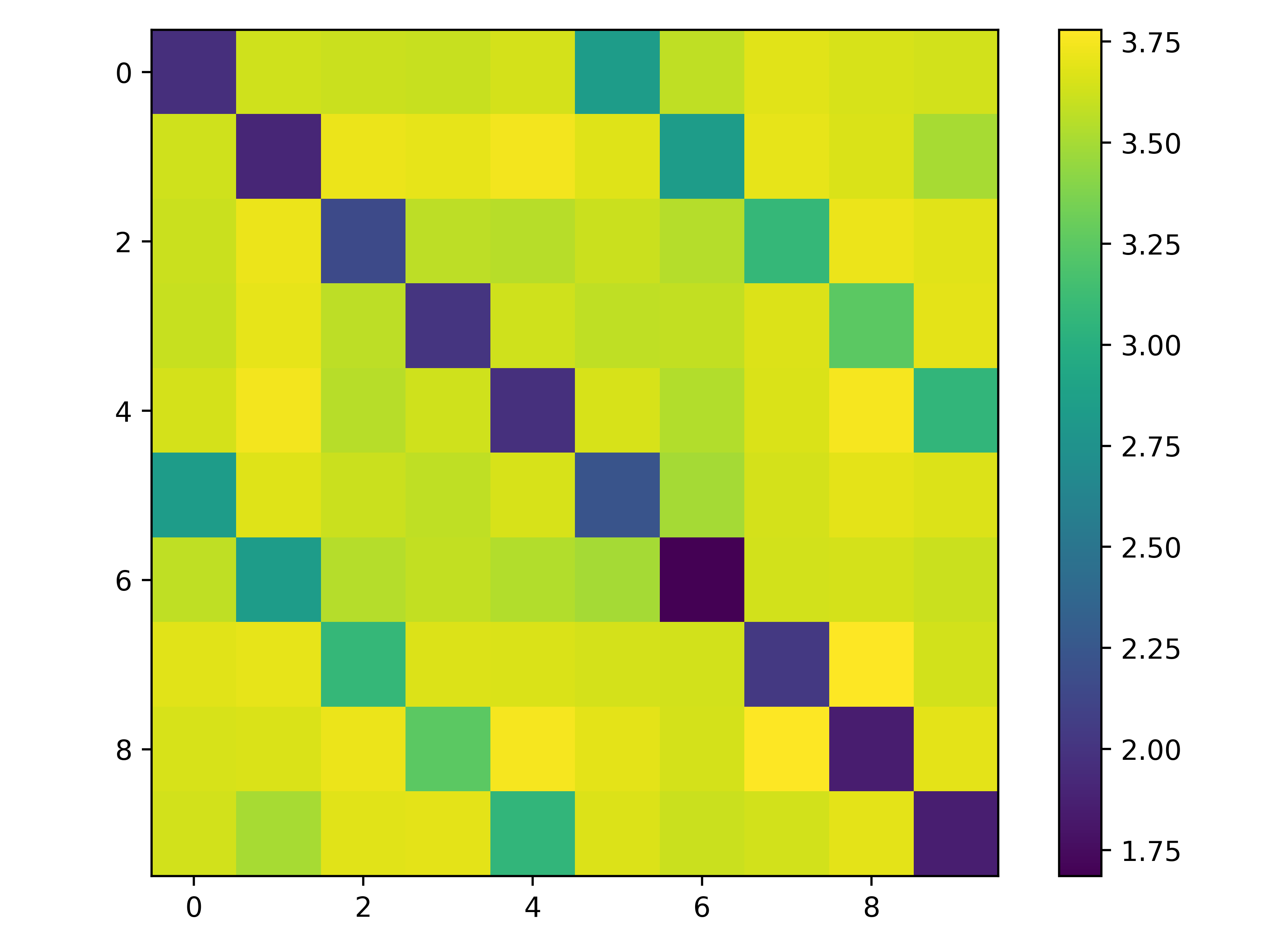}
    \caption{The heatmap of distance matrix of ResNet-18 trained on Coarse CIFAR-10 for 1,000 epochs.}\label{fig:cifar-10-999-distance}
\end{minipage}
\hfill
\begin{minipage}{0.48\linewidth}
\centering
    \includegraphics[scale=0.15]{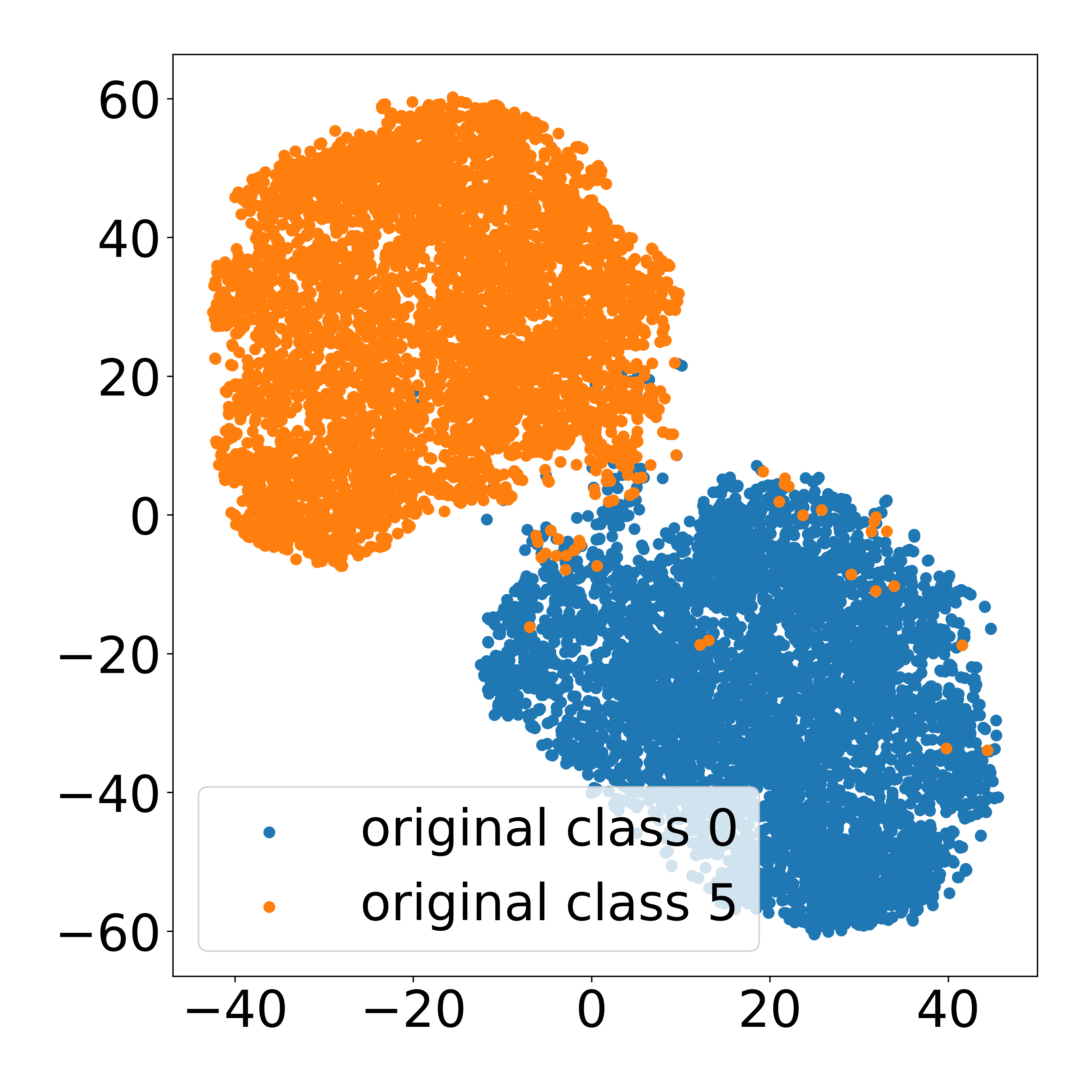}
    \caption{The t-SNE visualization of the last-layer representations of the first super-class of ResNet-18 trained on Coarse CIFAR-10 for 1,000 epochs.}\label{fig:tsne-cifar-10-999-1}
\end{minipage}
\end{figure}

\subsection{Learning CIFAR-10 from 5 Coarse Labels}

As the results in \Cref{sec:visualization} suggest, even after the training accuracy has reached $100\%$ for a long time, the samples within each super-class still exhibit a clear structure per their original class, and those structures act as clusters after reducing dimensionality. Inspired by this observation, we perform a Cluster-and-Linear-Probe (CLP) test to quantify to what extent the original class information is preserved in the last-layer representations. In CLP, we use the representations learned on Coarse CIFAR-10 to reconstruct $10$ labels and run a linear probe on these representations using the reconstructed labels. Specifically, we first use t-SNE to reduce the dimensionality to $2$ and then use $k$-means to find $2$ clusters in the dimensionality-reduced representations within each super-class. We use the clusters as reconstructed labels to do a linear probe. In linear probe, we train a linear classifier on top of the previously learned representations on the training set with reconstructed labels and evaluate the learned linear classifier on the original test set.  Notice that because we do not know the mapping of reconstructed classes to true original classes, we permute each possible mapping and report the highest performance. We also train a linear probe with original training labels as a comparison. The result is shown in \Cref{fig:cluster-group2}.

The performance of CLP on the original test set is comparable to linear probe trained on true original labels or even to models originally trained on CIFAR-10. Notice that the representation $H$ is obtained from the model trained with coarse labels, and the label reconstruction only uses information of $H$ and the number of original classes. This means we can achieve very high performance on the original test set even if we only have access to coarse labels. This result further confirms that the input distribution plays an important role in the last-layer representations.

\begin{figure}[htbp]
    \centering
    \centering
    \includegraphics[width=\linewidth]{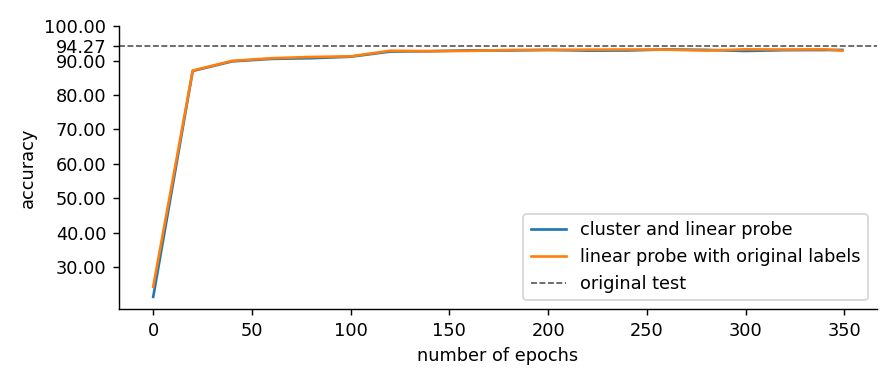}
    \caption{The CLP result. ``original test'' is the highest test set accuracy achieved by ResNet18 trained on original CIFAR-10 with the same training hyper-parameters.}
    \label{fig:cluster-group2}
\end{figure}




\section{How Does Semantic Similarity Affect the Fine-Grained Structure?}\label{sec:semantic-similarity}

In our experiments with Coarse CIFAR-10, each coarse label is obtained by combining two classes regardless of the semantics. The fact that the neural network can separate the two classes in its representation space implies that the network recognizes these two classes as semantically different (even though they are given the same coarse label).
In this section, we explore the following question: If the sub-classes in a super-class have semantic similarity, will the representations still exhibit a fine-grained structure to distinguish them? 
Intuitively, if the coarse label provided is ``natural'' and consists of semantically similar sub-classes, it is possible that the neural network will not distinguish between them and just produce truly collapsed representations.
\begin{figure}[htbp]
    \begin{minipage}{0.45\linewidth}
    \centering
    \includegraphics[scale=0.11]{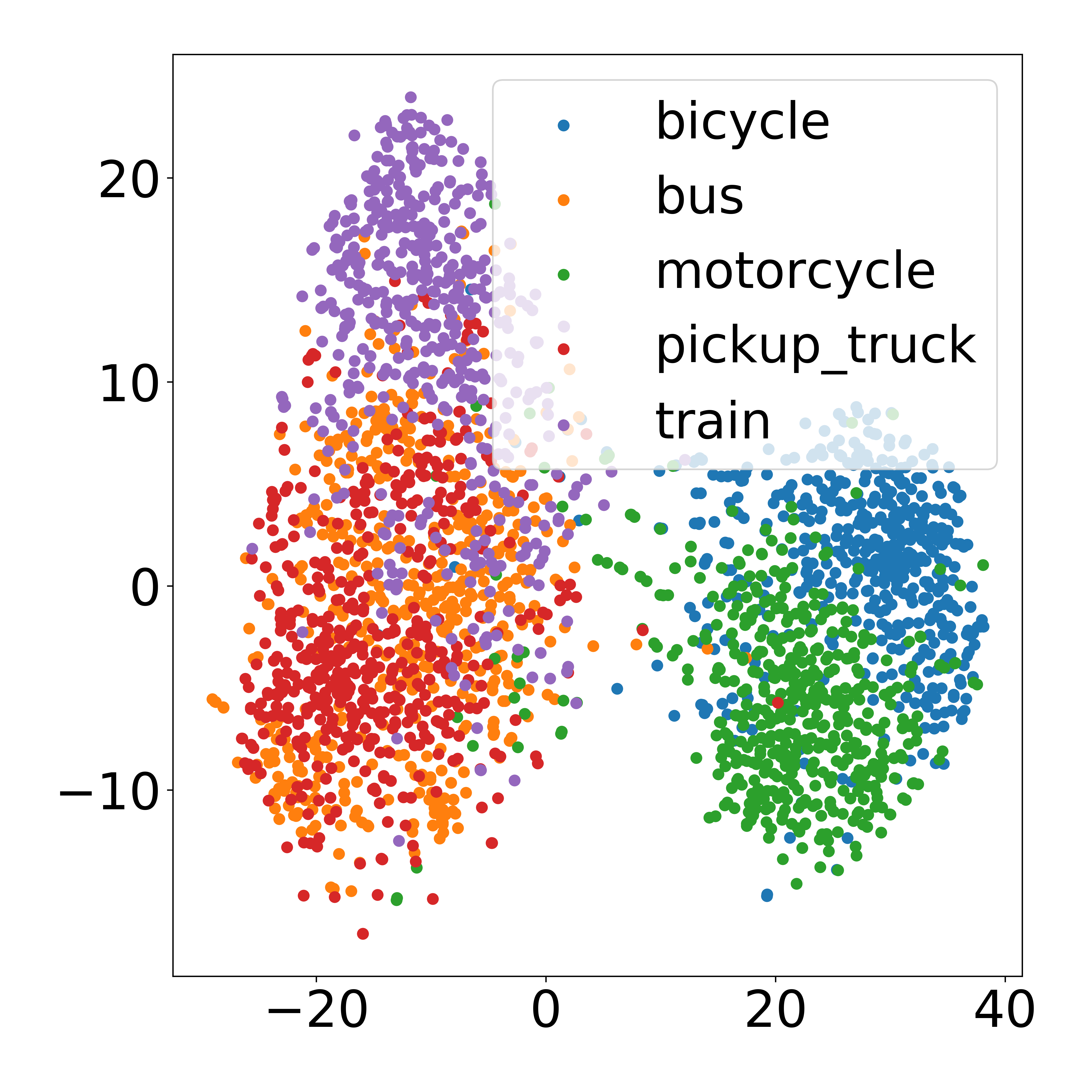}
    \caption{The t-SNE visualization of the last-layer representations of super-class ``vehicles 1'' of ResNet-18 trained on CIFAR-100 with original super-classes.}
    \label{fig:tsne-cifar-100-sup-vehicles}
    \end{minipage}
\hfill
    \begin{minipage}{0.45\linewidth}
    \centering
    \includegraphics[scale=0.11]{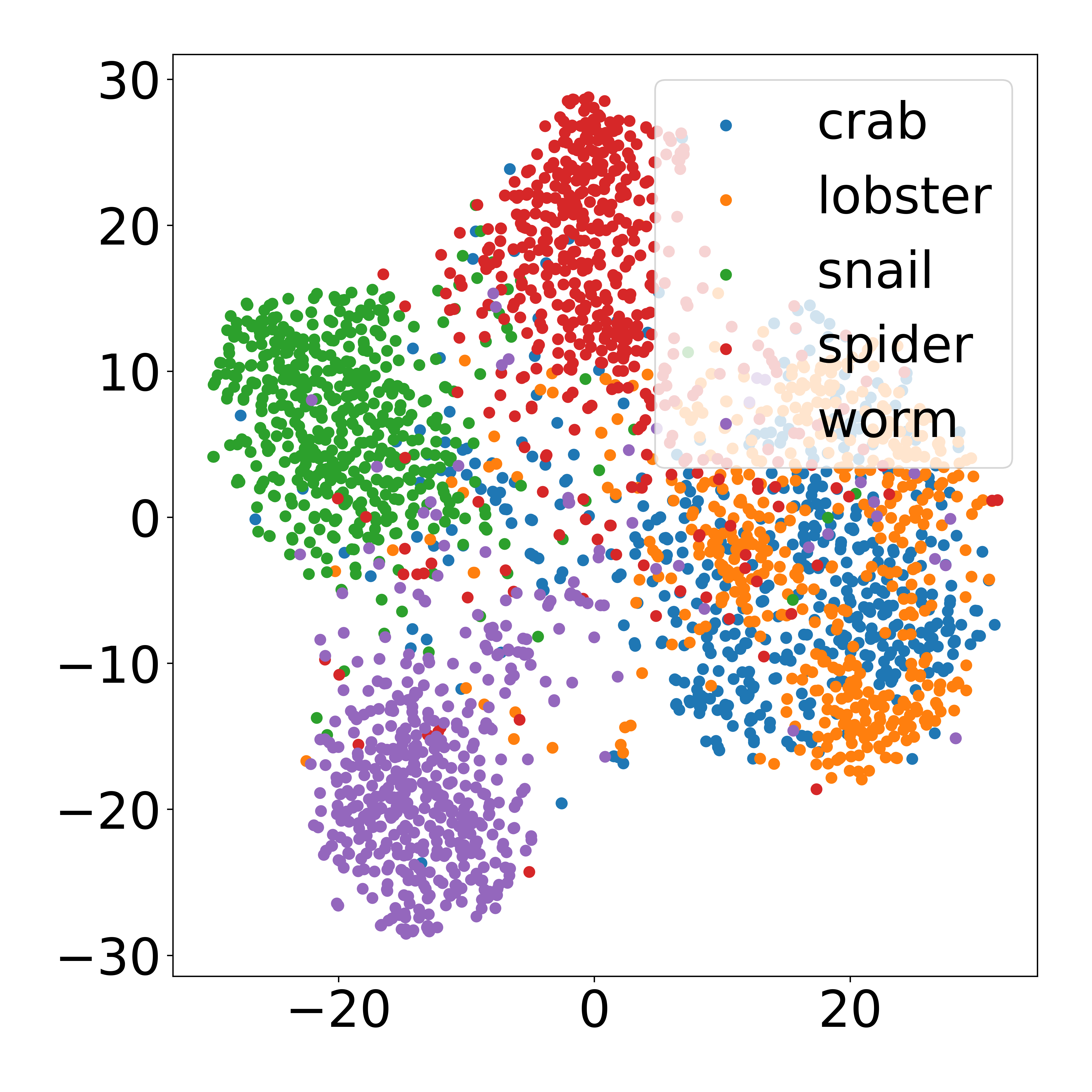}
    \caption{The t-SNE visualization of the last-layer representations of super-class ``non-insect invertebrates'' of ResNet-18 trained on CIFAR-100 with original super-classes.}
    \label{fig:tsne-cifar-100-sup-animals}
    \end{minipage}
\end{figure}

\begin{figure}[htbp]
    \begin{minipage}{0.45\linewidth}
    \centering
    \includegraphics[scale=0.11]{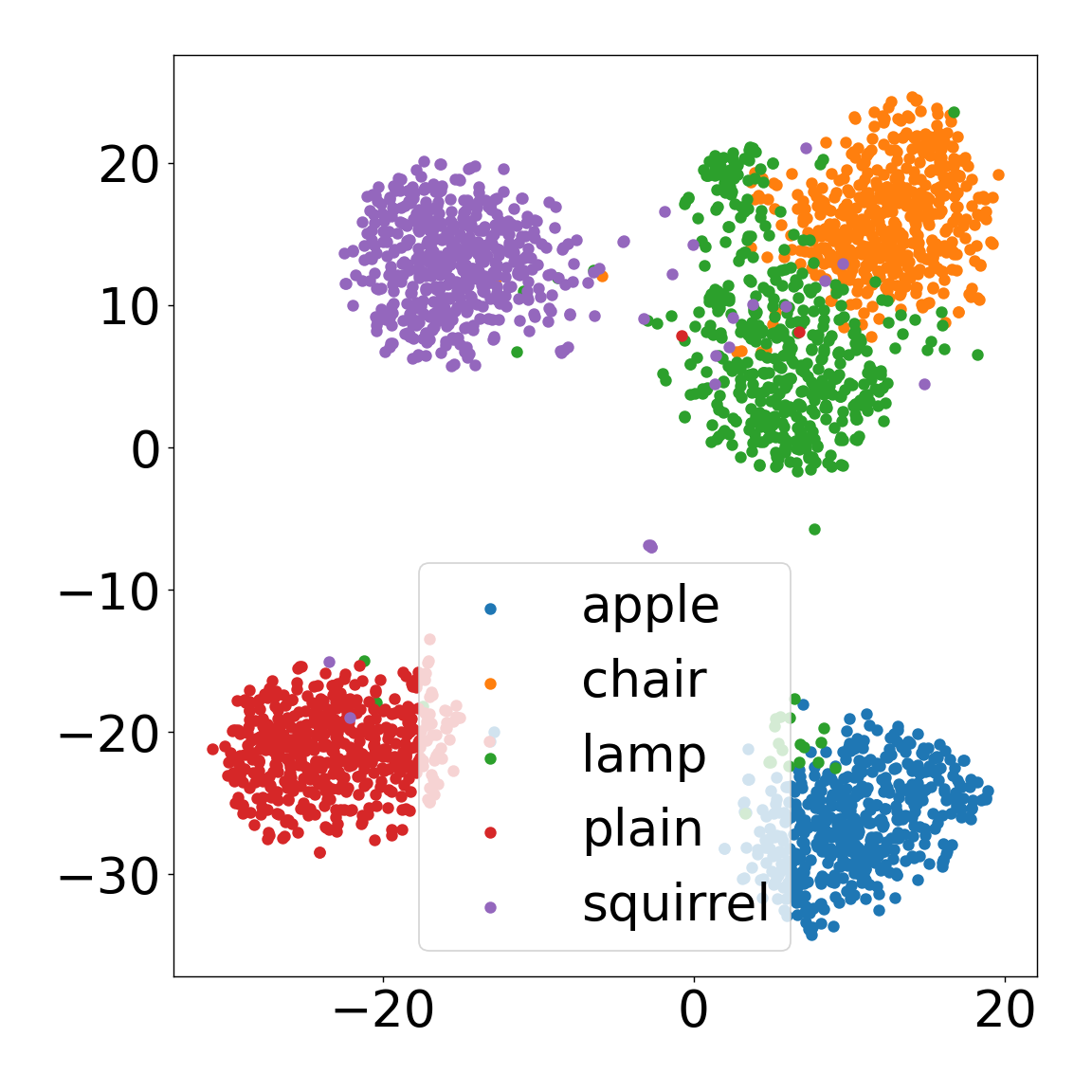}
    \caption{The t-SNE visualization of the last-layer representations of super-class 1 of ResNet-18 trained on CIFAR-100 with random super-classes.}
    \label{fig:tsne-cifar-100-rand-0}
    \end{minipage}
\hfill
    \begin{minipage}{0.45\linewidth}
    \centering
    \includegraphics[scale=0.11]{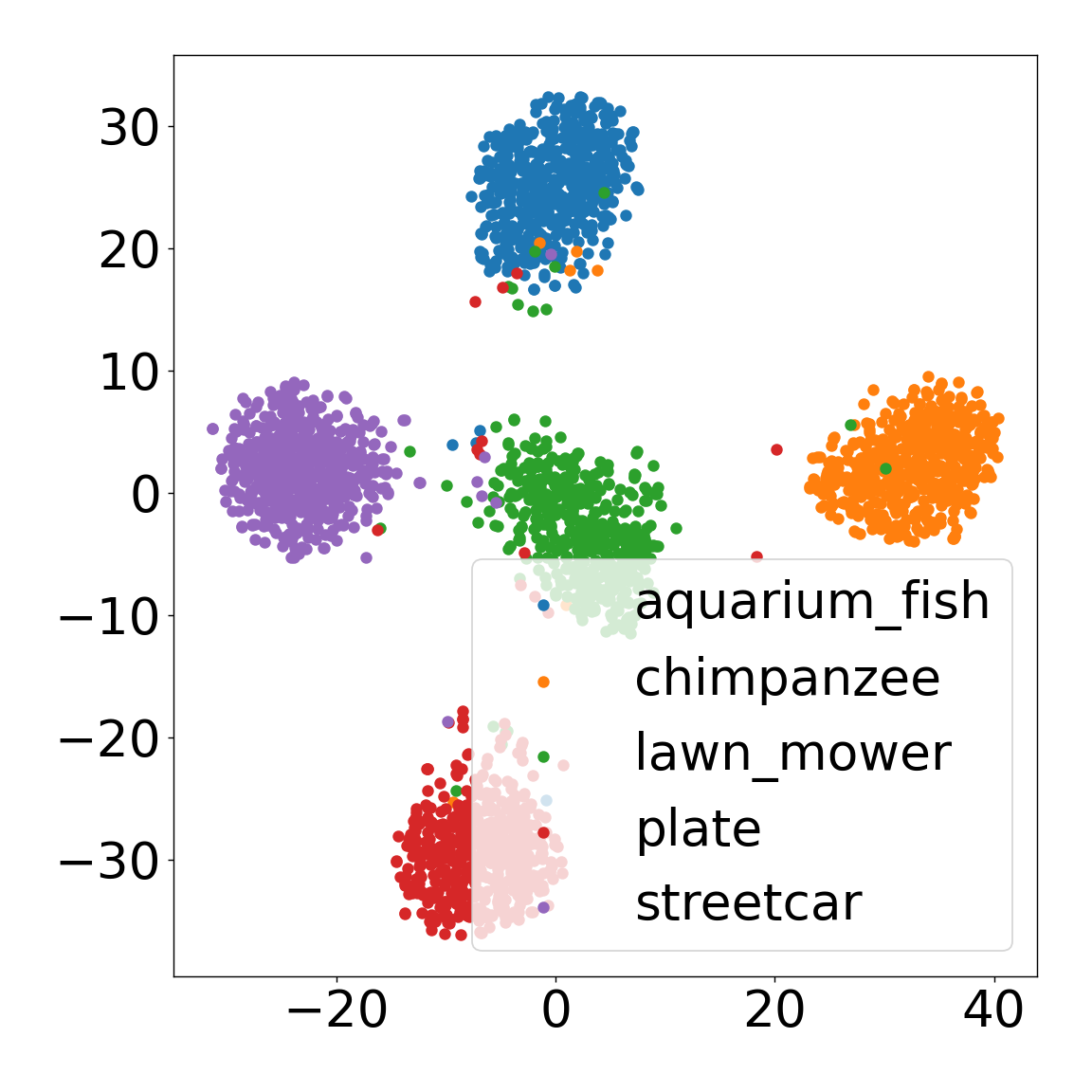}
    \caption{The t-SNE visualization of the last-layer representations of super-class 2 of ResNet-18 trained on CIFAR-100 with random super-classes.}
    \label{fig:tsne-cifar-100-rand-1}
    \end{minipage}
\end{figure}

We take a step towards answering this question by looking at ResNet-18 trained on CIFAR-100 using the official 20 super-classes (each super-class contains 5 sub-classes) as labels. Unlike randomly merging classes as we did in \Cref{sec:experiment-on-coarse-cifar-10}, the official super-classes of CIFAR-100 are natural, merging classes with similar semantics (for example, ``beaver'' and ``dolphin'' both belong to ``aquatic mammals''). This offers a perfect testbed for our question.

We find that ResNet-18 indeed is not able to distinguish all sub-classes in its representation space, but can still produce separable representations if some sub-classes within a super-class are sufficiently different. Interestingly, the notion of semantic similarity of ResNet-18 turns out to agree well with that of humans.
\Cref{fig:tsne-cifar-100-sup-animals,fig:tsne-cifar-100-sup-vehicles} show the t-SNE visualizations of representations from two super-classes.
From the visualizations, although there are not as clear clusters as for Coarse CIFAR-10, the representations do exhibit visible separations between certain sub-classes.
In \Cref{fig:tsne-cifar-100-sup-vehicles}, ``bicycles'' and ``motorcycles'' are entangled together, while they are separated from ``bus'', ``pickup truck'', and ``train'', which is human-interpretable.
In \Cref{fig:tsne-cifar-100-sup-animals}, ``crab'' and ``lobster'' are mixed together, which are both aquatic and belong to malacostraca, while the other three are not and have more differentiative representations.

In comparison, when the CIFAR-100 classes are randomly merged into 20 super-classes, we find that the sub-class representations are much better separated (\Cref{fig:tsne-cifar-100-rand-0,fig:tsne-cifar-100-rand-1}). This is because randomly merged super-classes no longer have semantic similarity in their sub-classes.

These results confirm the intuition that the fine-grained structure in last-layer representations is affected by, or even based on, the semantic similarity between the inputs.

\section{Fine-Grained Representation Structure on Fine CIFAR-10}\label{sec:a-refined-dataset}

In this section, we consider a finely-labeled dataset. We construct a fine version of CIFAR-10 with the process described in \Cref{sec:experiment-setup}, and call it Fine CIFAR-10.
\Cref{fig:distance-result-group-1-fine} presents the class distance matrices, arranged by the number of training epochs. As before, we only provide the results for a specific training hyper-parameter setting here and defer the full results to \cref{sec:complete-cifar-20}.

It can be observed that at the early stage of training, there are three dark lines, which indicates the last-layer representations are converging towards $10$ clusters instead of $20$. At the end of training, this $10$-class relationship is still preserved, although with a lighter color. Therefore, both the input distribution and the label information still have a strong influence on the representation structure when fine labels are used for training.


\begin{figure}[htbp]
\centering
\subfigure[epoch = 20]{\includegraphics[scale=0.22]{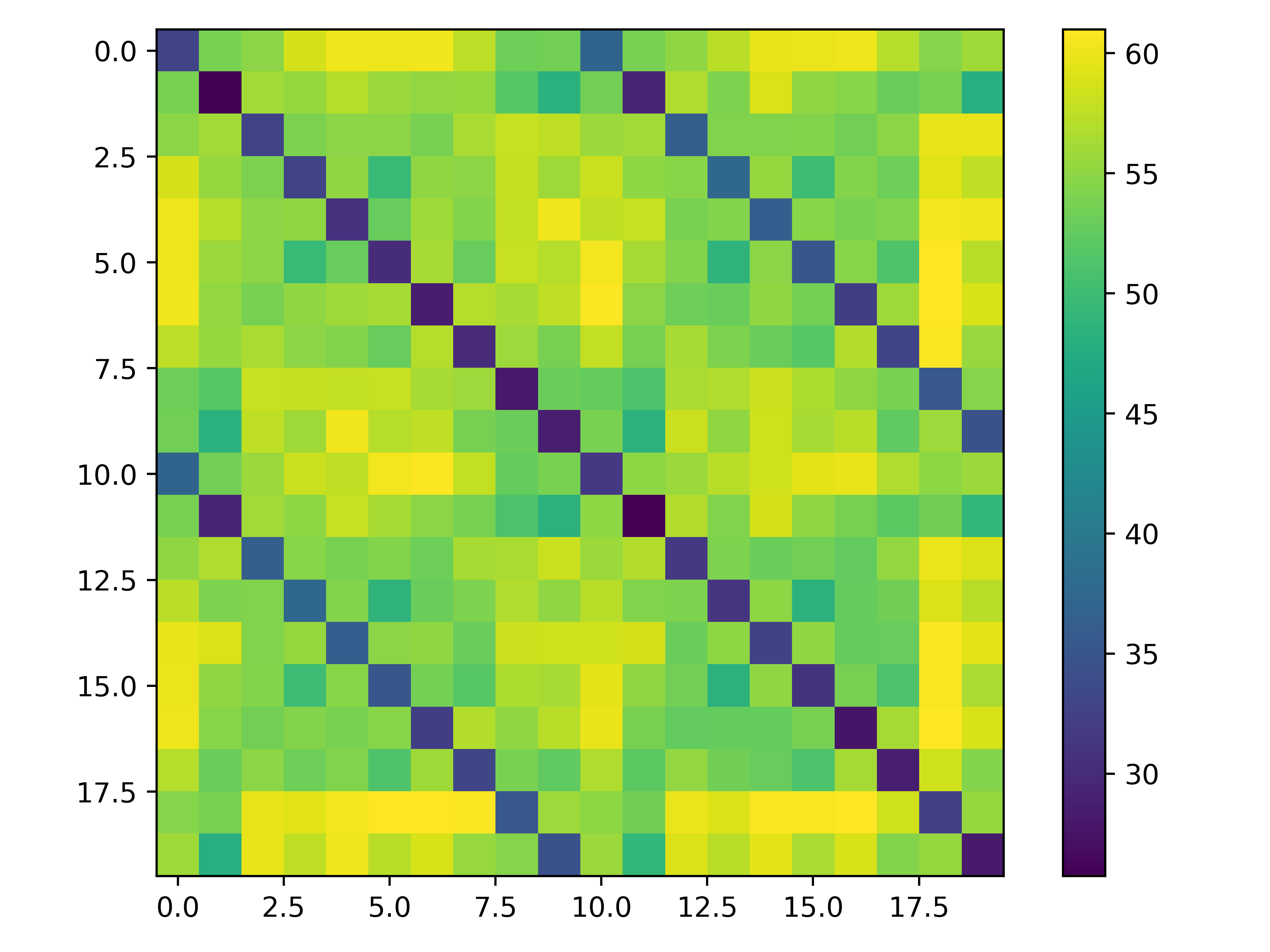}}
    \subfigure[epoch = 120]{\includegraphics[scale=0.22]{figures/distance-cifar-20-2-120.png}}
    \subfigure[epoch = 240]{\includegraphics[scale=0.22]{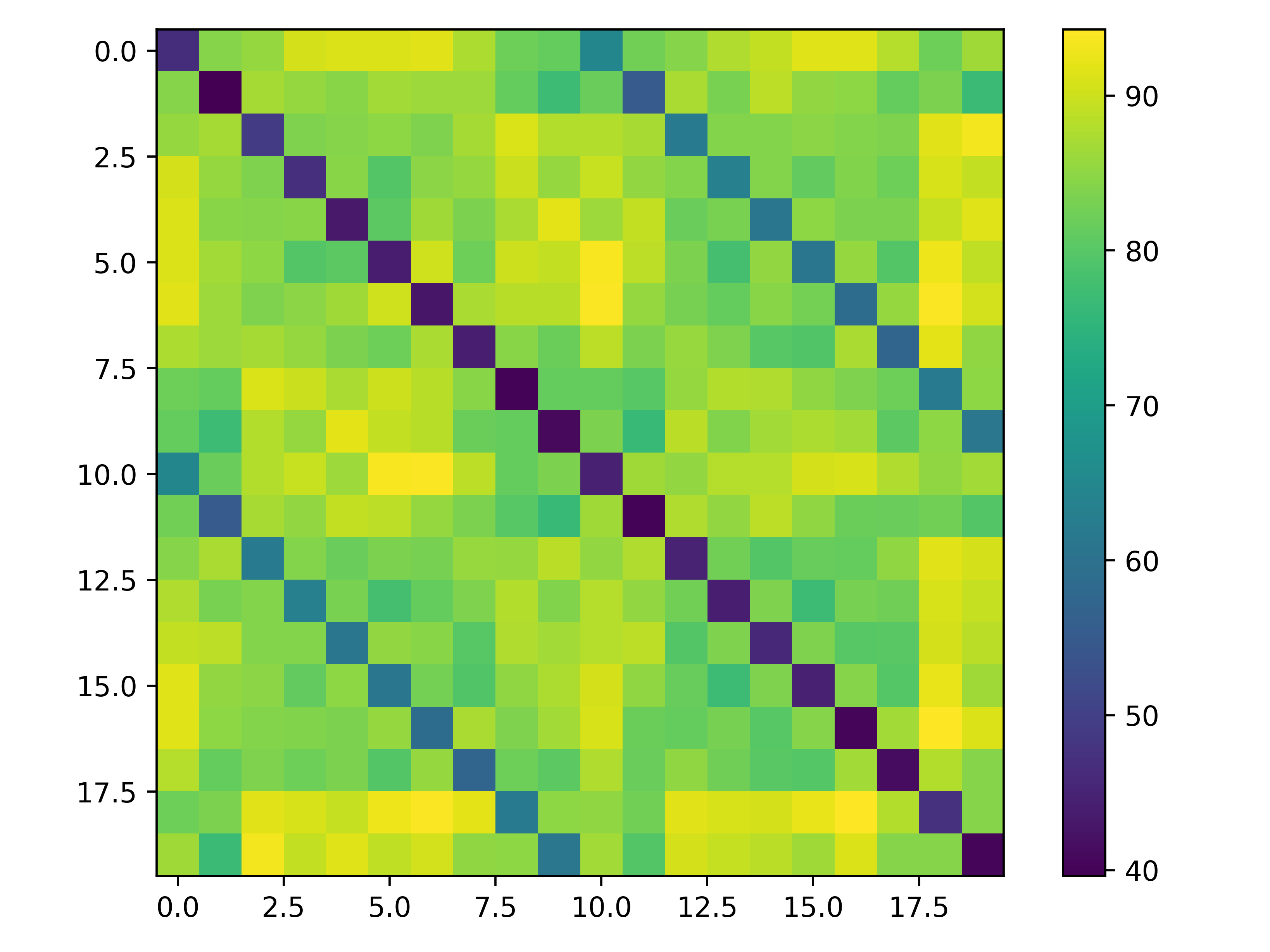}}
    \subfigure[epoch = 350]{\includegraphics[scale=0.22]{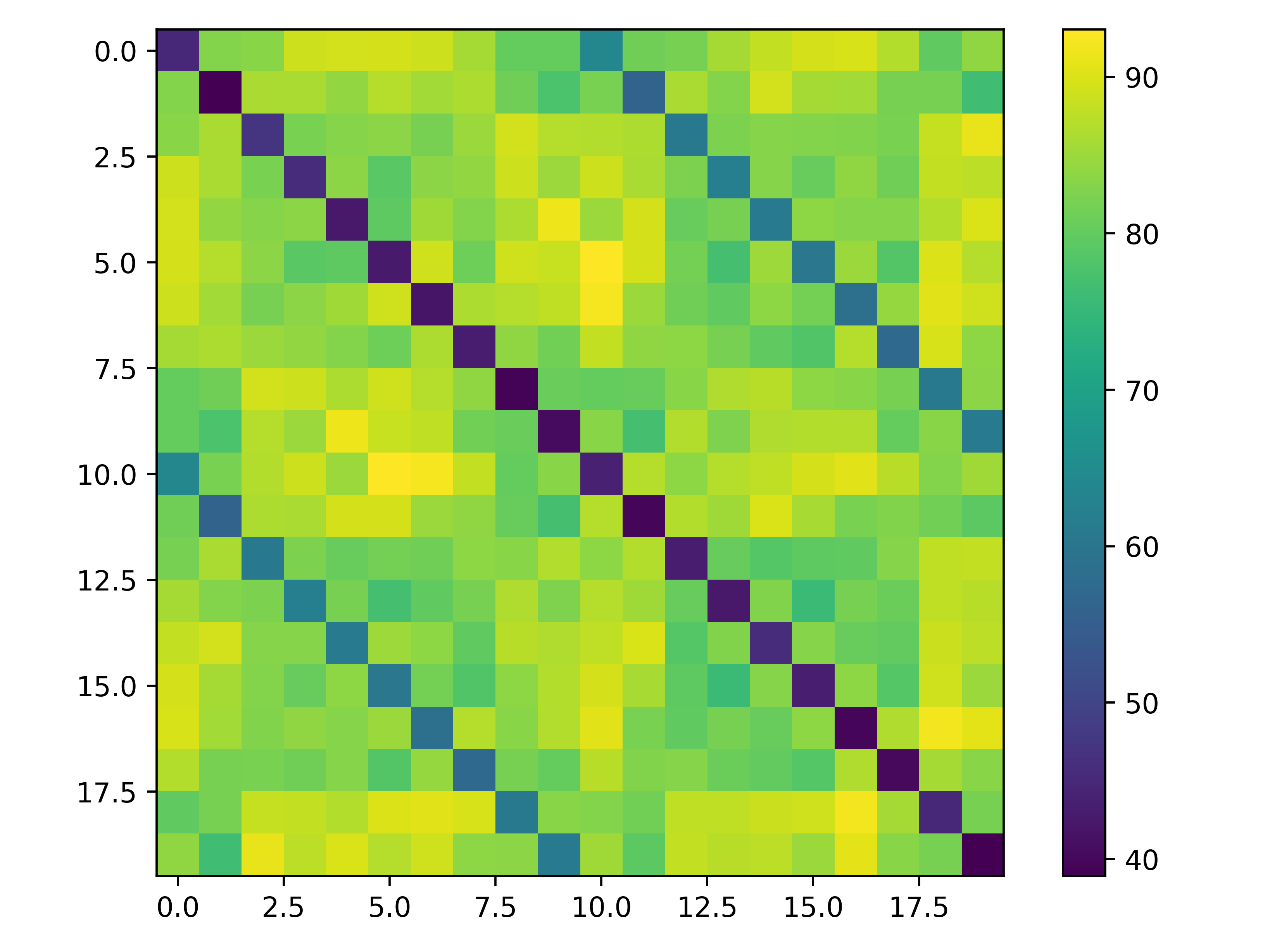}}
    \caption{The heatmap of class distance matrix on Fine CIFAR-10.}
    \label{fig:distance-result-group-1-fine}
\end{figure}

\section{Theoretical Result in a Synthetic Setting} \label{sec:theory}

In this section, we provide a theoretical result to show the fine-grained representation structure for a coarsely labeled dataset, supporting our empirical observations in previous sections.
In particular, we consider a one-hidden-layer neural network trained by gradient descent on Gaussian mixture data. We describe our setting below.

\paragraph{Data generation.}
We consider input data generated from a mixture of 4 separated Gaussian distributions in $\R^d$. We denote the four clusters as $\mathscr C_1$, $\mathscr C_2$, $\mathscr C_3$, and $\mathscr C_4$. We give coarse label $+1$ to inputs from $\mathscr C_1$ and $\mathscr C_2$, and give coarse label $-1$ to inputs from $\mathscr C_3$ and $\mathscr C_4$.
We denote the $n$ samples as $\{(\boldsymbol{x}_i, y_i)\}_{i=1}^n$.
We assume a sample ${\boldsymbol{x}}_i$ where $i \in \mathscr C_p$ ($p \in \{1,2,3,4\}$) is generated according to \begin{equation}{\boldsymbol{x}}_i = {\boldsymbol{\mu}}^{(p)} + {\boldsymbol{\xi}}_i,\end{equation}
where ${\boldsymbol{\xi}}_i \sim \mathcal N(0,\kappa^2{\boldsymbol{I}})$. 
We assume that the means ${\boldsymbol{\mu}}^{(p)}$'s are pairwise orthogonal and 
$\|{\boldsymbol{\mu}}^{(p)}\|_2 = \tau$. For convenience, assume each cluster $\mathscr C_p$ has the same number of samples $|\mathscr C_p| = n/4$. 

\paragraph{Neural network.}
We consider training a one-hidden-layer network with $m$ hidden neurons. The first-layer weight matrix is $W = ({\boldsymbol{w}}_1,{\boldsymbol{w}}_2,\cdots,{\boldsymbol{w}}_m)\in \R^{d\times m}$ and is trained by gradient descent. The second-layer weights are fixed to be all ones (which satisfies the ETF structure predicted by Neural Collapse in this setting). The output of the network is
\begin{equation*}
f({\boldsymbol{x}}, W) = \mathbf 1^\top h({\boldsymbol{x}}, W) = \sum_{r=1}^m \sigma\left({\boldsymbol{w}}_r^\top \boldsymbol x\right),
\end{equation*}
where $h(\boldsymbol x, W) = \left( \sigma({\boldsymbol{w}}_r^\top \boldsymbol x) \right)_{r=1}^m \in \R^m$ is the hidden-layer representation, and
\begin{equation*}
\sigma(z) = \begin{cases}\frac{1}{3}z^3 & \text{if $|z| \leq 1$}\\ z - \frac{2}{3} & \text{if $z \geq 1$} \\ z + \frac{2}{3} & \text{if $z \leq -1$}\end{cases}
\end{equation*}
is the activation function. This activation is a smoothed version of symmetrized ReLU; it and its variants have been adopted in a line of theoretical work (e.g. \citet{allen2020towards,zou2021understanding, shen2022data}).


\paragraph{Loss function and training algorithm.}
We train the network by gradient descent on the training loss
$$L(W) = \frac1n \sum_{i=1}^n \ell( f({\boldsymbol{x}}_i, W), y_i)$$
where $\ell(\hat y, y) = \frac{-y \hat y}{2}$ is the unhinged loss \citep{unhinged}. 
Our result can be generalized to the logistic loss since when $f({\boldsymbol{x}}_i, W)$ is small, the unhinged loss can be viewed as an approximation of logistic loss \citep{shen2022data}.
We initialize the first-layer weights i.i.d. from $\mathcal N(0, \omega^2)$ and update them using gradient descent with learning rate $\eta$.


Our main theorem shows that after training, the hidden-layer representations for $\mathscr C_1$ and $\mathscr C_2$ will form two separate clusters, even though they are given the same label. (Similar result holds for $\mathscr C_3$ and $\mathscr C_4$ by symmetry.) 
In particular, for any three samples $i_1, {i_2} \in \mathscr 
 C_1$ and ${i_3} \in \mathscr C_2$, we show that
\begin{equation*}
\left\|h({\boldsymbol{x}}_{i_1}) - h({\boldsymbol{x}}_{i_2})\right\|_2 \ll \|h({\boldsymbol{x}}_{i_1}) - h({\boldsymbol{x}}_{i_3})\|_2.
\end{equation*}

\begin{theorem} \label{thm:main}
Consider the synthetic setting describe above and let $\mathfrak  c  = \frac{8\kappa \sqrt{d}}{\tau}$.
Suppose that the following conditions hold regarding the Gaussian mean length $\tau$, Gaussian variance $\kappa^2$, weight initialization variance $\omega^2$, input dimension $d$, number of samples $n$, and number of neurons $m$:
\begin{enumerate}
    \item $n^\frac{1}{2}d^{-\frac{1}{4}} \ll \mathfrak  c \ll n^{\frac{1}{2}} d^{-\frac{1}{6}}$; 
    \item $d^{1/3} \gg n$; 
    \item $d^{-\frac{1}{4}}n^{\frac{1}{2}}\mathfrak  c^{-1}  \ll \tau \omega \ll \log^{-\tfrac{1}{2}}(m)$. 
\end{enumerate}
For learning rate $\eta = O \left(\min\left\{\mathfrak  c^3 \tau^{-4}, \mathfrak  c^2 \omega \tau^{-3}, \mathfrak  c^2 \omega \tau\right\}\right)$ and number of iterations $T = \Theta(\frac{1}{\eta \omega \tau^3})$, with high probability, the hidden-layer representation map $h(\boldsymbol{x}) = h(\boldsymbol{x}, W(T))$ satisfies that for all $i_1, {i_2} \in \mathscr 
 C_1$ and ${i_3} \in \mathscr C_2$, we have
\begin{equation*}
\left\|h({\boldsymbol{x}}_{i_1}) - h({\boldsymbol{x}}_{i_2})\right\|_2 \ll \|h({\boldsymbol{x}}_{i_1}) - h({\boldsymbol{x}}_{i_3})\|_2.
\end{equation*}
\end{theorem}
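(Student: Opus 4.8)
The plan is to track the gradient-descent trajectory of each neuron $\boldsymbol{w}_r$ and show that, at the stopping time $T$, the hidden-layer feature map is, up to a small error, a smooth function of the four quantities $\boldsymbol{w}_r^\top \boldsymbol{x}_i \approx \boldsymbol{w}_r^\top \boldsymbol{\mu}^{(p)}$ for $i \in \mathscr{C}_p$, so that two samples in the same cluster get nearly identical representations while two samples in different clusters get representations that differ by a quantity controlled from below by $\tau$. Concretely, I would first decompose $\boldsymbol{w}_r^\top \boldsymbol{x}_i = \sum_q \langle \boldsymbol{w}_r, \hat{\boldsymbol{\mu}}^{(q)}\rangle \langle \hat{\boldsymbol{\mu}}^{(q)}, \boldsymbol{\mu}^{(p)}\rangle + \boldsymbol{w}_r^\top \boldsymbol{\xi}_i$ using the orthonormal frame $\hat{\boldsymbol{\mu}}^{(q)} = \boldsymbol{\mu}^{(q)}/\tau$; since the $\boldsymbol{\xi}_i$ are $\mathcal{N}(0,\kappa^2 \boldsymbol{I})$ and $\boldsymbol{w}_r$ stays near its initialization scale $\omega\sqrt{d}$, a concentration argument bounds the "noise" term $|\boldsymbol{w}_r^\top\boldsymbol{\xi}_i|$ by roughly $\omega\kappa\sqrt{d}$ uniformly over $i,r$ with high probability, which — after normalizing by $\tau$ — is the source of the parameter $\mathfrak{c} = 8\kappa\sqrt{d}/\tau$ in the hypotheses.

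Next I would analyze the dynamics of the "signal" coordinates $a_{r,q}(t) := \langle \boldsymbol{w}_r(t), \hat{\boldsymbol{\mu}}^{(q)}\rangle$. Writing out the gradient of $L$ with the unhinged loss $\ell(\hat y, y) = -y\hat y/2$, the update for $\boldsymbol{w}_r$ is $\boldsymbol{w}_r(t+1) = \boldsymbol{w}_r(t) + \frac{\eta}{2n}\sum_i y_i \sigma'(\boldsymbol{w}_r(t)^\top \boldsymbol{x}_i)\boldsymbol{x}_i$, which projected onto $\hat{\boldsymbol{\mu}}^{(q)}$ becomes a near-autonomous recursion for $a_{r,q}$ driven by $\sigma'$ evaluated at the current activations (plus noise cross-terms that are lower order by the concentration above). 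In the regime of the hypotheses the relevant activations are small ($|z|\le 1$), so $\sigma'(z) = z^2$, giving an approximately cubic growth $a_{r,q}(t+1) \approx a_{r,q}(t) + c\,\eta\tau^2\, a_{r,q}(t)^2 \cdot (\text{sign factor from labels})$; I would show that the coordinates aligned with $\boldsymbol{\mu}^{(1)},\boldsymbol{\mu}^{(2)}$ (which both carry label $+1$) grow while those aligned with $\boldsymbol{\mu}^{(3)},\boldsymbol{\mu}^{(4)}$ decay (or vice versa), and that by time $T = \Theta(1/(\eta\omega\tau^3))$ the aligned coordinates have reached a $\Theta(1)$-scale separation determined by the data, while staying below the activation threshold so that the piecewise form of $\sigma$ does not change. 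The conditions on $\tau\omega$, on $\eta$, and on $d^{1/3}\gg n$ are exactly what is needed to keep the noise and finite-sample fluctuations from overwhelming this deterministic signal growth over $T$ steps.

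Finally I would assemble the bound. For $i_1, i_2 \in \mathscr{C}_1$, $h(\boldsymbol{x}_{i_1})_r - h(\boldsymbol{x}_{i_2})_r = \sigma(\boldsymbol{w}_r^\top \boldsymbol{x}_{i_1}) - \sigma(\boldsymbol{w}_r^\top \boldsymbol{x}_{i_2})$; since the signal parts of $\boldsymbol{w}_r^\top\boldsymbol{x}_{i_1}$ and $\boldsymbol{w}_r^\top\boldsymbol{x}_{i_2}$ coincide (both equal $\tau\, a_{r,1}$ up to the orthogonality of the means), the difference is purely the noise difference $\boldsymbol{w}_r^\top(\boldsymbol{\xi}_{i_1}-\boldsymbol{\xi}_{i_2})$ passed through the $1$-Lipschitz $\sigma$, so $\|h(\boldsymbol{x}_{i_1})-h(\boldsymbol{x}_{i_2})\|_2 \lesssim \sqrt{m}\,\omega\kappa\sqrt{d}$ times the trained scale; whereas for $i_1\in\mathscr{C}_1$, $i_3\in\mathscr{C}_2$ the signal parts differ by the $\Theta(1)$-separated coordinates $a_{r,1}$ vs.\ $a_{r,2}$, which for a constant fraction of neurons $r$ produces a per-coordinate gap of order the trained signal scale $\gg \omega\kappa\sqrt{d}$, so $\|h(\boldsymbol{x}_{i_1})-h(\boldsymbol{x}_{i_3})\|_2$ dominates. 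Taking the ratio and plugging in $\mathfrak{c}\ll n^{1/2}d^{-1/6}$ etc.\ gives the claimed $\ll$.

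The main obstacle I expect is the second paragraph: controlling the coupled nonlinear dynamics of all $a_{r,q}(t)$ simultaneously over the full horizon $T$ while the noise cross-terms $\frac{1}{n}\sum_i (\boldsymbol{w}_r^\top\boldsymbol{\xi}_i)^2 \langle\boldsymbol{\xi}_i,\hat{\boldsymbol{\mu}}^{(q)}\rangle$ and the finite-sample deviations of empirical averages from their population values accumulate. Keeping the activations inside the $|z|\le1$ branch of $\sigma$ for the entire trajectory (so that the recursion stays genuinely cubic and the stopping time $T$ is well-defined) requires a careful inductive argument with a high-probability invariant, and getting the three-sided scaling window in hypothesis 1 and the $\eta$ bound to be simultaneously satisfiable is where the bookkeeping is delicate.
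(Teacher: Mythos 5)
Your overall skeleton matches the paper's proof (\cref{sec:the-proof-of-theory}): project $\boldsymbol{w}_r(t)$ onto the cluster means and onto the per-sample noise directions, show the signal projections obey an approximately quadratic-growth recursion $\Delta\zeta \approx \tfrac{\eta\tau^2}{8}\zeta^2$ on the cubic branch of $\sigma$, verify the activations stay in $|z|\le 1$, and assemble the ratio using a constant fraction of ``good'' neurons. However, there is a genuine gap in your first paragraph that undercuts the harder half of the theorem. You bound the noise term $|\boldsymbol{w}_r(T)^\top\boldsymbol{\xi}_i|$ by $\approx \omega\kappa\sqrt{d}$ via concentration, ``since $\boldsymbol{w}_r$ stays near its initialization scale.'' But $\boldsymbol{w}_r(t)$ is not independent of $\boldsymbol{\xi}_i$: the gradient contains the term $\tfrac{1}{2n}y_i\sigma'(\boldsymbol{w}_r^\top\boldsymbol{x}_i)\boldsymbol{\xi}_i$, so the projection $\phi_i(t)=\boldsymbol{w}_r(t)^\top\boldsymbol{\xi}_i$ is itself a dynamical quantity that grows at rate $\approx \tfrac{\eta\kappa^2 d}{4n}\sigma'(\cdot)$ per step (driven by $\|\boldsymbol{\xi}_i\|^2\approx\kappa^2 d$; this is \cref{lem:w_on_xi} in the paper). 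Whether the within-cluster representations collapse hinges entirely on comparing this memorization rate $\kappa^2 d/(4n)$ against the signal rate $\tau^2/8$, i.e.\ on $\mathfrak{c}^2/n$ being small, which is precisely the role of the upper bound in your hypothesis 1 and of the inductive invariant $|\phi_i(t)|\le \tfrac{C^{(\phi)}\mathfrak{c}^2}{n}|\zeta(t)|+\tfrac{\mathfrak{c}}{8}|\zeta(0)|$ maintained in \cref{lem:bound-phi,cor:main}. A static union bound cannot substitute for this coupled induction; your closing paragraph gestures at noise cross-terms contaminating the signal coordinates, but the dominant danger is the self-term in the noise coordinate itself, which you never address.

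A second, smaller gap: you assert that $a_{r,1}$ and $a_{r,2}$ end up ``$\Theta(1)$-separated'' for a constant fraction of neurons, but since $\mathscr{C}_1$ and $\mathscr{C}_2$ carry the \emph{same} label and $\sigma'\ge 0$, both coordinates receive the same one-sided drift; a purely deterministic argument gives no separation between them. The separation in the paper comes from the randomness of initialization: the drift amplifies a positive $\zeta^{(1,r)}(0)\gtrsim\omega\tau$ while leaving a negative $\zeta^{(2,r)}(0)$ bounded near zero, and by anti-concentration plus a binomial bound at least $m/8$ neurons have this sign pattern. Your sketch of ``label $+1$ coordinates grow, label $-1$ coordinates decay'' describes inter-label separation, which is not what the theorem asserts; you need to make the sign-dependent asymmetry within the $+1$ super-class explicit.
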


An example set of parameters that satisfy the above conditions is:
\begin{align*}
\kappa = 1, \tau = d^{0.52}, \omega = d^{-0.53}, m = \log d, n = d^{0.32} .
\end{align*}

The proof of \Cref{thm:main} is given in \cref{sec:the-proof-of-theory}. The main step is to prove that after training, the neurons $\boldsymbol w_r$ will be better correlated with the class means $\boldsymbol{\mu}^{(p)}$ than with the individual sample noise $\boldsymbol{\xi}_i$. Therefore, the network will produce more similar representations for samples from the same cluster than for samples from different clusters.

\paragraph{Empirical verification.}

To verify our theoretical analysis and gain more understanding of the fine-grained structure of last-layer representation we provide some synthetic experiment results under the setting of classifying mixture of Gaussian using a 2-layer MLP in \cref{sec:additional-synthetic}, which is analogous (although not exactly the same) to the setting analyzed in our theory. Notice that this synthetic experiment is not only helpful to verify the theory, but also able to let us perform controlled experiments by varying different characteristics of the data distribution, architecture, and algorithmic components, to let us better understand how those hyper-parameters play a role in the final-layer representation, and serve as a starting point for a more thorough understanding of the last-layer representation behavior of neural networks.

\section{Discussion}

In this paper, we initiated the study of the role of the intrinsic structure of the input data distribution on the last-layer representations of neural networks, and in particular, how to reconcile it with the Neural Collapse phenomenon, which is only driven by the explicit labels provided in the training procedure. Through a series of experiments, we provide concrete evidence that the representations can exhibit clear fine-grained structure despite their apparent collapse.
While Neural Collapse is an intriguing phenomenon and deserves further studies to understand its cause and consequences, our work calls for more scientific investigations of the structure of neural representations that go beyond Neural Collapse.

We note that the fine-grained representation structure we observed depends on the inductive biases of the network architecture and the training algorithm, and may not appear universally. In our experiments on Coarse CIFAR-10, we observe the fine-grained structure for ResNet and DenseNet, but not for VGG (see \cref{sec:vgg,sec:densenet} for extended results). We also note that certain choices of learning rate and weight-decay rate lead to stronger fine-grained structure than others. We leave a thorough investigation of such subtlety for future work.

\section*{Acknowledgements}

WH would like to thank Ruoqi Shen for helpful discussions and contributions in the early stage of this project.
JS acknowledges support from NSF DMS-2031899.

\bibliography{references}
\bibliographystyle{icml2023}

\appendix
\onecolumn

\section{Proof of \cref{thm:main}}\label{sec:the-proof-of-theory}

Recall that we use unhinged loss $\ell(f({\boldsymbol{x}}),y) = -\frac{yf({\boldsymbol{x}})}{2}$ whose gradient is \begin{equation}\frac{\partial \ell\left(f({\boldsymbol{x}}),y\right)}{\partial f({\boldsymbol{x}})} = \frac{-y }{2},\end{equation}

and our goal is to prove \begin{equation}\left\|h({\boldsymbol{x}}_{i_1}) - h({\boldsymbol{x}}_{i_2})\right\| \ll \|h({\boldsymbol{x}}_{i_1}) - h({\boldsymbol{x}}_{i_3})\|.\end{equation}

\subsection{Concentration Lemmas Used}

We first introduce some concentration lemmas that we will use.

\begin{lemma}[\citet{gaussian-abs-sum-tail-bound}]\label{lem:concentration-sum-abs}
Suppose $\{\psi_k\}_{k=1}^n$ are a set of independent Gaussian variables such that $\psi_k \sim \mathcal N(0, \sigma_k^2)$. Then for any $t > 0$,
\begin{align}\mathbb P\left\{\sum_{k=1}^n |\psi_k| \geq t\right\} \leq \exp\left(-\frac{t^2}{2\sum_{k=1}^n \sigma_k^2} + n \log2\right).\end{align}
In other words, with probability at least $\exp(-n)$, the following inequality holds: \begin{equation}\mathbb \sum_{k=1}^n |\psi_k| \geq 3 \sqrt {n \sum_{i=1}^n \sigma_k^2}. \end{equation}
\end{lemma}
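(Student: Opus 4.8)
The plan is to prove this as a standard Chernoff-type tail bound for a sum of half-normal random variables. First I would bound the moment generating function of a single $|\psi_k|$: for any $\lambda > 0$, since $e^{\lambda |\psi_k|} \le e^{\lambda \psi_k} + e^{-\lambda \psi_k}$ pointwise and $\psi_k \sim \mathcal N(0,\sigma_k^2)$ is symmetric, taking expectations and using the Gaussian MGF gives $\mathbb E\!\left[e^{\lambda |\psi_k|}\right] \le 2\, \mathbb E\!\left[e^{\lambda \psi_k}\right] = 2 e^{\lambda^2 \sigma_k^2 / 2}$. By independence of the $\psi_k$'s, this tensorizes to $\mathbb E\!\left[\exp\!\big(\lambda \sum_{k=1}^n |\psi_k|\big)\right] \le 2^n \exp\!\big(\tfrac{\lambda^2}{2}\sum_{k=1}^n \sigma_k^2\big)$.

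Next I would apply Markov's inequality to the exponentiated sum: for any $\lambda > 0$ and $t > 0$,
\[
\mathbb P\!\left\{\sum_{k=1}^n |\psi_k| \ge t\right\} \le e^{-\lambda t}\, \mathbb E\!\left[\exp\!\Big(\lambda \textstyle\sum_{k=1}^n |\psi_k|\Big)\right] \le \exp\!\Big(-\lambda t + \tfrac{\lambda^2}{2}\textstyle\sum_{k=1}^n \sigma_k^2 + n\log 2\Big).
\]
Optimizing the quadratic-in-$\lambda$ exponent by taking $\lambda = t / \sum_{k=1}^n \sigma_k^2$ yields exactly $\mathbb P\{\sum_k |\psi_k| \ge t\} \le \exp\!\big(-t^2 / (2\sum_k \sigma_k^2) + n\log 2\big)$, which is the first claimed inequality.

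For the second (``in other words'') statement I would simply substitute $t = 3\sqrt{n \sum_{k=1}^n \sigma_k^2}$ into the tail bound: the exponent becomes $-t^2/(2\sum_k \sigma_k^2) + n\log 2 = -\tfrac{9}{2} n + n \log 2 = -\big(\tfrac{9}{2} - \log 2\big) n \le -n$, since $\tfrac{9}{2} - \log 2 > 1$. Hence $\mathbb P\{\sum_k |\psi_k| \ge 3\sqrt{n\sum_k \sigma_k^2}\} \le e^{-n}$; equivalently, with probability at least $1 - e^{-n}$ the reverse inequality $\sum_k |\psi_k| < 3\sqrt{n\sum_k \sigma_k^2}$ holds. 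I would flag here that the phrase ``with probability at least $\exp(-n)$'' in the displayed corollary should read ``with probability at most $\exp(-n)$'' for the event written, or else be restated as the complementary high-probability upper bound, since that is the form actually used downstream.

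There is no genuine obstacle: the argument is a textbook sub-exponential Chernoff bound, and the only mild points of care are the single-variable MGF estimate $\mathbb E[e^{\lambda|\psi_k|}] \le 2 e^{\lambda^2\sigma_k^2/2}$ and the numerical check $\tfrac{9}{2} - \log 2 \ge 1$. Since the lemma is already attributed to \citet{gaussian-abs-sum-tail-bound}, one could alternatively cite it verbatim; I would nonetheless include the short self-contained derivation above so the paper is not dependent on the exact constants in the reference.
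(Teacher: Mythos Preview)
Your proposal is correct. The paper does not give its own proof of this lemma: it simply states it with the citation to \citet{gaussian-abs-sum-tail-bound} and uses it as a black box, so there is no in-paper argument to compare against. Your Chernoff-style derivation via the pointwise bound $e^{\lambda|\psi_k|}\le e^{\lambda\psi_k}+e^{-\lambda\psi_k}$, tensorization, Markov, and optimization over $\lambda$ is the standard route and yields exactly the stated inequality; the numerical check $\tfrac{9}{2}-\log 2>1$ for the ``in other words'' clause is also right. Your flag about the wording of that clause is well taken: as written it should be either ``with probability at most $\exp(-n)$'' for the displayed event, or the complementary high-probability upper bound $\sum_k|\psi_k|\le 3\sqrt{n\sum_k\sigma_k^2}$ with probability at least $1-\exp(-n)$, and indeed the latter is the form the paper actually invokes downstream (e.g.\ in the proofs of \cref{lem:w_on_mu,lem:w_on_xi}).
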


\begin{lemma}[Lemma 4 in \citet{shen2022data}]\label{lem:tail-bound-gaussian-product}
If  ${\boldsymbol{z}}_1 \sim \mathcal N(0,\sigma_1^2 {\boldsymbol{I}})$ and ${\boldsymbol{z}}_2 \sim \mathcal  N\left(0,\sigma_2^2 {\boldsymbol{I}}\right)$ are $d$-dimensional independent Gaussian vectors, then , we have  \begin{equation}
\mathbb P \left\{ \left|{\boldsymbol{z}}_1 ^\top {\boldsymbol{z}}_2\right| \geq 4 \sigma_1 \sigma_2 \sqrt{d \log(2/\delta)} \right\} \leq \delta
\end{equation}
\end{lemma}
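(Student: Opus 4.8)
The plan is to reduce to the unit‑variance case by homogeneity and then establish the tail bound by conditioning on one of the two vectors, turning the problem into a one‑dimensional Gaussian tail estimate plus a $\chi^2$ concentration estimate. Concretely, I would first replace $\boldsymbol z_1$ by $\boldsymbol z_1/\sigma_1$ and $\boldsymbol z_2$ by $\boldsymbol z_2/\sigma_2$; this makes both standard Gaussian vectors and divides $\boldsymbol z_1^\top\boldsymbol z_2$ by $\sigma_1\sigma_2$, so it suffices to prove $\mathbb P\{|\boldsymbol z_1^\top\boldsymbol z_2| \ge 4\sqrt{d\log(2/\delta)}\} \le \delta$ for independent $\boldsymbol z_1,\boldsymbol z_2 \sim \mathcal N(0,\boldsymbol I_d)$.

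The core step is to condition on $\boldsymbol z_1$. Since $\boldsymbol z_2$ is independent of $\boldsymbol z_1$, the scalar $\boldsymbol z_1^\top\boldsymbol z_2$ is conditionally $\mathcal N(0,\|\boldsymbol z_1\|_2^2)$, so the standard sub‑Gaussian tail bound gives $\mathbb P\{|\boldsymbol z_1^\top\boldsymbol z_2|\ge t \mid \boldsymbol z_1\} \le \exp(-t^2/(2\|\boldsymbol z_1\|_2^2))$ for every $t>0$. It then remains to control $\|\boldsymbol z_1\|_2^2 \sim \chi^2_d$: a standard (Laurent–Massart) upper‑tail bound yields $\mathbb P\{\|\boldsymbol z_1\|_2^2 > 2d\} \le e^{-cd}$ for an absolute constant $c>0$. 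Splitting the expectation over $\boldsymbol z_1$ on this event gives $\mathbb P\{|\boldsymbol z_1^\top\boldsymbol z_2| \ge t\} \le e^{-cd} + e^{-t^2/(4d)}$, and substituting $t = 4\sqrt{d\log(2/\delta)}$ makes the second term $(\delta/2)^4 \le \delta/2$; in the regime where the lemma is invoked (namely $\log(2/\delta) = O(d)$, which holds throughout the proof of \Cref{thm:main} since all failure probabilities there are polynomially small in $d$) the first term is likewise at most $\delta/2$, giving the claim. An equally clean unconditional route is the orthogonal change of variables $\boldsymbol u = (\boldsymbol z_1+\boldsymbol z_2)/\sqrt2$, $\boldsymbol v = (\boldsymbol z_1-\boldsymbol z_2)/\sqrt2$, which are independent standard Gaussians with $\boldsymbol z_1^\top\boldsymbol z_2 = \tfrac12(\|\boldsymbol u\|_2^2 - \|\boldsymbol v\|_2^2)$; the moment generating function is then $(1-\lambda^2)^{-d/2}$ for $|\lambda|<1$, and an optimized Chernoff bound reproduces the same sub‑Gaussian/sub‑exponential tail.

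The only point I expect to require real care is bookkeeping the explicit constant $4$ uniformly in $\delta$. When $\log(2/\delta) \ll d$ the sharp exponent behaves like $t^2/(2d)$, so even a constant near $\sqrt2$ would suffice; but once $\log(2/\delta)$ becomes comparable to $d$ the tail is only sub‑exponential (of order $e^{-t/2}$), and one must check that $4\sqrt{d\log(2/\delta)}$ still dominates $2\log(2/\delta)$, which is precisely the condition $\log(2/\delta) \lesssim d$. Since every application of this lemma in the proof of \Cref{thm:main} uses $\delta$ that is polynomially small in $d$, this condition is always met, so one can either restrict the statement to this regime or carry the sub‑exponential branch explicitly; in both cases the generous constant $4$ absorbs all remaining slack.
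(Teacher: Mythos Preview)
The paper does not prove this lemma; it is quoted verbatim from \citet{shen2022data} and stated without argument, so there is no in-paper proof to compare against.

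Your proposal is a correct and standard proof. The conditioning step is exactly right (conditionally on $\boldsymbol z_1$, the inner product is $\mathcal N(0,\|\boldsymbol z_1\|_2^2)$), and combining the Gaussian tail with a $\chi^2_d$ upper-tail bound on $\|\boldsymbol z_1\|_2^2$ gives the claim. One small slip: the conditional two-sided Gaussian tail should carry a factor $2$, i.e.\ $\mathbb P\{|\boldsymbol z_1^\top\boldsymbol z_2|\ge t\mid \boldsymbol z_1\}\le 2\exp(-t^2/(2\|\boldsymbol z_1\|_2^2))$; this is harmless since the slack in the constant $4$ absorbs it. Your diagnosis of the range of validity is also correct and worth emphasizing: the inner product is only sub-exponential, so for $\log(2/\delta)\gtrsim d$ the threshold $4\sqrt{d\log(2/\delta)}$ is too small and the inequality as written fails. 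Every invocation in the proof of \Cref{thm:main} uses $\delta$ that is at worst polynomially small in $n$ or $d$ (e.g.\ $\delta_2 = 2n\exp(-\mathfrak c^{-2})$ with $\mathfrak c^{-2}\ll d$), so the restriction $\log(2/\delta)=O(d)$ is always in force there, exactly as you argue.
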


\begin{lemma}[Corollary 3 in \citet{shen2022data}]\label{lem:tail-bound-gaussian-norm}
If  ${\boldsymbol{z}} \sim \mathcal N(0,\sigma^2 {\boldsymbol{I}})$ is a $d$-dimensional Gaussian vector, then for large enough $d$, and $\delta > 2e^{-d/64}$ we have  \begin{equation}
\frac{1}{2} \sigma^2 d \leq \|{\boldsymbol{z}}\|^2 \leq 2 \sigma^2 d
\end{equation}
\end{lemma}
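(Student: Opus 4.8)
The plan is to reduce the statement to a standard $\chi^2$-concentration bound (the stated inequality is of course meant to hold with probability at least $1-\delta$). Write ${\boldsymbol{z}} = \sigma {\boldsymbol{g}}$ with ${\boldsymbol{g}} \sim \mathcal N(0,{\boldsymbol{I}})$ in $\R^d$, so that $\|{\boldsymbol{z}}\|^2 = \sigma^2 X$ where $X := \sum_{i=1}^d g_i^2 \sim \chi^2_d$. It therefore suffices to show $\mathbb P\big(X \le \tfrac12 d\big) \le e^{-d/64}$ and $\mathbb P\big(X \ge 2 d\big) \le e^{-d/64}$; a union bound then gives $\mathbb P\big(\tfrac12 d \le X \le 2d\big) \ge 1 - 2e^{-d/64} \ge 1-\delta$, and multiplying through by $\sigma^2$ yields the claim.

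For each tail I would use a Chernoff bound based on the moment generating function $\mathbb E[e^{\lambda X}] = (1-2\lambda)^{-d/2}$, valid for $\lambda < 1/2$. For the upper tail, for any $0 < \lambda < 1/2$,
\[
\mathbb P(X \ge 2d) \le e^{-2\lambda d}(1-2\lambda)^{-d/2},
\]
which after optimizing over $\lambda$ (the optimum is $\lambda = 1/4$) gives a bound $e^{-(1-\log 2)d/2}$, comfortably smaller than $e^{-d/64}$. For the lower tail, for any $\lambda > 0$,
\[
\mathbb P(X \le \tfrac12 d) = \mathbb P\big(e^{-\lambda X} \ge e^{-\lambda d/2}\big) \le e^{\lambda d/2}(1+2\lambda)^{-d/2},
\]
which after optimizing ($\lambda = 1/2$) gives $e^{-(1-\log 2)d/4}$, again smaller than $e^{-d/64}$. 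Equivalently, one can simply invoke the Laurent--Massart inequalities $\mathbb P(X - d \ge 2\sqrt{dt} + 2t) \le e^{-t}$ and $\mathbb P(X - d \le -2\sqrt{dt}) \le e^{-t}$ with $t = d/64$: then the upper threshold is $d + 2\sqrt{d\cdot d/64} + 2(d/64) = d + d/4 + d/32 < 2d$ and the lower threshold is $d - 2\sqrt{d \cdot d/64} = 3d/4 > d/2$, giving exactly the two required tail estimates.

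There is no genuine obstacle here; the only thing to watch is the bookkeeping of absolute constants so that the threshold on $\delta$ comes out as precisely $2e^{-d/64}$, which is why I would use the Laurent--Massart form (it makes the constant chase transparent). The hypothesis ``for large enough $d$'' is only needed so that $2e^{-d/64} < 1$ and the chosen optimizations are meaningful.
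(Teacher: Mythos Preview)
Your argument is correct. The paper does not actually prove this lemma; it is quoted verbatim as Corollary~3 of \citet{shen2022data} and simply cited whenever needed, so there is no ``paper's own proof'' to compare against. Your reduction to $\chi^2_d$-concentration via either the Chernoff/MGF bound or Laurent--Massart is exactly the standard route and yields the stated probability threshold $2e^{-d/64}$ with room to spare. One tiny arithmetic slip: at $\lambda=1/2$ the lower-tail exponent is $-(2\log 2 - 1)d/4$, not $-(1-\log 2)d/4$; either way it is well below $-d/64$, so the conclusion is unaffected.
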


\begin{lemma}[Union Bound]\label{lem:summation-tail}
If there are $n$ variables $\{z_k\}_{k=1}^n$ (not necessarily independent), and each $z_k$ satisfies \begin{equation}\mathbb P\left\{z_k \geq \epsilon_k(\delta)\right\} \leq \delta,\end{equation}
then \begin{equation}\mathbb P\left\{ \sum_{k=1}^n z_k \geq \sum_{k=1}^n\epsilon_k\left(\delta / n\right)  \right\} \leq \delta\end{equation}
\end{lemma}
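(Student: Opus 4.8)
The plan is a direct union-bound argument, so the proof is short; I would organize it in three small steps. First, I would invoke the hypothesis at the rescaled confidence level $\delta/n$ rather than $\delta$: the assumed tail bound $\mathbb P\{z_k \geq \epsilon_k(\delta)\} \leq \delta$ is meant to hold for every admissible confidence parameter, so in particular $\mathbb P\{z_k \geq \epsilon_k(\delta/n)\} \leq \delta/n$ for each $k$ with $1 \le k \le n$. This step only uses that $\delta/n$ again lies in the valid range of the parameter, which it does since $\delta \in (0,1)$.

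Next, I would define the ``bad'' events $\mathcal E_k = \{z_k \geq \epsilon_k(\delta/n)\}$ and apply the (ordinary, dependence-free) union bound: $\mathbb P\!\left(\bigcup_{k=1}^n \mathcal E_k\right) \leq \sum_{k=1}^n \mathbb P(\mathcal E_k) \leq n\cdot(\delta/n) = \delta$. On the complementary event $\bigcap_{k=1}^n \mathcal E_k^c$, every coordinate satisfies $z_k < \epsilon_k(\delta/n)$, and summing these $n$ inequalities gives $\sum_{k=1}^n z_k < \sum_{k=1}^n \epsilon_k(\delta/n)$. Hence the event $\left\{\sum_{k=1}^n z_k \geq \sum_{k=1}^n \epsilon_k(\delta/n)\right\}$ is contained in $\bigcup_{k=1}^n \mathcal E_k$, so its probability is at most $\delta$, which is exactly the claimed bound. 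No independence among the $z_k$ is needed anywhere.

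There is essentially no genuine obstacle here; the only point requiring a word of care is the implicit assumption that each tail-bound function $\epsilon_k$ is defined (and valid) at the argument $\delta/n$ rather than only at $\delta$. In all applications in this paper, the concentration bounds of Lemmas~\ref{lem:concentration-sum-abs}--\ref{lem:tail-bound-gaussian-norm} hold for arbitrarily small positive confidence parameters, so evaluating them at $\delta/n$ is harmless. One could equally state the lemma with a generic allocation $\delta \mapsto \delta_k$ satisfying $\sum_{k=1}^n \delta_k \leq \delta$ and run the identical argument; the uniform choice $\delta_k = \delta/n$ stated here is the version actually used in the sequel.
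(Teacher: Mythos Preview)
Your proof is correct and is essentially the same union-bound argument as the paper's: the paper phrases it by lower-bounding $\mathbb P\bigl\{\bigcap_k \{z_k \le \epsilon_k(\delta/n)\}\bigr\}$ via the Bonferroni inequality $\mathbb P(\bigcap_k A_k) \ge \sum_k \mathbb P(A_k) - n + 1$, whereas you equivalently upper-bound $\mathbb P(\bigcup_k \mathcal E_k)$ directly. Both are the same elementary argument, and your remark about needing $\epsilon_k$ to be valid at $\delta/n$ is a fair point that the paper leaves implicit.
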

\begin{proof}
We have that \begin{align}
\mathbb P\left\{ \sum_{k=1}^n z_k \leq \sum_{k=1}^n \epsilon_k\left(\delta / n\right)  \right\} & \geq \mathbb P\bigcap_{k=1}^n\left\{z_k \leq \epsilon_k(\delta/n)\right\}
\\ & \geq \sum_{k=1}^n \mathbb P\left\{z_k \leq \epsilon_k(\delta / n)\right\} - n + 1
\\ & \geq n - n \times \frac{\delta}{n} - n + 1
\\ & = 1 - \delta.
\end{align}
\end{proof}

\begin{lemma}[Lemma 5 in \citet{shen2022data}]\label{lem:bound-gaussian-max}
    If there are i.i.d samples $\{z_i\}_{i=1}^N$, where $z_i \sim \mathcal N(0,\sigma^2)$, then with probability at least $1-\delta$ we have \begin{align}
        \max_{i=1}^N |z_i| \leq \sigma\sqrt{2 \log\frac{2N }{\delta}}.
    \end{align}
\end{lemma}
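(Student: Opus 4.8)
The plan is to prove this as a classical maximal inequality: combine the sub-Gaussian tail bound for a single normal variable with a union bound over the $N$ indices. No machinery beyond these two ingredients is needed, so the argument is short.

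First I would record the two-sided tail estimate for a scalar Gaussian: for $z \sim \mathcal N(0,\sigma^2)$ and any $t > 0$, $\mathbb P\{|z| \geq t\} \leq 2\exp\!\left(-\tfrac{t^2}{2\sigma^2}\right)$, which follows from the Chernoff/mgf bound $\mathbb P\{z \geq t\} \leq \exp(-t^2/(2\sigma^2))$ applied to both $z$ and $-z$ (or directly from integrating the Gaussian density). Next I would substitute the threshold $t = \sigma\sqrt{2\log(2N/\delta)}$; then $t^2/(2\sigma^2) = \log(2N/\delta)$, so for each fixed $i$ we get $\mathbb P\{|z_i| \geq t\} \leq 2\exp(-\log(2N/\delta)) = \delta/N$.

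Finally, writing the failure event as a union, $\{\max_{1\le i\le N} |z_i| \geq t\} = \bigcup_{i=1}^N \{|z_i| \geq t\}$, a union bound gives $\mathbb P\{\max_i |z_i| \geq t\} \leq \sum_{i=1}^N \mathbb P\{|z_i| \geq t\} \leq N\cdot \delta/N = \delta$, which is exactly the stated bound; this is the same union-bound allocation of failure probability used in \Cref{lem:summation-tail}. Note that independence of the $z_i$ is never invoked, so the hypothesis could be relaxed to ``identically distributed.'' The only point requiring care is the factor of $2$ inside the logarithm: it is present precisely to absorb the leading constant $2$ of the two-sided Gaussian tail bound, which is why the bound reads $\log\tfrac{2N}{\delta}$ rather than $\log\tfrac{N}{\delta}$. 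There is no substantive obstacle beyond this bookkeeping.
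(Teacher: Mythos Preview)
Your argument is correct and is the standard proof of this maximal inequality: a two-sided Gaussian tail bound followed by a union bound over the $N$ coordinates, with the threshold chosen so that the individual failure probabilities sum to $\delta$. The paper does not supply its own proof of this lemma---it is imported as Lemma~5 of \citet{shen2022data}---so there is nothing to compare against; your write-up would serve perfectly well as the omitted justification.
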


\begin{lemma}[Anti-Concentration]\label{lem:gaussian-anti-concentrate}
If a random variable $z \sim \mathcal N(0, \sigma^2)$, then for any $\delta > 0$, \begin{equation}
\mathbb P\left\{-\delta < z < \delta\right\} < \frac{\delta}{\sigma},
\end{equation}
in other words, with probability higher than $1 - \delta$, we have $|z| \geq \sigma \delta$.
\end{lemma}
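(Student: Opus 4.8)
The plan is to bound the Gaussian density by its maximum value and integrate over the small interval. Recall that $z \sim \mathcal N(0,\sigma^2)$ has density $p(t) = \frac{1}{\sqrt{2\pi}\,\sigma}\exp\!\left(-\frac{t^2}{2\sigma^2}\right)$, which attains its maximum at $t=0$, where it equals $\frac{1}{\sqrt{2\pi}\,\sigma}$. So the first step is simply to record the uniform bound $p(t) \le \frac{1}{\sqrt{2\pi}\,\sigma}$ for all $t \in \mathbb R$. The second step is to integrate this crude bound over $(-\delta,\delta)$, an interval of Lebesgue measure $2\delta$:
\begin{equation*}
\mathbb P\{-\delta < z < \delta\} = \int_{-\delta}^{\delta} p(t)\,dt \;\le\; \frac{2\delta}{\sqrt{2\pi}\,\sigma}.
\end{equation*}
The third step is the numerical observation $2 < \sqrt{2\pi}$, which turns the right-hand side into a quantity strictly smaller than $\frac{\delta}{\sigma}$; since moreover $p(t) < p(0)$ for $t\neq 0$, the first inequality is itself strict, so no edge case in $\delta$ needs separate handling. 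This gives $\mathbb P\{-\delta < z < \delta\} < \frac{\delta}{\sigma}$, the claimed bound.

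For the ``in other words'' reformulation I would apply the inequality just established with $\delta$ replaced by $\sigma\delta$, obtaining $\mathbb P\{-\sigma\delta < z < \sigma\delta\} < \frac{\sigma\delta}{\sigma} = \delta$, and then pass to complements: $\mathbb P\{|z| \ge \sigma\delta\} = 1 - \mathbb P\{-\sigma\delta < z < \sigma\delta\} > 1 - \delta$.

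There is essentially no obstacle here; the only point meriting a sentence of care is the constant. If one prefers not to invoke $\sqrt{2\pi}\approx 2.5066 > 2$ explicitly, one can instead record $p(0) = \frac{1}{\sqrt{2\pi}\,\sigma} \le \frac{1}{2\sigma}$ at the outset, which already suffices for every subsequent use of the lemma. The argument uses only boundedness of the Gaussian density --- no tail estimate, smoothness, independence, or dimension dependence --- so it transfers verbatim to the one-dimensional marginals $\boldsymbol w_r^\top\boldsymbol\mu^{(p)}$ and $\boldsymbol w_r^\top\boldsymbol\xi_i$ that arise in the proof of \Cref{thm:main}.
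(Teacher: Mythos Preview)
Your proof is correct and follows essentially the same approach as the paper: bound the Gaussian density by its maximum $\tfrac{1}{\sqrt{2\pi}\,\sigma}$, integrate over the interval of length $2\delta$, and use $2 < \sqrt{2\pi}$. Your write-up is in fact cleaner than the paper's, which contains a typo in the final two lines (writing $\sigma\delta$ where $\delta/\sigma$ is meant), and you additionally spell out the ``in other words'' reformulation, which the paper omits.
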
\begin{proof}
    \begin{align}\mathbb P\left\{-\delta < z < \delta\right\} & = \int_{-\delta}^\delta \frac{1}{\sigma\sqrt{2\pi}} \exp\left(-\frac{x^2}{2\sigma^2} \right) \mathrm{d} x
    \\ & \leq \int_{-\delta}^\delta \frac{1}{\sigma\sqrt{2\pi}} \mathrm{d} x
    \\ & = \sigma \delta \frac{2}{\sqrt{2\pi}}
    \\ & < \sigma \delta.
    \end{align}
\end{proof}

\subsection{The Dynamics of Model Parameter}\label{sec:dynamic-model-parameter}

In this section, we consider the dynamics of the projection of the model parameters on the ``cluster mean'' direction and the ``noise'' direction. Specifically, let ${\boldsymbol{\mu}}_k = {\boldsymbol{\mu}}^{(p)}$ for $k \in {\mathscr C}_p$, we will investigate these two quantities: $\zeta_k(t) = {\boldsymbol{w}}_r(t)^\top {\boldsymbol{\mu}}_k$ and $\phi_k(t) = {\boldsymbol{w}}_r(t)^\top {\boldsymbol{\xi}}_i$. Notice that $\zeta_k$ is only dependent on the cluster mean of the cluster which ${\boldsymbol{x}}_k$ belongs, so if $k \in \mathscr C_s$, we denote $\zeta_k$ by $\zeta^{(s)}$. Notice that $\zeta_k$ and $\phi_k$ are actually depend on neuron index $r$. Throughout this section we fix a neuron index $r$ to perform the analysis.

Notice that since we have assumed $f({\boldsymbol{x}})$ is initialized very small, the activation function $\sigma$ will fall in the interval $[-1,1]$, and hence be a cubic function. Hereinafter unless explicitly mentioned, we will simply use $\sigma(z) = \frac{1}{3}z^3$ and therefore $\sigma'(x) = x^2$. This simplification will be rigorously justified in \cref{thm:main}.

For gradient descent with step size $\eta$, the update of ${\boldsymbol{w}}_r(t)$ in each step is $\Delta {\boldsymbol{w}}(t) = -\eta \frac{\partial L({{W}})}{\partial {\boldsymbol{w}}_r(t)}$. For other quantities (i.e. $\zeta, \phi$), we will use $\Delta$ to denote the update. For example $\Delta \zeta^{(s)}(t) = \Delta {\boldsymbol{w}}_r(t)^\top {\boldsymbol{\mu}}^{(s)}$.

The gradient of target function w.r.t.  ${\boldsymbol{w}}_r(t)$ is: \begin{align}
\frac{1}{\eta}\Delta{\boldsymbol{w}}_r(t) & = - \frac{\partial L(W)}{\partial {\boldsymbol{{\boldsymbol{W}}}}_r(t)}
 \\ & = -\frac{1}{n} \sum_{i=1}^n \ell'(f({\boldsymbol{x}}_i;{\boldsymbol{w}}_r(t)),y_i) \frac{\partial f({\boldsymbol{x}}_i,{\boldsymbol{w}}_r(t))}{\partial {\boldsymbol{w}}_r(t)}
\\ & = -\frac 1n \sum_{i=1}^n\ell'(f({\boldsymbol{x}}_i;{\boldsymbol{w}}_r(t)),y_i) \sigma'({\boldsymbol{x}}_i^\top {\boldsymbol{w}}_r(t)){\boldsymbol{x}}_i
\\ & = \frac{1}{2n} \sum_{i=1}^n y_i\sigma'({\boldsymbol{x}}_i^\top {\boldsymbol{w}}_r(t)){\boldsymbol{x}}_i
\\ & = \frac 1{2n} \sum_{i=1}^n y_i \sigma'\left({\boldsymbol{\mu}}_i^\top {\boldsymbol{w}}_r(t) + {\boldsymbol{\xi}}_i^\top {\boldsymbol{w}}_r(t)\right)({\boldsymbol{\mu}}_i + {\boldsymbol{\xi}}_i)
\\ & =  \frac{1}{2n}\sum_{p = 1}^4\sum_{{\boldsymbol{x}}_i \in {\mathscr C}_p} y_i\sigma'\left({\boldsymbol{w}}_r(t)^\top {\boldsymbol{\mu}}^{(p)} + {\boldsymbol{w}}_r(t)^\top {\boldsymbol{\xi}}_i\right)({\boldsymbol{\mu}}^{(p)} + {\boldsymbol{\xi}}_i).
\end{align}

Next, we consider the projection of ${\boldsymbol{w}}_r(t)$ onto the directions of ${\boldsymbol{\mu}}^{(s)}$ and ${\boldsymbol{\xi}}_k$ separately.

\begin{lemma}\label{lem:w_on_mu}
For $s \in \{1,2\}$, we have the following inequality holds with probability at least $1 - \exp(-n)$:
\begin{equation}\left|\frac{1}{\eta}\Delta\zeta^{(s)}(t) - \frac{\tau^2}{2n}\sum_{{\boldsymbol{x}}_i \in {\mathscr C}_s} \sigma'\left[\zeta^{(s)}(t) + \phi_i(t)\right] \right| \leq \frac 3{2}\mathcal L \tau \kappa.\end{equation}
\end{lemma}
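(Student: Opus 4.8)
\textbf{Proof proposal for \Cref{lem:w_on_mu}.}

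The plan is to project the gradient-descent update $\frac{1}{\eta}\Delta {\boldsymbol{w}}_r(t)$ onto the direction ${\boldsymbol{\mu}}^{(s)}$ and argue that the ``main term'' comes from the $p=s$ cluster while all other contributions are error terms bounded by concentration. Starting from the expression for $\frac1\eta\Delta{\boldsymbol{w}}_r(t)$ already derived in the excerpt, I take the inner product with ${\boldsymbol{\mu}}^{(s)}$. Using pairwise orthogonality of the means and $\|{\boldsymbol{\mu}}^{(p)}\|_2^2 = \tau^2$, the term ${\boldsymbol{\mu}}^{(p)\top}{\boldsymbol{\mu}}^{(s)}$ vanishes unless $p=s$, in which case it contributes $\tau^2$. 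This isolates the ``signal'' piece $\frac{\tau^2}{2n}\sum_{{\boldsymbol{x}}_i\in\mathscr C_s} y_i\,\sigma'(\zeta^{(s)}(t)+\phi_i(t))$. Since every sample in $\mathscr C_s$ shares the same coarse label ($y_i = +1$ for $s\in\{1,2\}$), the $y_i$ drops out and this piece is exactly the quantity appearing inside the absolute value in the lemma.

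Next I have to bound the remaining error term, which is $\frac{1}{2n}\sum_{p=1}^4\sum_{{\boldsymbol{x}}_i\in\mathscr C_p} y_i\,\sigma'(\zeta^{(p)}(t)+\phi_i(t))\,{\boldsymbol{\xi}}_i^\top{\boldsymbol{\mu}}^{(s)}$. The idea is that $\sigma'(\zeta^{(p)}(t)+\phi_i(t))$ is some bounded scalar — call its magnitude at most $\mathcal L$ (this should be the ``$\mathcal L$'' appearing in the statement, presumably a running bound on $\sigma'$ maintained inductively over $t$) — and ${\boldsymbol{\xi}}_i^\top{\boldsymbol{\mu}}^{(s)}$ is a mean-zero Gaussian with variance $\kappa^2\|{\boldsymbol{\mu}}^{(s)}\|_2^2 = \kappa^2\tau^2$, i.e. ${\boldsymbol{\xi}}_i^\top{\boldsymbol{\mu}}^{(s)} \sim \mathcal N(0,\kappa^2\tau^2)$. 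So the error term is bounded in magnitude by $\frac{\mathcal L}{2n}\sum_{i=1}^n |{\boldsymbol{\xi}}_i^\top{\boldsymbol{\mu}}^{(s)}|$, a sum of $n$ i.i.d.\ absolute Gaussians each with standard deviation $\kappa\tau$. Applying \Cref{lem:concentration-sum-abs} with $\sigma_k^2 = \kappa^2\tau^2$ and $t = 3\sqrt{n\cdot n\kappa^2\tau^2} = 3n\kappa\tau$ gives that with probability at least $1 - \exp(-n)$ (using $\exp(-t^2/(2n\kappa^2\tau^2) + n\log 2) = \exp(-9n/2 + n\log 2) \le \exp(-n)$), we have $\sum_{i=1}^n |{\boldsymbol{\xi}}_i^\top{\boldsymbol{\mu}}^{(s)}| \le 3n\kappa\tau$. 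Plugging this in, the error term is at most $\frac{\mathcal L}{2n}\cdot 3n\kappa\tau = \frac{3}{2}\mathcal L\kappa\tau$, which is exactly the bound claimed.

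The one subtlety I would want to be careful about is the direction of the concentration inequality: \Cref{lem:concentration-sum-abs} as stated gives a lower-tail-looking statement (``$\sum|\psi_k|\ge 3\sqrt{n\sum\sigma_k^2}$ with probability at least $\exp(-n)$''), but what is actually needed — and what the first displayed inequality in that lemma delivers — is the upper tail: $\mathbb P\{\sum|\psi_k|\ge t\}\le\exp(-t^2/(2\sum\sigma_k^2)+n\log2)$. So I would invoke the first inequality of \Cref{lem:concentration-sum-abs} directly with $t=3n\kappa\tau$ rather than the restated corollary form. The rest is routine. I do not expect any real obstacle here; the only thing to track globally (and this is presumably handled in the surrounding inductive argument, not in this lemma) is that the scalar bound $\mathcal L$ on $|\sigma'(\zeta+\phi)|$ remains valid throughout training — i.e.\ that $|\zeta^{(p)}(t)+\phi_i(t)|$ stays $O(1)$ so that $\sigma'$ is in its cubic regime and $\sigma'(z)=z^2=O(1)$ — but within this lemma $\mathcal L$ is simply taken as given.
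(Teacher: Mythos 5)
Your proposal is correct and follows essentially the same route as the paper's proof: project the update onto $\boldsymbol{\mu}^{(s)}$, use orthogonality of the means to isolate the $p=s$ signal term, bound the residual by $\mathcal L\sum_i|\boldsymbol{\xi}_i^\top\boldsymbol{\mu}^{(s)}|$ with $\boldsymbol{\xi}_i^\top\boldsymbol{\mu}^{(s)}\sim\mathcal N(0,\kappa^2\tau^2)$, and apply \Cref{lem:concentration-sum-abs} to get the $3n\kappa\tau$ bound with probability $1-\exp(-n)$. Your side remark about the misstated corollary form of \Cref{lem:concentration-sum-abs} (its second display has the inequality direction reversed) is also apt; the paper implicitly uses the correct upper-tail version, exactly as you propose.
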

\begin{proof}
Notice that since $s \in \{1,2\}$, we have $y_i = 1$ for $i \in {\mathscr C}_s$.
We have that \begin{align}2n \times \frac{1}{\eta}\Delta{\boldsymbol{w}}_r(t)^\top {\boldsymbol{\mu}}^{(s)}  = &\sum_{p=1}^4 \sum_{{\boldsymbol{x}}_i \in {\mathscr C}_p} y_i \sigma'\left({\boldsymbol{w}}_r(t)^\top {\boldsymbol{\mu}}^{(p)} + {\boldsymbol{w}}_r(t)^\top {\boldsymbol{\xi}}_i\right)\left({\boldsymbol{\mu}}^{(p)\top} {\boldsymbol{\mu}}^{(s)} + {\boldsymbol{\xi}}_i^\top {\boldsymbol{\mu}}^{(s)}\right) 
\\ = &\sum_{{\boldsymbol{x}}_i \in {\mathscr C}_s}  \sigma'\left({\boldsymbol{w}}_r(t)^\top {\boldsymbol{\mu}}^{(s)} + {\boldsymbol{\xi}}_i^\top {\boldsymbol{w}}_r(t)\right) \tau^2+ \sum_{i=1}^n y_i \sigma'\left({\boldsymbol{w}}_r(t)^\top {\boldsymbol{\mu}} ^{(s)} + {\boldsymbol{w}}_r(t)^\top {\boldsymbol{\xi}}_i\right) {\boldsymbol{\xi}}_i^\top {\boldsymbol{\mu}}^{(s)}
\\ \in & \sum_{{\boldsymbol{x}}_i \in {\mathscr C}_s} \sigma'\left({\boldsymbol{w}}_r(t)^\top {\boldsymbol{\mu}}^{(s)} + {\boldsymbol{\xi}}_i^\top {\boldsymbol{w}}_r(t)\right) \tau^2 \pm \left( \mathcal L \sum_{i=1}^n  \left|{\boldsymbol{\xi}}_i^\top {\boldsymbol{\mu}}^{(s)}\right|\right)
\\ =  & \sum_{{\boldsymbol{x}}_i \in {\mathscr C}_s} \sigma'\left({\boldsymbol{w}}_r(t)^\top {\boldsymbol{\mu}}^{(s)} + {\boldsymbol{\xi}}_i^\top {\boldsymbol{w}}_r(t)\right) \tau^2 \pm \mathcal L \beta \label{stat:_1},
\end{align}
where  $\beta =\sum_{i=1}^n  |{\boldsymbol{\xi}}_i^\top {\boldsymbol{\mu}}^{(s)}|$. Since ${\boldsymbol{\xi}}_i^\top {\boldsymbol{\mu}}^{(s)} \sim \mathcal N(0, \kappa^2\tau^2)$, by \cref{lem:concentration-sum-abs}, we have that \begin{equation}\mathbb P\left\{\beta \geq 3n\kappa \tau \right\} \leq \exp(-n) .\label{stat:_2}\end{equation}

Combining \cref{stat:_1,stat:_2}, we have that with probability at least $1 - \exp(-n)$, \begin{align}
2n \times \frac{1}{\eta}\Delta{\boldsymbol{w}}_r(t)^\top {\boldsymbol{\mu}}^{(s)} & \in  \tau^2 \sum_{{\boldsymbol{x}}_i \in {\mathscr C}_s} \sigma'\left({\boldsymbol{w}}_r(t)^\top {\boldsymbol{\mu}}^{(s)} + {\boldsymbol{\xi}}_i^\top {\boldsymbol{w}}_r(t)\right) \pm  3n\mathcal L \tau \kappa  ,
\end{align}
which proves the proposition.
\end{proof}

\begin{lemma}\label{lem:w_on_xi}
For $k \in \mathscr C_1 \cup \mathscr C_2$, if there exists a constant $c^{(n)} > 0$ such that $n \leq c^{(n)} \frac{\kappa d}{\tau}$, then we have \begin{equation}
\mathbb P\left\{ \frac{\kappa^2d}{4n} \sigma'(\phi_k(t) + \zeta_k(t)) -  \frac{5}{2}\kappa \tau \mathcal L \leq \frac{1}{\eta}\Delta \phi_k(t) \leq \frac{3\kappa^2d}{4n} \sigma'(\phi_k(t) + \zeta_k(t)) + \frac{5}{2}\kappa \tau \mathcal L \right\} \geq 1 - \delta,
\end{equation}
where $\delta = \exp(-n) + 2n \exp\left(-\frac{1}{\mathfrak  c^2}\right) + 2\exp(-d/64)$.
\end{lemma}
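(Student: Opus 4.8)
The plan is to differentiate through the gradient formula for $\tfrac1\eta\Delta{\boldsymbol{w}}_r(t)$ derived above and project it onto ${\boldsymbol{\xi}}_k$. Writing ${\boldsymbol{x}}_i={\boldsymbol{\mu}}_i+{\boldsymbol{\xi}}_i$ and ${\boldsymbol{x}}_i^\top{\boldsymbol{\xi}}_k={\boldsymbol{\mu}}_i^\top{\boldsymbol{\xi}}_k+{\boldsymbol{\xi}}_i^\top{\boldsymbol{\xi}}_k$ gives
\[
\frac{1}{\eta}\Delta\phi_k(t)\;=\;\frac{1}{2n}\sum_{i=1}^n y_i\,\sigma'\!\big(\zeta_i(t)+\phi_i(t)\big)\Big({\boldsymbol{\mu}}_i^\top{\boldsymbol{\xi}}_k+{\boldsymbol{\xi}}_i^\top{\boldsymbol{\xi}}_k\Big).
\]
I would split the right-hand side into three pieces: the \emph{diagonal} term $i=k$, which (using $y_k=+1$ since $k\in\mathscr C_1\cup\mathscr C_2$) equals $\tfrac1{2n}\sigma'(\zeta_k(t)+\phi_k(t))\,\|{\boldsymbol{\xi}}_k\|_2^2$ plus the lower-order piece $\tfrac1{2n}\sigma'(\zeta_k(t)+\phi_k(t)){\boldsymbol{\mu}}_k^\top{\boldsymbol{\xi}}_k$; the \emph{cross-noise} sum $\tfrac1{2n}\sum_{i\neq k}y_i\sigma'(\cdots){\boldsymbol{\xi}}_i^\top{\boldsymbol{\xi}}_k$; and the \emph{mean} sum $\tfrac1{2n}\sum_i y_i\sigma'(\cdots){\boldsymbol{\mu}}_i^\top{\boldsymbol{\xi}}_k$. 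The claimed interval comes from pinning $\|{\boldsymbol{\xi}}_k\|_2^2$ in the diagonal term and showing the remaining three pieces are $O(\kappa\tau\mathcal L)$.

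For the diagonal term I would apply a $\chi^2$ concentration bound to ${\boldsymbol{\xi}}_k\sim\mathcal N(0,\kappa^2{\boldsymbol{I}})$ --- a mild sharpening of \Cref{lem:tail-bound-gaussian-norm} --- to get $\tfrac12\kappa^2 d\le\|{\boldsymbol{\xi}}_k\|_2^2\le\tfrac32\kappa^2 d$ with failure probability at most $2\exp(-d/64)$. Since $\sigma'(z)=z^2\ge0$ on the cubic regime $|z|\le1$ where the analysis takes place (see below), multiplying these inequalities by $\tfrac1{2n}\sigma'(\zeta_k(t)+\phi_k(t))$ produces exactly the endpoints $\tfrac{\kappa^2 d}{4n}\sigma'(\phi_k+\zeta_k)$ and $\tfrac{3\kappa^2 d}{4n}\sigma'(\phi_k+\zeta_k)$. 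The hypothesis $n\le c^{(n)}\kappa d/\tau$ is what allows this $\Theta(\kappa^2 d/n)$ contribution to dominate the $O(\kappa\tau\mathcal L)$ error in the downstream use of the lemma.

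It remains to show that the cross-noise sum, the mean sum, and the ${\boldsymbol{\mu}}_k^\top{\boldsymbol{\xi}}_k$ piece are together at most $\tfrac52\kappa\tau\mathcal L$ in absolute value, where $\mathcal L$ uniformly bounds $|\sigma'(\zeta_i(t)+\phi_i(t))|$ as in \Cref{lem:w_on_mu}. For the cross-noise sum, each ${\boldsymbol{\xi}}_i^\top{\boldsymbol{\xi}}_k$ with $i\neq k$ is an inner product of independent Gaussian vectors, so \Cref{lem:tail-bound-gaussian-product} with $\delta=2\exp(-1/\mathfrak c^2)$ gives $|{\boldsymbol{\xi}}_i^\top{\boldsymbol{\xi}}_k|\le 4\kappa^2\sqrt{d\log(2/\delta)}=4\kappa^2\sqrt d/\mathfrak c=\tfrac12\kappa\tau$, using $\mathfrak c=8\kappa\sqrt d/\tau$; a union bound over the fewer than $n$ choices of $i$ costs $2n\exp(-1/\mathfrak c^2)$, and then the cross-noise sum is at most $\tfrac1{2n}\cdot n\cdot\mathcal L\cdot\tfrac12\kappa\tau=\tfrac14\kappa\tau\mathcal L$. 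For the mean sum, grouping by cluster leaves only the four inner products ${\boldsymbol{\mu}}^{(p)\top}{\boldsymbol{\xi}}_k\sim\mathcal N(0,\kappa^2\tau^2)$, and bounding $\sum_{i\in\mathscr C_p}|\sigma'(\cdots)|\le\tfrac n4\mathcal L$ gives a bound of $\tfrac{\mathcal L}{8}\sum_{p=1}^4|{\boldsymbol{\mu}}^{(p)\top}{\boldsymbol{\xi}}_k|$; a four-term instance of \Cref{lem:concentration-sum-abs} controls $\sum_p|{\boldsymbol{\mu}}^{(p)\top}{\boldsymbol{\xi}}_k|=O(\kappa\tau)$, and the single term ${\boldsymbol{\mu}}_k^\top{\boldsymbol{\xi}}_k$ is even smaller. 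Summing these, absorbing the $O(1)$ factors into the constant $\tfrac52$, and taking a union bound over the three failure events yields the stated $\delta=\exp(-n)+2n\exp(-1/\mathfrak c^2)+2\exp(-d/64)$.

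I expect the bookkeeping around the mean sum to be the delicate point: because only four mean directions occur, the bound $\sum_p|{\boldsymbol{\mu}}^{(p)\top}{\boldsymbol{\xi}}_k|=O(\kappa\tau)$ cannot on its own be made to fail with probability exponentially small in $n$ or $d$, so in practice this quantity (together with the analogous diagonal piece ${\boldsymbol{\mu}}_k^\top{\boldsymbol{\xi}}_k$) is absorbed into the ``good data event'' established at the outset of the proof of \Cref{thm:main}, whose small failure probability is folded into the $\exp(-n)$ term. The only other point requiring care --- shared with \Cref{lem:w_on_mu} --- is that the identity $\sigma'(z)=z^2$ is used throughout, which is valid only while $|\zeta_i(t)+\phi_i(t)|\le1$; this invariant is maintained for all $t\le T$ by a separate induction in the proof of \Cref{thm:main} and is imported here.
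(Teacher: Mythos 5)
Your proposal follows essentially the same route as the paper: the same decomposition of $\tfrac{1}{\eta}\Delta{\boldsymbol{w}}_r(t)^\top{\boldsymbol{\xi}}_k$ into the diagonal $\|{\boldsymbol{\xi}}_k\|^2$ term (pinned via \Cref{lem:tail-bound-gaussian-norm} with failure probability $2\exp(-d/64)$), the cross-noise sum (via \Cref{lem:tail-bound-gaussian-product} union-bounded at level $2n\exp(-1/\mathfrak{c}^2)$), and the mean terms (via \Cref{lem:concentration-sum-abs} at level $\exp(-n)$), combined by the same union bound into the stated $\delta$. Your closing caveat about the mean sum is in fact sharper than the paper itself: the paper applies \Cref{lem:concentration-sum-abs} to the $n$ variables ${\boldsymbol{\xi}}_k^\top{\boldsymbol{\mu}}_i$, which take only four distinct values and hence are not independent, so the $\exp(-n)$ failure probability for that piece is not literally justified as written --- your suggestion of absorbing it into a constant-probability good-data event is the honest repair, and otherwise your argument matches the paper's.
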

\begin{proof}

For the noise term ${\boldsymbol{\xi}}_k$, notice that since $k \in \mathscr C_1 \cup \mathscr C_2$, we have $y_k = 1$, and 
\begin{align}2n  \times \frac{1}{\eta}\Delta{\boldsymbol{w}}_r(t)^\top {\boldsymbol{\xi}}_k  = & \sigma'\left[{\boldsymbol{\mu}}_k^\top {\boldsymbol{w}}_r(t) + {\boldsymbol{\xi}}_k^\top {\boldsymbol{w}}_r(t)\right]\|{\boldsymbol{\xi}}_k\|^2+
\sigma'\left[{\boldsymbol{\mu}}_k^\top {\boldsymbol{w}}_r(t) + {\boldsymbol{\xi}}_k^\top {\boldsymbol{w}}_r(t)\right]{\boldsymbol{\xi}}_k^\top {\boldsymbol{\mu}}_k \\  & +   \sum_{p=1}^4\sum_{{\boldsymbol{x}}_i \in {\mathscr C}_p \atop i \neq k} y_i \sigma'({\boldsymbol{w}}_r(t)^\top {\boldsymbol{\mu}}^{(p)} + {\boldsymbol{w}}_r(t)^\top {\boldsymbol{\xi}}_i)\left({\boldsymbol{\xi}}_k^\top {\boldsymbol{\mu}}^{(p)} + {\boldsymbol{\xi}}_k^\top {\boldsymbol{\xi}}_i \right)
\\ \in & \sigma'\left[{\boldsymbol{\mu}}_k^\top {\boldsymbol{w}}_r(t) + {\boldsymbol{\xi}}_k^\top {\boldsymbol{w}}_r(t)\right]\|{\boldsymbol{\xi}}_k\|^2 \pm \left[\mathcal L\left|{\boldsymbol{\xi}}_k^\top {\boldsymbol{\mu}}_k\right| + \sum_{p=1}^4\sum_{{\boldsymbol{x}}_i \in {\mathscr C}_p \atop i \neq k} \mathcal L\left|{\boldsymbol{\xi}}_k^\top {\boldsymbol{\mu}}^{(p)} + {\boldsymbol{\xi}}_k^\top {\boldsymbol{\xi}}_i \right|\right]
\\ \subseteq & \sigma'\left[{\boldsymbol{\mu}}_k^\top {\boldsymbol{w}}_r(t) + {\boldsymbol{\xi}}_k^\top {\boldsymbol{w}}_r(t)\right]\|{\boldsymbol{\xi}}_k\|^2 \pm \left[\sum_{i=1}^n\mathcal L\left|{\boldsymbol{\xi}}_i^\top {\boldsymbol{\mu}}_i\right| + \sum_{1 \leq i \leq n \atop i \neq k}\mathcal L\left| {\boldsymbol{\xi}}_k^\top {\boldsymbol{\xi}}_i \right|\right]
\\ = & \sigma'\left[{\boldsymbol{\mu}}_k^\top {\boldsymbol{w}}_r(t) + {\boldsymbol{\xi}}_k^\top {\boldsymbol{w}}_r(t)\right]\|{\boldsymbol{\xi}}_k\|^2 \pm \mathcal L \beta
\end{align}
where \begin{align}\beta =  \sum_{i=1}^n \left|{\boldsymbol{\xi}}_k^\top {\boldsymbol{\mu}}_i\right| + \sum_{1 \leq i \leq n \atop i \neq k} \left|{\boldsymbol{\xi}}_k^\top {\boldsymbol{\xi}}_i\right|.\end{align}

Notice that $n \leq \frac{c^{(n)} \kappa d}{\tau}$. Let $\delta_1 = \exp(-n)$. From Lemma \ref{lem:concentration-sum-abs}, with probability at least $1-\delta_1$, we have \begin{equation}\mathbb \sum_{i=1}^n\left|{\boldsymbol{\xi}}_k^\top {\boldsymbol{\mu}}_i\right| \leq 3\tau\kappa n . \label{eq:delta-1}\end{equation}
From Lemma \ref{lem:tail-bound-gaussian-product} and Lemma \ref{lem:concentration-sum-abs}, we have that with probability at least $1 - \delta_2$, we have \begin{equation}\sum_{i \neq k} \left|{\boldsymbol{\xi}}_k^\top {\boldsymbol{\xi}}_i\right|  \leq 2n\kappa^2 \sqrt{d \log (2n/\delta_2)}.
\label{eq:delta-2}\end{equation}
Take $\delta_2 = 2n \exp\left(- \frac{1}{\mathfrak  c^2}\right)$, we get \begin{align}
\sum_{i \neq k} \left|{\boldsymbol{\xi}}_k^\top {\boldsymbol{\xi}}_i\right| & \leq 2n\kappa^2 \sqrt{d \log (2n/\delta_2)}.
\\ & \leq 2n\kappa^2 \sqrt{d \times \frac{\tau^2}{\kappa^2 d}}
\\ & = 2n \kappa \tau
\end{align}
To summery, with probability at least $1 - \delta_1 - \delta_2$, we have \begin{equation}\frac{\mathcal L}{2n}\beta \leq \frac{5}{2} \kappa \tau \mathcal L.\label{eq:beta-phi-bound}\end{equation}.

From Lemma \ref{lem:tail-bound-gaussian-norm}, with $\delta_3 > 2\exp(-d / 64)$, we have \begin{equation}
\frac{1}{2} \kappa^2d \leq \|{\boldsymbol{\xi}}_k\|^2 \leq \frac{3}{2} \kappa^2d. \label{eq:xi_k-norm}
\end{equation}

Combining \cref{eq:beta-phi-bound,eq:xi_k-norm}, the proposition is proved.

\end{proof}

To summarize, \cref{lem:w_on_mu} shows that with probability at least $1-\delta_1$, where $\delta_1 = -\exp(-n)$, we have\begin{equation}\zeta^{(s)}(t+1) - \zeta^{(s)}(t) \in \frac{\tau^2\eta}{2n}\sum_{{\boldsymbol{x}}_i \in {\mathscr C}_s} \sigma'\left[\zeta ^{(s)}(t) + \phi_i(t)\right] \pm \frac{\tau^2\eta}{2} \times \frac{3\mathcal L \kappa}{\tau} \label{eq:descrete-zeta}\end{equation}
and with probability at least $1 - \delta_2$, where $\delta_2 = \exp(-n) + 2n\exp\left(- 
 c^{-2}\right) + 2 \exp(-d/64)$, we have \begin{align}
\phi_k(t + 1) - \phi_k(t) & = \eta \frac{\mathrm{d} }{\mathrm{d} t} \phi_k(t) 
\\ & \in \eta \left[\frac{\kappa^2d}{4n} \sigma'(\phi_k(t) + \zeta_k(t)) - \frac{5}{2} \kappa \tau \mathcal L ,  \frac{3\kappa^2d}{4n} \sigma'(\phi_k(t) + \zeta_k(t)) + \frac{5 c^{(n)} \kappa^2 d \mathcal L}{n}\right]
\\ & \subseteq \eta \left[\frac{\kappa^2d}{4n} \sigma'(|\phi_k(t)| + |\zeta_k(t)|) - \frac{5 c^{(n)} \kappa^2 d \mathcal L}{n} ,  \frac{3\kappa^2d}{4n} \sigma'(|\phi_k(t)| + |\zeta_k(t)|) + \frac{5}{2} \kappa \tau \mathcal L \right].
\label{eq:descrete-phi}
\end{align}

\begin{lemma}
Let $s \in \{1,2\}$. Suppose $\zeta^{(s)}$ is initialized by $\zeta^{(s)}(0)$, and there exists constants $c^{(t)} \in \left(0,\frac{8}{1 + 8  \mathfrak  c}\right)$ and $C^{(\phi)} > 0$ such that the following conditions hold for $t_0 < c^{(t)} \left(\tau^2 \eta \left|\zeta^{(s)}(0)\right|\right)^{-1}$: \begin{enumerate}
    \item $\forall t \leq t_0$, if $\zeta^{(s)}(0) > 0$, we have $\left[\zeta^{(s)}(0)^{-1} - \left(
    \frac{1}{8} - \mathfrak  c\right)\eta \tau^2 t\right]^{-1} \leq \zeta^{(s)}(t) \leq \left[\zeta^{(s)}(0)^{-1} - \left(\frac{1}{8} + \mathfrak  c\right) \eta \tau^2 t\right]^{-1}$, while if $\zeta^{(s)}(0) < 0$, we have $\zeta^{(s)}(0) \leq \zeta^{(s)}(t) \leq \zeta^{(s)}(0) + \left(\frac{1}{8} + \mathfrak c\right)\eta \tau^2 t |\zeta^{(s)}(0)|^2$, \label{stat:cond-zeta-1}
    \item $\forall t \leq t_0, \forall i \leq n$, $|\phi_i(t)| \leq \frac{C^{(\phi)} \mathfrak  c^2}{n} \left|\zeta^{(s)}(t)\right | + \frac{\mathfrak  c}{8}|\zeta^{(s)}(0)|$; \label{stat:cond-zeta-2}
    \item $ \sqrt{ \frac{32 \mathcal Ln}{\mathfrak  c^2\sqrt{d}}} \leq |\zeta^{(s)}(0)| \leq \frac{\mathfrak  c\left(8 - {c^{(t)}}\right)}{16 \eta \tau^2}$; \label{stat:cond-zeta-3}
    \item $\mathfrak  c \leq \min\left\{\frac{nC^{(\phi)}}{4}, \left(  \frac{n}{48}\right)^{\frac{1}{3}},\frac{1}{8}\right\}$, \label{stat:cond-zeta-4}
\end{enumerate}
and $\zeta^{(s)}$ is updated as described in \cref{eq:descrete-zeta}, then we have:
\begin{enumerate}[label={\alph*)}]
    \item $\forall t \leq t_0, \left|\zeta^{(s)}(t)\right| > \frac{1}{2} \left|\zeta^{(s)}(0)\right|$; \label{stat:res_zeta_1}
    \item $ \left(\frac{1}{8} - \frac{\mathfrak  c}{2}\right) \eta \tau^2 \zeta^{(s)}(t_0)^2 \leq \zeta^{(s)}(t_0 + 1) - \zeta^{(s)}(t_0) \leq  \left(\frac{1}{8} + \frac{\mathfrak  c}{2}\right) \eta \tau^2 \zeta^{(s)}(t_0)^2$; \label{stat:zeta-res-2}
    \item If $\zeta^{(s)}(0) > 0$, we have \begin{align}    \left[\zeta^{(s)}(0)^{-1} - \left(
    \frac{1}{8} - \mathfrak  c\right)\eta \tau^2 (t_0 + 1)\right]^{-1} \leq \zeta^{(s)}(t_0 + 1) \leq \left[\zeta^{(s)}(0)^{-1} - \left(\frac{1}{8} + \mathfrak  c\right) \eta \tau^2 (t_0 + 1)\right]^{-1}
,\end{align}while if $\zeta^{(s)}(0) < 0$ we have $\zeta^{(s)}(0) \leq \zeta^{(s)}(t_0 + 1) \leq \zeta^{(s)}(0) + \eta \tau^2 (t_0 + 1) |\zeta^{(s)}(0)|^2$. \label{stat:res-zeta-3}
\end{enumerate}
\label{lem:bound-zeta}
\end{lemma}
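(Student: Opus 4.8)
The statement is an \textbf{inductive step}: assuming the window bounds of hypothesis~(1), the noise control~(2), and the scale conditions~(3)--(4) hold through time $t_0$, we must show (a) that $\zeta^{(s)}$ has not shrunk by more than a factor of two, (b) a sharp estimate for the single increment $\zeta^{(s)}(t_0+1)-\zeta^{(s)}(t_0)$, and (c) that the window bounds of~(1) propagate to $t_0+1$. The plan is to start from the update rule \cref{eq:descrete-zeta} evaluated at $t=t_0$. Since we are in the cubic regime $\sigma'(z)=z^2$ and $|\mathscr C_s|=n/4$, the ``main'' contribution of $\frac{\tau^2\eta}{2n}\sum_{i\in\mathscr C_s}\sigma'[\zeta^{(s)}(t_0)+\phi_i(t_0)]$ is exactly $\frac{\eta\tau^2}{8}\zeta^{(s)}(t_0)^2$, and everything else --- the cross terms $2\zeta^{(s)}(t_0)\phi_i(t_0)$, the terms $\phi_i(t_0)^2$, and the additive perturbation $\frac{3}{2}\eta\tau\mathcal L\kappa$ appearing in \cref{eq:descrete-zeta} --- is to be treated as error.

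Conclusion (a) is essentially immediate from hypothesis~(1): when $\zeta^{(s)}(0)>0$ the lower envelope $[\zeta^{(s)}(0)^{-1}-(\frac18-\mathfrak c)\eta\tau^2 t]^{-1}$ is nondecreasing in $t$ (its denominator is positive, since $\mathfrak c\le\frac18$ and $t_0$ is capped, and is at most $\zeta^{(s)}(0)^{-1}$), so $\zeta^{(s)}(t)\ge\zeta^{(s)}(0)$; when $\zeta^{(s)}(0)<0$, the additive envelope together with $t_0<c^{(t)}(\tau^2\eta|\zeta^{(s)}(0)|)^{-1}$, the upper bound on $c^{(t)}$, and the upper bound $|\zeta^{(s)}(0)|\le\frac{\mathfrak c(8-c^{(t)})}{16\eta\tau^2}$ from~(3) --- which makes the product $\eta\tau^2 t_0|\zeta^{(s)}(0)|^2$ genuinely small --- keeps $|\zeta^{(s)}(t)|$ above $\frac12|\zeta^{(s)}(0)|$. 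Then for (b): using (a) and the $\mathfrak c$-smallness of hypothesis~(4), hypothesis~(2) upgrades to $|\phi_i(t_0)|\lesssim\mathfrak c\,|\zeta^{(s)}(t_0)|$, so the cross and quadratic $\phi$-terms contribute at most $O(\mathfrak c)\cdot\eta\tau^2\zeta^{(s)}(t_0)^2$; and the additive term $\frac{3}{2}\eta\tau\mathcal L\kappa=\frac{3\mathfrak c}{16\sqrt d}\,\eta\tau^2\mathcal L$ is absorbed using the lower bound $|\zeta^{(s)}(0)|\ge\sqrt{32\mathcal L n/(\mathfrak c^2\sqrt d)}$ from~(3), which is tuned exactly so that $\eta\tau\mathcal L\kappa\ll\mathfrak c\,\eta\tau^2\zeta^{(s)}(t_0)^2$. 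Summing over $\mathscr C_s$ and bookkeeping constants then yields the increment in $\big(\frac18\pm\frac{\mathfrak c}{2}\big)\eta\tau^2\zeta^{(s)}(t_0)^2$, which is conclusion (b).

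For conclusion (c) I would write $\zeta^{(s)}(t_0+1)=\zeta^{(s)}(t_0)\big(1+a\,\eta\tau^2\zeta^{(s)}(t_0)\big)$ with $a\in[\frac18-\frac{\mathfrak c}{2},\frac18+\frac{\mathfrak c}{2}]$ from~(b), and in the case $\zeta^{(s)}(0)>0$ pass to reciprocals: one computes exactly $\zeta^{(s)}(t_0+1)^{-1}-\zeta^{(s)}(t_0)^{-1}=-a\eta\tau^2/(1+a\eta\tau^2\zeta^{(s)}(t_0))$, which lies in $[-a\eta\tau^2,\ -a\eta\tau^2+a^2\eta^2\tau^4\zeta^{(s)}(t_0)]$. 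The discretization error $a^2\eta^2\tau^4\zeta^{(s)}(t_0)$ is controlled by $\frac{\mathfrak c}{2}\eta\tau^2$ using $a\le\frac{3}{16}$ and the upper bounds on $\zeta^{(s)}(t_0)$ (from hypothesis~(1) at $t_0$) and on $\eta\tau^2|\zeta^{(s)}(0)|$ (from~(3)); feeding in the inductive reciprocal bounds $\zeta^{(s)}(t_0)^{-1}\in[\zeta^{(s)}(0)^{-1}-(\frac18+\mathfrak c)\eta\tau^2 t_0,\ \zeta^{(s)}(0)^{-1}-(\frac18-\mathfrak c)\eta\tau^2 t_0]$ then telescopes the estimate to $t_0+1$, with the gap between the target $\mathfrak c$-window and the $\frac{\mathfrak c}{2}$ of~(b) absorbing both the extra step and the discretization error. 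The case $\zeta^{(s)}(0)<0$ is softer: $\zeta^{(s)}$ is monotone increasing and stays negative (the cap on $t_0$ and the smallness of $c^{(t)}$ prevent a sign change), so $\zeta^{(s)}(0)\le\zeta^{(s)}(t_0+1)$ is immediate and the upper bound follows by summing $t_0+1$ increments, each at most $\eta\tau^2|\zeta^{(s)}(0)|^2$ since $|\zeta^{(s)}(t)|\le|\zeta^{(s)}(0)|$ and $\frac18+\frac{\mathfrak c}{2}<1$.

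I expect the \textbf{main obstacle} to be the constant bookkeeping in~(c): one must choose the free constants $c^{(t)}$ and $C^{(\phi)}$ (and exploit the precise forms of conditions~(3)--(4)) so that the quadratic discretization error, the $\pm\frac{\mathfrak c}{2}$ uncertainty inherited from~(b), and the cost of one extra step all fit simultaneously inside the $\mathfrak c$-versus-$\frac{\mathfrak c}{2}$ slack, all while keeping the two sign cases of $\zeta^{(s)}(0)$ and the positivity of every denominator straight. By comparison, steps (a) and (b) are routine algebra once the right estimates for $\phi_i(t_0)$ and for the additive noise term are in hand.
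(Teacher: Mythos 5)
Your proposal is correct and follows essentially the same route as the paper's proof: (a) from the monotonicity of the inductive envelopes, (b) by isolating the main term $\frac{1}{8}\eta\tau^2\zeta^{(s)}(t_0)^2$ and absorbing the $\phi$-cross-terms and the additive $\frac{3}{2}\eta\tau\kappa\mathcal L$ noise into the $\pm\frac{\mathfrak c}{2}$ slack via conditions (2)--(4), and (c) by propagating the reciprocal envelope one step. The only cosmetic difference is in (c), where you compute the exact reciprocal increment $\zeta^{(s)}(t_0+1)^{-1}-\zeta^{(s)}(t_0)^{-1}$ while the paper cross-multiplies and checks a product inequality; these are algebraically equivalent.
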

\begin{proof}~ \begin{itemize}
    \item  First, notice that since $t < t_0 < c^{(t)}\left(\eta \tau^2 |\zeta^{(s)}(0)|\right)^{-1}$, we have $\eta \tau^2 t |\zeta^{(s)}(0)|^2 \leq c^{(t)} |\zeta^{(s)}(0)|$, and \begin{align}    \zeta^{(s)}(t) \leq \zeta^{(s)}(0) + \left(\frac{1}{8} + \mathfrak c\right)\eta\tau^2 t|\zeta^{(s)}(0)|^2 \leq \zeta^{(s)}(0) + c^{(t)} \left(\frac{1}{8} + \mathfrak c\right)|\zeta^{(s)}(0)| \leq 0.\end{align}
    If $\zeta^{(s)}(0) < 0$, then \begin{align}
       |\zeta^{(s)}(t)| \geq |\zeta^{(s)}(0)| - \eta\tau^2 t |\zeta^{(s)}(0)|^2 \geq \left(1 - c^{(t)}\right)|\zeta^{(s)}(0)|  \geq \frac{1}{2} |\zeta^{(s)}(0)|. \label{eq:zeta_k_t_geq_zeta_k_0_neg}
    \end{align}
     While if $\zeta^{(s)}(0) > 0$, then \begin{align}
        \left|\zeta^{(s)}(t)\right| \geq  \left[\left|\zeta^{(s)}(0)\right|^{-1} - \left(\frac{1}{8} - \mathfrak  c\right)\tau^2 \eta t\right]^{-1} \geq |\zeta^{(s)}(0)|. \label{eq:zeta_k_t_geq_zeta_k_0_pos}
    \end{align}
    \cref{eq:zeta_k_t_geq_zeta_k_0_neg} and \cref{eq:zeta_k_t_geq_zeta_k_0_pos} together proves Result \ref{stat:res_zeta_1}.
    \item  Notice that from Condition \ref{stat:cond-zeta-4} we have $\frac{\mathfrak  c^2 C^{(\phi)}}{n } \leq \frac{\mathfrak c}{4}$. From Condition \ref{stat:cond-zeta-2}, we have \begin{align}
        \zeta^{(s)}(t) + \phi_i(t) \geq \left| \zeta^{(s)}(t)\right| - \left|\phi_i(t)\right| \geq \left(1-\frac{\mathfrak c}{4}\right) \left| \zeta^{(s)}(t)\right|- \frac{\mathfrak  c}{8} |\zeta^{(s)}(0)| \geq \left(1 - \frac{\mathfrak  c}{2}\right)|\zeta^{(s)}(t)|
    \end{align}
    and \begin{align}
            \zeta^{(s)}(t) + \phi_i(t) \leq \left| \zeta^{(s)}(t)\right| + \left|\phi_i(t)\right| \leq \left(1+\frac{\mathfrak c}{4}\right) \left| \zeta^{(s)}(t)\right| + \frac{\mathfrak  c}{8} |\zeta^{(s)}(0)| \leq \left(1 + \frac{\mathfrak  c}{2}\right)|\zeta^{(s)}(t)|.
    \end{align}
    Therefore we have \begin{align}
       \frac{1}{n}\sum_{{\boldsymbol{x}}_i \in {\mathscr C}_s} \sigma'\left[\zeta^{(s)}(t) + \phi_i(t)\right] & = \frac{1}{n} \sum_{{\boldsymbol{x}}_i \in {\mathscr C}_s}\left[\zeta ^{(s)}(t) + \phi_i(t)\right]^2
       \\ & \in \frac{1}{n} \sum_{{\boldsymbol{x}}_i \in {\mathscr C}_s}  \left(1 \pm \frac{\mathfrak c}{2}\right)^2  \zeta^{(s)}(t)^2
       \\ & = \frac{1}{4} \left(1 \pm \frac{\mathfrak c}{2}\right)^2  \zeta^{(s)}(t)^2. 
       \\ & \subseteq  \frac{1}{4} (1 \pm 2 \mathfrak  c)  \zeta^{(s)}(t)^2. \label{eq:bound-mean-sigma-eta-plus-phi}
    \end{align}

    From Condition \ref{stat:cond-zeta-3}, Condition \ref{stat:cond-zeta-4} and Result \ref{stat:res_zeta_1}, we have \begin{align}
       \frac{3 \mathcal L  \kappa}{\tau}  \leq \frac{\mathfrak c}{8}\zeta^{(s)}(0)^2 \leq \frac{\mathfrak  c}{2} \zeta^{(s)}(t)^2 . \label{eq:bound-update-zeta-noise}
    \end{align}
    
    Subtracting \cref{eq:bound-mean-sigma-eta-plus-phi} and \cref{eq:bound-update-zeta-noise} into the update rule \cref{eq:descrete-zeta}, we have \begin{align}
        \zeta^{(s)}(t + 1) - \zeta^{(s)}(t) & \in \frac{\tau^2\eta}{2}\left(\frac{1}{4} \pm \frac{\mathfrak  c}{2}\right) \zeta^{(s)}(t)^2 \pm \frac{\tau^2 \eta \mathfrak  c}{4} \zeta^{(s)}(t)^2
        \\ & =  \tau^2 \eta\zeta^{(s)}(t)^2 \times \left(\frac{1}{8} \pm \frac{\mathfrak  c}{2}\right),
    \end{align}
    which proves Result \ref{stat:zeta-res-2}.
    \item For the greater-or-equal part of Result \ref{stat:res-zeta-3}, if $\zeta^{(s)}(0) < 0,$ then we have $\zeta^{(s)}(t_0+1)  \geq  \zeta^{(s)}(t_0) \geq \zeta^{(s)}(0)$. In the following we assume $\zeta^{(s)}(0) > 0$.
    From Result \ref{stat:zeta-res-2} we have \begin{align}
        \zeta^{(s)}(t_0 + 1) & \geq \zeta^{(s)}(t_0) + \left(\frac{1}{8} - \frac{\mathfrak  c}{2}\right) \eta \tau^2 \zeta^{(s)}(t_0)^2
        \\ & \geq \frac{1}{\zeta^{(s)}(0)^{-1} - \left(\frac{1}{8}- \mathfrak  c\right) \tau^2 \eta t_0} + \frac{\left(\frac{1}{8} - \frac{\mathfrak  c}{2}\right) \tau^2 \eta}{\left(\zeta^{(s)}(0)^{-1} - \left(\frac{1}{8} - \mathfrak  c\right)\tau^2 \eta t_0\right)^2}
        \\ & = \frac{\zeta^{(s)}(0)^{-1} - \left(\frac{1}{8} - \mathfrak  c\right) \tau^2 \eta t_0  + \left(\frac 18 - \frac{\mathfrak  c}{2}\right) \tau^2 \eta}{\left(\zeta^{(s)}(0)^{-1} - \left(\frac{1}{8} - \mathfrak  c\right)\tau^2 \eta t_0\right)^2}
        \\ & \overset{\text{(i)}}{\geq}   \frac{1}{\zeta^{(s)}(0)^{-1} - \left(\frac{1}{8} - \mathfrak  c\right) \tau^2 \eta (t_0 + 1)},
    \end{align}
    which proves the greater-or-equal part of Result \ref{stat:res-zeta-3}. To see (i), let $A = \zeta^{(s)}(0)^{-1} - \left(\frac{1}{8} - \mathfrak  c\right) \tau^2 \eta t_0$. Since $t_0 < c^{(t)} \left[\tau^2 \eta |\zeta^{(s)}(0)|\right]^{-1}$ and $\zeta^{(s)}(0) \leq \frac{\mathfrak  c\left(8 - {c^{(t)}}\right)}{16 \eta \tau^2}$, we have 
        \begin{align}     A \geq  \left[1 - c^{(t)}\left(\frac{1}{8} - \mathfrak  c\right)\right] \zeta^{(s)}(0)^{-1} \geq \left(1 - \frac{c^{(t)}}{8}\right)\zeta^{(s)}(0)^{-1} \geq \frac{2}{\mathfrak  c} \eta \tau^2, \label{eq:claim-bound-A}
        \end{align}    
    and  \begin{align}
        \left[A + \left(\frac{1}{8} - \mathfrak  c\right) \tau^2 \eta+ \frac{\mathfrak  c}{2}\tau^2 \eta\right]\left[A - \left(\frac{1}{8} - \mathfrak  c\right)\tau^2 \eta\right] - A^2 & = \frac{\mathfrak  c}{2} A \tau^2 \eta - \left(\frac{1}{8} - \mathfrak  c\right)^2 \tau^4 \eta^2 - \frac{\mathfrak  c}{2} \tau^4 \eta^2
        \\ & \geq \tau^4 \eta^2\left[ 1 - \left( \left[\frac{1}{8} - \mathfrak  c\right]^2 + \frac{\mathfrak  c}{2} \right) \right]
        \\ & \geq 0,
    \end{align}
    which proves the greater-or-equal part of Result \ref{stat:res-zeta-3}.

    \item For the less-or-equal part of Result \ref{stat:res-zeta-3}, if $\zeta^{(s)}(0) < 0$, simply notice that $|\zeta^{(s)}(t_0)| \leq |\zeta^{(s)}(0)|$, so  \begin{align}
    \zeta^{(s)}(t_0+ 1) \leq \zeta^{(s)}(0) + \left(\frac{1}{8} + {\mathfrak c}\right)\eta \tau^2 t_0 \zeta^{(s)}(0)^2 + \left(\frac{1}{8} + {\mathfrak c}\right)\eta \tau^2  \zeta^{(s)}(t_0)^2 \leq \zeta^{(s)}(0) + \left(\frac{1}{8} + {\mathfrak c}\right)\eta \tau^2 (t_0+1) \zeta^{(s)}(0)^2,
    \end{align}
    which proves the result. If $\zeta^{(s)}(0) > 0$, then we have  \begin{align}
        \zeta^{(s)}(t_0 + 1) & \leq \zeta^{(s)}(t_0) + \left(\frac{1}{8} + \frac{\mathfrak c}{2}\right) \eta \tau^2 \zeta^{(s)}(t_0)^2
        \\ & \leq \frac{1}{\zeta^{(s)}(0)^{-1} - \left(\frac{1}{8} + \mathfrak  c\right) \tau^2 \eta t_0} + \frac{\left(\frac{1}{8} + \frac{\mathfrak c}{2}\right) \tau^2 \eta}{\left(\zeta^{(s)}(0)^{-1} - \left(\frac{1}{8} + \mathfrak  c\right)\tau^2 \eta t_0\right)^2}
        \\ & = \frac{\zeta^{(s)}(0)^{-1} - \left(\frac{1}{8} + \mathfrak  c\right) \tau^2 \eta t_0 + \left(\frac{1}{8} + \frac{\mathfrak c}{2}\right) \tau^2 \eta}{\left(\zeta^{(s)}(0)^{-1} - \left(\frac{1}{8} + \mathfrak  c\right)\tau^2 \eta t_0\right)^2}
        \\ & \leq \frac{\zeta^{(s)}(0)^{-1} - \left(\frac{1}{8} + \mathfrak  c\right) \tau^2 \eta (t_0-1) }{\left(\zeta^{(s)}(0)^{-1} - \left(\frac{1}{8} + \mathfrak  c\right)\tau^2 \eta t_0\right)^2}
        \\ & \overset{\text{(ii)}}{<}   \frac{1}{\zeta^{(s)}(0)^{-1} - \left(\frac{1}{8} + \mathfrak  c\right) \tau^2 \eta (t_0 + 1)},
    \end{align}
    which proves the less-or-equal part of Result \ref{stat:res-zeta-3}. To see (ii), notice that \begin{align}
        & \left[\zeta^{(s)}(0)^{-1} - \left(\frac{1}{8} + \mathfrak  c\right) \tau^2 \eta (t_0-1)\right]\left[\zeta^{(s)}(0)^{-1} - \left(\frac{1}{8} + \mathfrak  c\right) \tau^2 \eta (t_0+1)\right] 
        \\  = & \left[\zeta^{(s)}(0)^{-1} - \left(\frac{1}{8} + \mathfrak  c\right) \tau^2 \eta t_0\right]^2 - \left(\frac{1}{8} + \mathfrak  c\right)^2 \tau^4 \eta^2 
        \\ < & \left[\zeta^{(s)}(0)^{-1} - \left(\frac{1}{8} + \mathfrak  c\right) \tau^2 \eta t_0\right]^2.
    \end{align}
\end{itemize}
\end{proof}

Next, we will consider the dynamics of ${\boldsymbol{w}}_r$ projected to the direction of ${\boldsymbol{\xi}}_i$.

\begin{lemma}
Let $s \in \{1,2\}$ and $k \in \mathscr C_s$. Suppose $\zeta_k$ is initialized as $\zeta_k(0)$ and $\phi_k(k)$ is initialized as $\phi_k(0)$, and there exists constants $c^{(t)} \in \left(0,\frac{8}{1 + 8 \mathfrak  c}\right)$ and $8  \leq C^{(\phi)} \leq n$ such that the following conditions hold for $t_0 \leq c^{(t)}\left(\tau^2 \eta |\zeta_k(0)| \right)^{-1}$:\begin{enumerate}
    \item $\forall t \leq t_0$, if $\zeta_k(0) > 0$, we have $\left[\zeta_k(0)^{-1} - \left(
    \frac{1}{8} - \mathfrak  c\right)\eta \tau^2 t\right]^{-1} \leq \zeta_k(t) \leq \left[\zeta_k(0)^{-1} - \left(\frac{1}{8} + \mathfrak  c\right) \eta \tau^2 t\right]^{-1}$, while if $\zeta_k(0) < 0$, we have $\zeta_k(0) \leq \zeta_k(t) \leq \zeta_k(0) + \left(\frac{1}{8} + \mathfrak c\right)\eta \tau^2 t |\zeta_k(0)|^2$, \label{stat:cond-phi-1}
    \item $ t \leq t_0$, $|\phi_k(t)| \leq \frac{C^{(\phi)} \mathfrak  c^2}{n} \left|\zeta_k(t)\right | +\frac{\mathfrak  c}{8} |\zeta_k(0)|$, and  $|\phi_k(0)| \leq \frac{\mathfrak  c}{8} |\zeta_k(0)|$; \label{stat:cond-phi-2}
    \item $  \sqrt{ \frac{32 \mathcal Ln}{\mathfrak  c^2\sqrt{d}}} \leq |\zeta_k(0)| \leq \frac{\mathfrak  c\left(8 - {c^{(t)}}\right)}{16 \eta \tau^2}$; \label{stat:cond-phi-3}
    \item $\mathfrak  c \leq \min\left\{\frac{1}{4} - \frac{2}{C^{(\phi)}}, \frac{nC^{(\phi)}}{4}, \left(\frac{n}{48}\right)^{\frac{1}{3}}, \frac{1}{8}\right\}$, \label{stat:cond-phi-4}
        \end{enumerate}
and $\phi_k$ is updated through \cref{eq:descrete-phi}, $\zeta_k$ is updated through \cref{eq:descrete-zeta}, then
\begin{equation}
|\phi_k(t_0 + 1)| \leq \frac{C^{(\phi)} \mathfrak  c^2}{n} \left|\zeta_k(t_0 + 1)\right | +\frac{ \mathfrak  c }{8} |\zeta_k(0)|.
\end{equation}
\label{lem:bound-phi}
\end{lemma}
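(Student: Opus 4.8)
The plan is an invariant-maintenance (strong-induction) step: telescope the one-step update of $\phi_k$ from \cref{eq:descrete-phi} over $0,\dots,t_0$ and show that the accumulated change, together with $\phi_k(0)$, stays within the asserted bound. Write $\phi_k(t_0+1)=\phi_k(0)+\sum_{t=0}^{t_0}\bigl(\phi_k(t+1)-\phi_k(t)\bigr)$. By Condition~\ref{stat:cond-phi-2}, $|\phi_k(0)|\le\tfrac{\mathfrak c}{8}|\zeta_k(0)|$, which is exactly the second summand of the target, so it suffices to bound the accumulated increments in magnitude by $\tfrac{C^{(\phi)}\mathfrak c^2}{n}|\zeta_k(t_0+1)|$. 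The matching lower bound for $-\phi_k(t_0+1)$ is easier, since the main ($\sigma'$) term of each increment in \cref{eq:descrete-phi} is nonnegative, so only the additive noise must be controlled there.

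First I would turn the factor $\sigma'(|\phi_k(t)|+|\zeta_k(t)|)=(|\phi_k(t)|+|\zeta_k(t)|)^2$ in \cref{eq:descrete-phi} into a small multiple of $\zeta_k(t)^2$. Conditions~\ref{stat:cond-phi-2} and~\ref{stat:cond-phi-4} (the latter giving $\tfrac{C^{(\phi)}\mathfrak c^2}{n}\le\tfrac{\mathfrak c}{4}$ since $\mathfrak c\le\tfrac14-\tfrac2{C^{(\phi)}}$ and $C^{(\phi)}\le n$), together with $|\zeta_k(t)|\ge\tfrac12|\zeta_k(0)|$ for $t\le t_0$ — which follows from Condition~\ref{stat:cond-phi-1} alone, being part~\ref{stat:res_zeta_1} of \cref{lem:bound-zeta} whose proof uses only the $\zeta$-trajectory hypothesis — yield $|\phi_k(t)|\le\tfrac{\mathfrak c}{2}|\zeta_k(t)|$, hence $(|\phi_k(t)|+|\zeta_k(t)|)^2\le(1+2\mathfrak c)\zeta_k(t)^2$. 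Substituting into \cref{eq:descrete-phi} and using $\kappa^2 d=\tfrac{\mathfrak c^2\tau^2}{64}$, each increment is at most $O\!\bigl(\tfrac{\mathfrak c^2}{n}\bigr)\eta\tau^2\zeta_k(t)^2$ plus the additive noise $\eta\bigl(\tfrac52\kappa\tau\mathcal L+\tfrac{5c^{(n)}\kappa^2 d\mathcal L}{n}\bigr)$.

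Second I would control $\sum_{t=0}^{t_0}\eta\tau^2\zeta_k(t)^2$ by telescoping against $\zeta_k$'s own growth: applying \cref{lem:bound-zeta} with $t_0$ replaced by each $t\le t_0$ gives $\zeta_k(t+1)-\zeta_k(t)\ge(\tfrac18-\tfrac{\mathfrak c}{2})\eta\tau^2\zeta_k(t)^2\ge0$, so $\sum_{t=0}^{t_0}\eta\tau^2\zeta_k(t)^2\le16\bigl(\zeta_k(t_0+1)-\zeta_k(0)\bigr)$, and $\zeta_k(t_0+1)-\zeta_k(0)=O(|\zeta_k(t_0+1)|)$ in either sign regime (when $\zeta_k(0)>0$ the trajectory increases, so the difference is at most $|\zeta_k(t_0+1)|$; when $\zeta_k(0)<0$, part~\ref{stat:res-zeta-3} of \cref{lem:bound-zeta} gives $\zeta_k(t_0+1)-\zeta_k(0)\le\eta\tau^2(t_0+1)\zeta_k(0)^2\le c^{(t)}|\zeta_k(0)|\lesssim|\zeta_k(t_0+1)|$, using $t_0+1\lesssim(\eta\tau^2|\zeta_k(0)|)^{-1}$). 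Thus the growth part of the accumulated increments is $O\!\bigl(\tfrac{\mathfrak c^2}{n}\bigr)|\zeta_k(t_0+1)|$, absorbed into $\tfrac{C^{(\phi)}\mathfrak c^2}{n}|\zeta_k(t_0+1)|$ since $C^{(\phi)}\ge8$. The noise part, summed over the $\lesssim(\eta\tau^2|\zeta_k(0)|)^{-1}$ steps, contributes $\lesssim\tfrac1{\tau^2|\zeta_k(0)|}\bigl(\kappa\tau\mathcal L+\tfrac{\kappa^2 d\mathcal L}{n}\bigr)$; rewriting via $\tfrac{\tau}{\kappa\sqrt d}=\tfrac8{\mathfrak c}$, $c^{(n)}=\Theta\!\bigl(\tfrac{n\tau}{\kappa d}\bigr)$, and the lower bound on $|\zeta_k(0)|^2$ from Condition~\ref{stat:cond-phi-3}, this too is at most $\tfrac{C^{(\phi)}\mathfrak c^2}{n}|\zeta_k(t_0+1)|$ because $C^{(\phi)}\ge8$. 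Adding the $\phi_k(0)$ term gives the claim.

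I expect the only real obstacle to be constant bookkeeping, not a conceptual one. The growth part is harmless because of the small prefactor $\tfrac{\kappa^2 d}{\tau^2}=\tfrac{\mathfrak c^2}{64}$ and the telescoping, but the additive noise must be accumulated over the full horizon $t_0=\Theta\!\bigl((\eta\tau^2|\zeta_k(0)|)^{-1}\bigr)$ and forced below the $\tfrac{C^{(\phi)}\mathfrak c^2}{n}$-budget; making this go through is exactly why Condition~\ref{stat:cond-phi-3} lower-bounds $|\zeta_k(0)|$ and why $c^{(n)}$ ties $n$ to $\tfrac{\kappa d}{\tau}$. Threading the sign split on $\zeta_k(0)$ (a growing vs.\ an essentially stationary $\zeta_k$-trajectory) through each estimate, and confirming that the hypotheses of \cref{lem:bound-zeta} hold here (in the outer induction the $\phi$-bound is maintained for every sample in the cluster, not only for $k$), are the remaining routine tasks.
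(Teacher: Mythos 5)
Your proposal follows essentially the same route as the paper's proof: maintain the invariant by telescoping the $\phi_k$-increments from \cref{eq:descrete-phi}, bound each increment by a small multiple of $\tfrac{\kappa^2 d}{n}\zeta_k(t)^2$ using Conditions~2 and~4 (via $|\phi_k(t)|+|\zeta_k(t)|\le 2|\zeta_k(t)|$), telescope against $\zeta_k$'s own growth using the increment bounds of \cref{lem:bound-zeta} with the same sign split on $\zeta_k(0)$, and absorb the additive noise via the lower bound on $|\zeta_k(0)|$ from Condition~3. The only cosmetic difference is that the paper folds the noise term into the per-step main term (as in \cref{eq:phi-noise-term}) rather than summing it separately over the horizon, which changes nothing substantive.
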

\begin{proof}~ 

\begin{itemize}
    \item From Lemma \ref{lem:bound-zeta}, we have:    
\begin{equation}
    \forall t \leq t_0, \left|\zeta_k(t)\right| > \frac{1}{2} \left|\zeta_k(0)\right| 
\end{equation}
and \begin{align}
\left(\frac{1}{8} - \frac{\mathfrak  c}{2}\right) \eta \tau^2 \zeta_k(t_0)^2 \leq \zeta_k(t_0 + 1) - \zeta_k(t_0) \leq  \left(\frac{1}{8} + \frac{\mathfrak  c}{2}\right) \eta \tau^2 \zeta_k(t_0)^2.\label{eq:phi-bound-update-of-zeta}
\end{align}
\item Let $s = \frac{5}{2}\kappa \tau \mathcal L$, from Condition \ref{stat:cond-phi-4} and the update rule \cref{eq:descrete-phi} we have \begin{align}
 \phi_k(t+1) - \phi_k(t) \in \eta \left[\frac{\kappa^2d}{4n} \sigma'(|\phi_k(t)| + |\zeta_k(t)|) - s ,  \frac{3\kappa^2d}{4n} \sigma'(|\phi_k(t)| + |\zeta_k(t)|) + s\right].
\end{align} 

Given Condition \ref{stat:cond-phi-2} and Condition \ref{stat:cond-phi-4}, we have \begin{equation}
|\phi_k(t)| + |\zeta_k(t) | \leq \left(1 + \frac{C^{(\phi)} \mathfrak  c^2}{n} + \frac{\mathfrak  c}{4}\right) |\zeta_k(t)|\leq 2 |\zeta_k(t)|. \label{eq:phi-1}
\end{equation}

Since $\sqrt{ \frac{32 \mathcal Ln}{\mathfrak  c^2\sqrt{d}}} \leq |\zeta_k(0)| \leq 2 |\zeta_k(t)|$,  we have \begin{align}
s = \frac{5}{2}\kappa\tau\mathcal L \leq \frac{5 \kappa ^2 d }{ n} \zeta_k(0)^2 \leq \frac{20 \kappa^2 d}{ n}\zeta_k(t)^2. \label{eq:phi-noise-term}
\end{align}

Combining \cref{eq:phi-1,eq:phi-noise-term}we have \begin{align}
\frac{3\kappa^2d}{4n} \sigma'(\phi_k(t) + \zeta_k(t)) + s \leq \frac{3\kappa^2 d}{2n}\zeta(t)^2 + \frac{20 \kappa^2 d}{n}\zeta(t)^2 \leq  \frac{30 \kappa^2 d}{n}\zeta(t)^2 .
\end{align}
On the other hand, we have \begin{align}
\frac{\kappa^2d}{4n} \sigma'(\phi_k(t) - \zeta_k(t)) - s \geq \frac{\kappa^2 d}{2n}\zeta(t)^2 - \frac{20 \kappa^2 d}{n}\zeta(t)^2 \geq - \frac{30  \kappa^2 d}{2n}\zeta(t)^2 .
\end{align}
In summary we have \begin{align}
    |\phi_k(t + 1) - \phi_k(t)| \leq \frac{30 \eta \kappa^2 d}{n}\zeta(t)^2.
\end{align}
\item Notice that $C^{(\phi)} \geq \frac{1}{\frac{1}{8} - \frac{\mathfrak  c }{2}}$. From \cref{eq:phi-bound-update-of-zeta}, we know that $\zeta_k(t_0)^2 \leq \frac{C^{(\phi)}}{\eta \tau^2} \left[\zeta(t_0 + 1) -\zeta(t_0) \right]$.  If $\zeta_k(0) > 0$, we have
\begin{align}
\left|\phi_k(t + 1)\right| & \leq \left|\phi_k(t)\right| + |\phi_k(t + 1) - \phi_k(t)|
\\ & \leq \frac{ \mathfrak  c }{8}  \zeta_k(0) + \frac{C^{(\phi)} \kappa^2 d}{n \tau^2} \zeta_k(t) + \frac{30 \eta \kappa^2 d}{n}\zeta_k(t)^2
\\ & \leq  \frac{ \mathfrak  c }{8}  \zeta_k(0) + \frac{C^{(\phi)} \kappa^2 d}{n \tau^2}\zeta_k(t) + \frac{30 \eta \kappa^2 d}{n} \times \frac{C^{(\phi)}}{\eta \tau^2}\left[\zeta(t_0 + 1) -\zeta(t_0) \right]
\\ & = \frac{ \mathfrak  c }{8} \zeta_k(0) +  \frac{C^{(\phi)} \kappa^2 d}{n\tau^2}\zeta_k(t + 1).
\\ & \leq \frac{ \mathfrak  c }{8} \zeta_k(0) +  \frac{C^{(\phi) } \mathfrak  c^2}{n}\zeta_k(t + 1).
\end{align}
While if $\zeta_k(0) < 0$, then we have $|\zeta_k(t)| \leq |\zeta_k(0)|$, and since $c ^{(t)} \leq C^{(\phi)}$, we have
\begin{align}
\left|\phi_k(t + 1)\right| & \leq |\phi_k(0)| + \sum_{i=1}^{t_0} \frac{ \eta \kappa^2 d}{n} \zeta_k(t)^2
\\ & \leq \frac{ \mathfrak  c }{8}  |\zeta_k(0)|  + t_0 \times \frac{ \eta \kappa^2 d}{n} \zeta_k(0)^2
\\ & \leq \frac{ \mathfrak  c }{8}  |\zeta_k(0)|  +  \frac{c^{(t)}  \kappa^2 d}{n\tau^2} |\zeta_k(0)|
\\ & \leq \frac{ \mathfrak  c }{8}  |\zeta_k(0)|  +  \frac{2 c^{(t)}  \kappa^2 d}{n\tau^2} |\zeta_k(t+1)|
\\ & \leq \frac{ \mathfrak  c }{8}  |\zeta_k(0)|  + \frac{C^{(\phi)} \mathfrak  c^2}{n}|\zeta_k(t+1)|.
\end{align}

\end{itemize}

\end{proof}

Combining Lemma \ref{lem:bound-zeta} and Lemma \ref{lem:bound-phi}, we have the following conclusion:
\begin{corollary}\label{cor:main}
Let $s \in \{1,2\}$. Suppose $\zeta^{(s)}$ is initialized as $\zeta^{(s)}(0)$ and $\phi_k(k)$ is initialized as $\phi_k(0)$ for any $k \in \mathscr C_s$, and there exists constants $c^{(t)} \in \left(0,\frac{8}{1 + 8  \mathfrak  c}\right)$ and $8 \leq C^{(\phi)} \leq n$ such that the following conditions hold for  $t_0 \leq c^{(t)}\left(\tau^2 \eta |\zeta^{(s)}(0)| \right)^{-1}$ and any $k \in \mathscr C_s$:\begin{enumerate}
    \item $ \sqrt{ \frac{32 \mathcal Ln}{\mathfrak  c^2\sqrt{d}}} \leq |\zeta^{(s)}(0)| \leq \frac{\mathfrak  c\left(8 - {c^{(t)}}\right)}{16 \eta \tau^2}$;
    \item $|\phi_k(0)| \leq   \frac{\mathfrak  c}{8} |\zeta^{(s)}(0)|$; 
    \item $\mathfrak  c \leq \min\left\{\frac{1}{4} - \frac{2}{C^{(\phi)}}, \frac{nC^{(\phi)}}{4},\left(\frac{n}{48}\right)^{\frac{1}{3}} , \frac{1}{8}\right\}$;
        \end{enumerate}
and $\phi_k$ is updated through \cref{eq:descrete-phi}, $\zeta^{(s)}$ is updated through \cref{eq:descrete-zeta}, then we have for all $t \leq t_0$ and $k \in \mathscr C_s$, we have the following conclusions:
\begin{enumerate}[label={\alph*)}]
    \item if $\zeta^{(s)}(0) > 0$, then $\left[\zeta^{(s)}(0)^{-1} - \left(
    \frac{1}{8} - \mathfrak  c\right)\eta \tau^2 t\right]^{-1} \leq \zeta^{(s)}(t) \leq \left[\zeta^{(s)}(0)^{-1} - \left(\frac{1}{8} + \mathfrak  c\right) \eta \tau^2 t\right]^{-1}$, while if $\zeta^{(s)}(0) < 0$, then $\zeta^{(s)}(0) \leq \zeta^{(s)}(t) \leq 0$;
    \item $|\phi_k(t)| \leq \frac{C^{(\phi)} \mathfrak  c^2}{n} \left|\zeta^{(s)}(t)\right | + \frac{\mathfrak  c}{8} |\zeta^{(s)}(0)|$.
\end{enumerate}
\end{corollary}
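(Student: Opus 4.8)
The plan is to prove the two conclusions (a) and (b) simultaneously by induction on $t$, using Lemma~\ref{lem:bound-zeta} to advance the two-sided bound on $\zeta^{(s)}$ by one step and Lemma~\ref{lem:bound-phi} to advance each bound on $\phi_k$ by one step. The first thing I would record is that this is a self-contained dynamics: for $k\in\mathscr C_s$ we have $\zeta_k=\zeta^{(s)}$, and the update rules \cref{eq:descrete-zeta} and \cref{eq:descrete-phi} couple $\zeta^{(s)}$ only to the $\phi_k$ with $k\in\mathscr C_s$ (the $\zeta^{(s)}$-update sums over $\phi_i$ with $\boldsymbol{x}_i\in\mathscr C_s$ alone), so $\{\zeta^{(s)}\}\cup\{\phi_k:k\in\mathscr C_s\}$ evolves as a closed system and it suffices to track exactly these quantities.

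I would take as the inductive hypothesis at stage $t'$ the \emph{sharp} (lemma-form) versions of (a) and (b) for all $t\le t'$ and all $k\in\mathscr C_s$; that is, Conditions~\ref{stat:cond-zeta-1} and~\ref{stat:cond-zeta-2} of Lemma~\ref{lem:bound-zeta} (equivalently Conditions~\ref{stat:cond-phi-1}--\ref{stat:cond-phi-2} of Lemma~\ref{lem:bound-phi}), with ``$t_0$'' replaced by $t'$. The base case $t'=0$ is immediate: the $\zeta$-bound holds with equality at $t=0$, and the $\phi$-bound at $t=0$ is Condition~2 of the corollary together with $\frac{C^{(\phi)}\mathfrak c^2}{n}|\zeta^{(s)}(0)|\ge 0$. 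For the inductive step, assume the hypothesis at some $t'<t_0$. I would first verify that Lemma~\ref{lem:bound-zeta} applies with its ``$t_0$'' set to $t'$: Conditions~\ref{stat:cond-zeta-1}--\ref{stat:cond-zeta-2} are the inductive hypothesis (using the closed-system observation above, so having the $\phi$-bound only for indices in $\mathscr C_s$ suffices), Condition~\ref{stat:cond-zeta-3} is Condition~1 of the corollary, Condition~\ref{stat:cond-zeta-4} follows from Condition~3 of the corollary (which only adds terms to the minimum), and $t'<c^{(t)}(\tau^2\eta|\zeta^{(s)}(0)|)^{-1}$ since $t'<t_0\le c^{(t)}(\tau^2\eta|\zeta^{(s)}(0)|)^{-1}$; then Result~\ref{stat:res-zeta-3} delivers the $\zeta^{(s)}$-bound at $t'+1$. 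With that bound now available up to $t'+1$, I would apply Lemma~\ref{lem:bound-phi} for each $k\in\mathscr C_s$, again with ``$t_0$''$=t'$: Conditions~\ref{stat:cond-phi-1}--\ref{stat:cond-phi-2} are the just-extended hypothesis plus Condition~2 of the corollary, and Conditions~\ref{stat:cond-phi-3}--\ref{stat:cond-phi-4} are Conditions~1 and~3 of the corollary; its conclusion is exactly the $\phi_k$-bound at $t'+1$. This advances the hypothesis to $t'+1$ and closes the induction. Reading it off for each $t\le t_0$ gives (b) and the $\zeta^{(s)}(0)>0$ case of (a) verbatim; for the $\zeta^{(s)}(0)<0$ case one coarsens the sharp upper bound, noting $\zeta^{(s)}(0)+(\tfrac18+\mathfrak c)\eta\tau^2 t|\zeta^{(s)}(0)|^2<0$ for $t\le t_0$ because $(\tfrac18+\mathfrak c)\,c^{(t)}<1$ by $c^{(t)}<\tfrac{8}{1+8\mathfrak c}$, which is what the corollary records as $\zeta^{(s)}(0)\le\zeta^{(s)}(t)\le 0$.

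I do not expect a new analytic difficulty here, since all the estimates sit inside Lemmas~\ref{lem:bound-zeta} and~\ref{lem:bound-phi}; the main obstacle is bookkeeping. The point that genuinely needs care is the coupling: advancing either $\zeta^{(s)}$ or a $\phi_k$ from $t'$ to $t'+1$ uses \emph{both} bounds at all earlier times, so the two estimates cannot be proved one after the other and must be carried as a single joint invariant. A secondary nuisance is that the $\zeta^{(s)}$-bound appears in slightly different forms across the three statements (the sharp coefficients $\tfrac18\pm\mathfrak c$ in the lemma hypotheses versus the coarser ``$\le 0$'' recorded in the corollary, and the harmless gap between the coefficient written in Result~\ref{stat:res-zeta-3} and the sharper one its proof actually establishes in the $\zeta^{(s)}(0)<0$ branch); I would propagate the sharp form through the induction and coarsen only at the very end to avoid circularity.
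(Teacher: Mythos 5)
Your proposal is correct and is essentially the paper's own argument made explicit: the paper proves the corollary simply by saying ``combining Lemma~\ref{lem:bound-zeta} and Lemma~\ref{lem:bound-phi},'' and the intended content of that phrase is precisely the joint induction you describe, with the two lemmas serving as the inductive step for $\zeta^{(s)}$ and $\phi_k$ respectively. Your added care about the closed-system coupling (so that the ``$\forall i\le n$'' hypothesis of Lemma~\ref{lem:bound-zeta} only needs to be supplied for $i\in\mathscr C_s$) and about coarsening the negative-initialization bound to $\zeta^{(s)}(t)\le 0$ via $(\tfrac18+\mathfrak c)c^{(t)}<1$ fills in details the paper leaves implicit.
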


\subsection{Analysis of Initialization}

Next, we consider the initialization conditions in \cref{cor:main}. In this section, we will show that, either the initialization conditions in \cref{cor:main} will be satisfied at some point, or $\zeta$ and $\phi$ will stay small all the time.

Notice that $\zeta_k(t)$ is initialized as $\zeta_k(0) = {\boldsymbol{w}}(0)^\top {\boldsymbol{\mu}}_k \sim \mathcal N({\boldsymbol{0}}, \tau^2 \omega^2)$ and $\phi_k(t)$ is initialized as $\phi_k(0) = {\boldsymbol{w}}(0)^\top {\boldsymbol{\xi}}_k$ where ${\boldsymbol{w}}(0) \sim \mathcal N({\boldsymbol{0}}, \omega^2{\boldsymbol{I}})$ and ${\boldsymbol{\xi}}_k \sim \mathcal N\left({\boldsymbol{0}}, \kappa^2 {\boldsymbol{I}}\right)$. We have 
\begin{lemma}\label{lem:bound-zeta-phi-init}
    For $s \in \{1,2\}$, and any constant $P$ we have $\forall k \in \mathscr C_s, \phi_k(0) \leq P \mathfrak c \zeta^{(s)}(0)$ with probability higher than $1 - \delta$, where $\delta = \frac{n}{2}\exp(-d/64) + \frac{n}{2}\exp(-2P \mathfrak  c^{-1}) + \mathfrak  c^{1/2}$.
\end{lemma}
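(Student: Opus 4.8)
The plan is to prove the two-sided estimate $|\phi_k(0)|\le P\mathfrak c\,|\zeta^{(s)}(0)|$ simultaneously for every $k\in\mathscr C_s$; this is the form actually used downstream (e.g. in \cref{cor:main}), and it gives the one-sided inequality in the regime $\zeta^{(s)}(0)>0$. The relevant structure is that $\zeta^{(s)}(0)={\boldsymbol w}(0)^\top{\boldsymbol\mu}^{(s)}\sim\mathcal N(0,\tau^2\omega^2)$ is a single random variable shared by all $k\in\mathscr C_s$, whereas $\phi_k(0)={\boldsymbol w}(0)^\top{\boldsymbol\xi}_k$ also uses the independent noise ${\boldsymbol\xi}_k$; conditioning on ${\boldsymbol\xi}_k$, the quantity $\phi_k(0)$ is, as a function of the random ${\boldsymbol w}(0)$, a centered Gaussian with variance $\omega^2\|{\boldsymbol\xi}_k\|^2$, and since $|{\boldsymbol\xi}_k^\top{\boldsymbol\mu}^{(s)}|\ll\|{\boldsymbol\xi}_k\|\,\tau$ the pair $(\phi_k(0),\zeta^{(s)}(0))$ is only weakly correlated, so after subtracting its (tiny) projection onto $\zeta^{(s)}(0)$ we may treat $\phi_k(0)$ as essentially independent of $\zeta^{(s)}(0)$. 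Accordingly the argument splits into three ingredients, whose failure probabilities, after a union bound over the $|\mathscr C_s|=n/4$ indices, reproduce the three terms of $\delta$.

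First, a lower bound on $|\zeta^{(s)}(0)|$: applying \cref{lem:gaussian-anti-concentrate} to $\zeta^{(s)}(0)\sim\mathcal N(0,\tau^2\omega^2)$ with parameter $\mathfrak c^{1/2}$ gives $|\zeta^{(s)}(0)|\ge\tau\omega\,\mathfrak c^{1/2}$ outside an event of probability at most $\mathfrak c^{1/2}$; this event concerns only the direction ${\boldsymbol\mu}^{(s)}$ and is charged once, yielding the last term of $\delta$. Second, an upper bound on each noise norm: \cref{lem:tail-bound-gaussian-norm} gives $\|{\boldsymbol\xi}_k\|^2\le 2\kappa^2 d$ with failure probability at most $2e^{-d/64}$ for each $k$, and a union bound over $k\in\mathscr C_s$ contributes $\tfrac n2 e^{-d/64}$. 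Third, on these good events, $\phi_k(0)$ is (conditionally on ${\boldsymbol\xi}_k$, after the rank-one correction above) Gaussian with standard deviation at most $\sqrt2\,\omega\kappa\sqrt d$, while the threshold is $P\mathfrak c\,|\zeta^{(s)}(0)|\ge P\mathfrak c\tau\omega\,\mathfrak c^{1/2}$; substituting $\mathfrak c=8\kappa\sqrt d/\tau$, equivalently $\tau^2/(\kappa^2 d)=64/\mathfrak c^2$, into the standard Gaussian tail bound turns the exponent into a power of $\mathfrak c$, which I would calibrate (adjusting the comparison so that the target tail $e^{-2P\mathfrak c^{-1}}$ is met) to conclude that each $k$ violates the bound with probability at most $2e^{-2P\mathfrak c^{-1}}$; a union bound over $\mathscr C_s$ then gives $\tfrac n2 e^{-2P\mathfrak c^{-1}}$. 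Summing the three contributions produces the stated $\delta$.

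The hardest part is the third ingredient. The \emph{typical} magnitude of $|\phi_k(0)|$ is already of order $(\mathfrak c/8)\,|\zeta^{(s)}(0)|$, so the target inequality has only an $O(1)$ multiplicative margin; consequently the Gaussian tail must be combined with the lower bound on $|\zeta^{(s)}(0)|$ \emph{quantitatively} (not just qualitatively), and the event on which $\zeta^{(s)}(0)$ dips below its scale has to be controlled at exactly the same resolution as the tail probability one is trying to achieve — balancing these two against each other, and against the union bound over $n/4$ samples, is the delicate calculation. A secondary but persistent nuisance is bookkeeping: $\zeta^{(s)}(0)$, all the $\phi_k(0)$, and $\|{\boldsymbol w}(0)\|$ are deterministic functions of the single vector ${\boldsymbol w}(0)$, so the conditioning order (fix the ${\boldsymbol\xi}_k$'s, expose the weak $\phi$--$\zeta$ correlation via a rank-one decomposition, then use the independence of the ${\boldsymbol\xi}_k$ across $k$) must be respected for the union bound to be legitimate.
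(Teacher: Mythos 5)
Your decomposition is exactly the paper's: (i) anti-concentration for $\zeta^{(s)}(0)\sim\mathcal N(0,\tau^2\omega^2)$ at level $T\omega\tau$ with $T=\mathfrak c^{1/2}$, costing $\mathfrak c^{1/2}$; (ii) $\|\boldsymbol{\xi}_k\|^2\le 2\kappa^2 d$ for all $k\in\mathscr C_s$, costing $\tfrac n2 e^{-d/64}$; (iii) a conditional Gaussian tail for $\phi_k(0)$ at level $2S\omega\kappa\sqrt d$, costing $\tfrac n2 e^{-2S^2}$; then a union bound. One simplification: the rank-one decorrelation of $\phi_k(0)$ from $\zeta^{(s)}(0)$ is unnecessary. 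You are intersecting finitely many events and bounding the complement by a union bound, which needs no independence at all; the paper just bounds each failure probability separately and adds.

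The genuine gap is the step you defer (``I would calibrate \dots so that the target tail $e^{-2P\mathfrak c^{-1}}$ is met''): that calibration cannot be done, and you have sensed it is delicate without seeing that it fails. On the good event the threshold to beat is $P\mathfrak c\,|\zeta^{(s)}(0)|\ge P\mathfrak c\cdot\mathfrak c^{1/2}\omega\tau$, while the conditional standard deviation of $\phi_k(0)$ is $\omega\|\boldsymbol{\xi}_k\|\approx\omega\kappa\sqrt d=\tfrac{\mathfrak c}{8}\omega\tau$ (using $\mathfrak c=8\kappa\sqrt d/\tau$). The threshold is therefore only about $8P\mathfrak c^{1/2}$ standard deviations, which for $\mathfrak c\le\tfrac18$ and the values $P\in\{\tfrac18,\tfrac1{64}\}$ used downstream is below one standard deviation, so the per-sample failure probability is a constant (heuristically $\phi_k(0)/\zeta^{(s)}(0)$ is $\tfrac{\mathfrak c}{8}$ times a Cauchy variable, so the violation probability is about $1-\tfrac2\pi\arctan(8P)$), nowhere near $e^{-2P\mathfrak c^{-1}}$. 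Formally, your plan needs $2S\omega\kappa\sqrt d\le P\mathfrak c\cdot T\omega\tau$, i.e.\ $S\le 4PT=4P\mathfrak c^{1/2}$, and simultaneously $e^{-2S^2}\le e^{-2P\mathfrak c^{-1}}$, i.e.\ $S\ge\sqrt{P/\mathfrak c}$; these are incompatible once $\mathfrak c<\tfrac{1}{16P}$. For reference, the paper's own proof makes the same choices $S=\tfrac P2\mathfrak c^{-1/2}$, $T=\mathfrak c^{1/2}$ and then asserts $\frac{2S\omega\kappa\sqrt d}{T\omega\tau}=P\mathfrak c$, but since $\kappa\sqrt d/\tau=\mathfrak c/8$ that quantity equals $P/8$; the route delivers only the weaker ratio bound $P/8$, and your plan, carried out honestly, lands in the same place.
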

\begin{proof}
    Let event $\mathcal A = \{\frac{1}{2}\kappa^2 d \leq \|{\boldsymbol{\xi}}_k\|^2 \leq 2 \kappa^2 d\}$. From Lemma \ref{lem:tail-bound-gaussian-norm}, we have $\mathbb P(\mathcal A) \geq 1 - 2\exp(-d/64)$. Conditioned on $\mathcal A$, for any constant $S$ we have $\mathbb P\{ |\phi_k(0)| \leq 2 S \omega \kappa \sqrt{d}\} \geq 1 - 2\exp\left(-2S^2\right)$. From union bound we have $|\phi_k(0)| \leq 2 S \omega \kappa \sqrt{d}$ holds for all $k \in \mathscr C_s$ with probability at least $1-\frac{n}{2}\exp(-d/64)-\frac{n}{2}\exp\left(-2S^2\right)$.
    
    From Lemma \ref{lem:gaussian-anti-concentrate}, for any constant $T$ we have $\mathbb P\{ |\zeta_k(0)| \geq T \omega \tau\} \leq 1- T$.
    
    Let $S = \frac{P}{2} \mathfrak  c^{-1/2}$ and $T = \mathfrak  c^{1/2}$ we have with probability at least $1 -\frac{n}{2}\exp(-d/64) - \frac{n}{2}\exp(-2P \mathfrak  c^{-1}) - \mathfrak  c^{1/2}$, \begin{align}
       \forall k \in \mathscr C_s, \frac{|\phi_k(0)|}{|\zeta^{(s)}(0)|} \leq \frac{2 S \omega \kappa \sqrt{d}}{T \omega \tau} = P\mathfrak  c,
    \end{align}
    which proves the proposition.
    
\end{proof}

\begin{lemma}\label{lem:initialize-geq}
    For any $s \in \{1,2\}$, if $|\zeta^{(s)}(0)| \geq \sqrt{ \frac{18 \mathcal Ln}{\mathfrak  c^2\sqrt{d}}}$, then the following conclusions hold with probability at least $1 - \delta$, where $\delta = 2 \exp\left(-\frac{\mathfrak  c^2\left(1 - \frac{c^{(t)}}{8}\right)^2 }{8 \eta^2 \omega^2 \tau^6}\right) + \frac{n}{2}\exp(-d/64) + \frac{n}{2}\exp(-\frac{1}{4 \mathfrak c }) + \mathfrak  c^{1/2}$: \begin{enumerate}[label={\alph*)}]
        \item $|\zeta^{(s)}(0)| < \frac{\mathfrak  c\left(8 - {c^{(t)}}\right)}{16 \eta \tau^2}$;
        \item $\forall k \in \mathscr C_s, |\phi_k(0)| \leq \frac{\mathfrak  c}{8} |\zeta_k(0)|$.
    \end{enumerate}
\end{lemma}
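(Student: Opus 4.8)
The plan is to treat conclusions (a) and (b) separately and finish with a union bound: conclusion (b) will follow immediately from \Cref{lem:bound-zeta-phi-init}, and conclusion (a) is a one-line Gaussian upper-tail estimate, so the only real work is to chase constants so that the two failure probabilities add up to exactly the stated $\delta$.

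For (a), recall that $\zeta^{(s)}(0) = {\boldsymbol{w}}(0)^\top {\boldsymbol{\mu}}^{(s)}$ with ${\boldsymbol{w}}(0)\sim\mathcal N({\boldsymbol{0}},\omega^2{\boldsymbol{I}})$ and $\|{\boldsymbol{\mu}}^{(s)}\|_2=\tau$, hence $\zeta^{(s)}(0)\sim\mathcal N(0,\omega^2\tau^2)$; also $8-c^{(t)}>0$ since $c^{(t)}\in(0,\tfrac{8}{1+8\mathfrak c})$. Applying the standard Gaussian tail bound $\mathbb P\{|Z|\ge t\}\le 2\exp(-t^2/(2\sigma^2))$ with $\sigma=\omega\tau$ and $t=\frac{\mathfrak c(8-c^{(t)})}{16\eta\tau^2}$, and simplifying via $(8-c^{(t)})^2=64(1-c^{(t)}/8)^2$ together with $512/64=8$, yields $\mathbb P\{|\zeta^{(s)}(0)|\ge \frac{\mathfrak c(8-c^{(t)})}{16\eta\tau^2}\}\le 2\exp\!\big(-\tfrac{\mathfrak c^2(1-c^{(t)}/8)^2}{8\eta^2\omega^2\tau^6}\big)$, which is exactly the first term of $\delta$.

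For (b), note that every $k\in\mathscr C_s$ has $\zeta_k(0)=\zeta^{(s)}(0)$, so the desired bound $|\phi_k(0)|\le\tfrac{\mathfrak c}{8}|\zeta_k(0)|$ for all $k\in\mathscr C_s$ is precisely the conclusion of \Cref{lem:bound-zeta-phi-init} with $P=\tfrac18$ (whose proof in fact produces the absolute-value version $|\phi_k(0)|\le P\mathfrak c\,|\zeta^{(s)}(0)|$ uniformly over $k\in\mathscr C_s$). Instantiating the failure probability of that lemma at $P=\tfrac18$ gives $\frac n2\exp(-d/64)+\frac n2\exp(-\tfrac{1}{4\mathfrak c})+\mathfrak c^{1/2}$, which matches the remaining three terms of $\delta$; a union bound over the two bad events then proves the lemma.

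Finally, the hypothesis $|\zeta^{(s)}(0)|\ge\sqrt{18\mathcal L n/(\mathfrak c^2\sqrt d)}$ is not actually used in either estimate (both are unconditional), but it marks this as the ``large $\zeta$'' branch in which the downstream dynamics lemmas \Cref{lem:bound-zeta,lem:bound-phi,cor:main} are invoked, and one should check — it is immediate from the parameter conditions of \Cref{thm:main} — that this lower bound is consistent with the upper bound proved in (a). I do not expect any genuine obstacle here; the statement is essentially bookkeeping that packages a Gaussian tail bound together with \Cref{lem:bound-zeta-phi-init}, and the only point demanding care is verifying that the constants in the tail for (a) collapse to exactly the exponent appearing in $\delta$.
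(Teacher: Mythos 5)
Your proposal is correct and follows essentially the same route as the paper: a Gaussian (Chernoff) tail bound for conclusion (a), an application of \cref{lem:bound-zeta-phi-init} with $P=\tfrac18$ for conclusion (b), and a union bound; your constant-chasing for the exponent $\tfrac{\mathfrak c^2(1-c^{(t)}/8)^2}{8\eta^2\omega^2\tau^6}$ checks out. Your observation that the hypothesis on $|\zeta^{(s)}(0)|$ is not actually needed for either estimate is accurate and, if anything, your write-up of part (b) is cleaner than the paper's.
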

\begin{proof}
    From Chernoff bound, we have $|\zeta^{(s)}(0)| \leq \frac{\mathfrak  c\left(8 - {c^{(t)}}\right)}{16 \eta \tau^2}$ with probability at least $1 - \delta_1$, where $\delta_1 = 2 \exp\left(-\frac{\mathfrak  c^2\left(1 - \frac{c^{(t)}}{8}\right)^2 }{8 \eta^2 \omega^2 \tau^6}\right)$. From Lemma \ref{lem:bound-zeta-phi-init}, we have $\forall k \in \mathscr C_s, |\phi_k(0)| \leq \frac{\mathfrak  c}{8} \times \sqrt{ \frac{18 \mathcal Ln}{\mathfrak  c^2\sqrt{d}}} \leq \frac{\mathfrak  c}{8} |\zeta_k(0)|$ holds with probability at least $1-\delta_2$, where $\delta_2 =  \frac{n}{2}\exp(-d/64) + \frac{n}{2}\exp(-\frac{1}{4 \mathfrak c }) + \mathfrak  c^{1/2}$. The proposition is proved through union bounding the probabilities of these two quantities.
\end{proof}

\begin{lemma}\label{lem:initialize-leq}
    Let $q = \frac{32 \mathcal Ln}{\mathfrak  c^2\sqrt{d}}$. For $s \in \{1,2\}$, if $|\zeta^{(s)}(0)| \leq \sqrt{q} $, and there exists a constant $c^{(t)} \in \left(0,\frac{8}{1 + 8  \mathfrak  c}\right)$ such that $t_0 \leq c^{(t)}\left(\eta \tau^2 \sqrt{q}\right)^{-1}$ we have $|\zeta^{(s)}(t_0-1)| \leq \sqrt{q}$, $|\zeta^{(s)}(t_0 )| \geq \sqrt{ q }$ and :
    \begin{enumerate}
        \item $\mathfrak  c \leq \frac{1}{8}$;
        \item $\eta \leq \min\left\{1 , \frac{\mathfrak  c
        \left(8 - c^{(t)}\right)}{16 \tau^2 \left(\sqrt{q} + \tau^2 q\right)}\right\}$, \label{stat:cond-init-eta}
    \end{enumerate}
    then the following conclusions hold with probability at least $1 - \delta$, where $\delta = \frac{n}{2}\exp(-d/64) + \frac{n}{2}\exp(-\frac{1}{4 \mathfrak c }) + \mathfrak  c^{1/2}$: \begin{enumerate}[label={\alph*)}]
        \item $|\zeta^{(s)}(t_0)| < \frac{\mathfrak  c\left(8 - {c^{(t)}}\right)}{16 \eta \tau^2}$;
        \item $\forall k \in \mathscr C_s, |\phi_i(t_0)| \leq \frac{\mathfrak  c}{8} \sqrt{q}$.
    \end{enumerate}
    Moreover, if $|\zeta^{(s)}(t)| < \sqrt{q}$ holds for all $t < c^{(t)}\left(\eta\tau^2 \sqrt{q}\right)^{-1}$, then we also have $\forall i \in \mathscr C_s, \phi_i(t) \leq \frac{\mathfrak  c}{8}\sqrt{q}$.
\end{lemma}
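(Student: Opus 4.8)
The plan is to read this lemma as the \emph{sub-threshold} companion of \cref{cor:main}: as long as $|\zeta^{(s)}(t)|$ has not yet reached the threshold $\sqrt q$, the noise projections $\phi_i$ cannot accumulate, and when $\zeta^{(s)}$ first exceeds $\sqrt q$ (at time $t_0$) it overshoots only slightly, so that the initialization hypotheses of \cref{cor:main} hold at $t_0$. Throughout, $t_0$ denotes the first crossing time, so $|\zeta^{(s)}(t')|<\sqrt q$ for all $t'<t_0$ while $|\zeta^{(s)}(t_0-1)|\le\sqrt q\le|\zeta^{(s)}(t_0)|$.

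\textbf{Step 1 (the only probabilistic step): initialization of $\phi$.} Invoke \cref{lem:bound-zeta-phi-init} with the constant $P=\tfrac18$: with probability at least $1-\delta$, where $\delta$ is exactly the quantity in the statement, $|\phi_i(0)|\le\tfrac{\mathfrak c}{8}|\zeta^{(s)}(0)|\le\tfrac{\mathfrak c}{8}\sqrt q$ for every $i\in\mathscr C_s$ (using $|\zeta^{(s)}(0)|\le\sqrt q$). Everything that follows is deterministic given this event together with the two hypotheses of the lemma.

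\textbf{Step 2 ($\phi$ stays small while $\zeta^{(s)}$ is sub-threshold).} Prove by induction on $t\le t_0$ that $|\phi_i(t)|\le\tfrac{\mathfrak c}{8}\sqrt q$ for all $i\in\mathscr C_s$. In the inductive step we have $|\zeta_i(t-1)|<\sqrt q$ (since $t-1\le t_0-1$), hence $|\phi_i(t-1)|+|\zeta_i(t-1)|\le(1+\tfrac{\mathfrak c}{8})\sqrt q\le 2\sqrt q$ and therefore $\sigma'(\phi_i(t-1)+\zeta_i(t-1))=(\phi_i+\zeta_i)^2\le 4q$; substituting this into \cref{eq:descrete-phi} bounds the one-step increment by $\eta\cdot O\!\big(\tfrac{\kappa^2 d}{n}q+\tfrac{\kappa^2 d\mathcal L}{n}+\kappa\tau\mathcal L\big)$. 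Summing over at most $t_0\le c^{(t)}(\eta\tau^2\sqrt q)^{-1}$ steps and using $\tfrac{\kappa^2 d}{\tau^2}=\tfrac{\mathfrak c^2}{64}$ and $q=\tfrac{32\mathcal L n}{\mathfrak c^2\sqrt d}$, the accumulated drift reduces to a small multiple of $\mathfrak c\sqrt q$ under the standing parameter conditions (the binding inequality is of the form $\mathfrak c^3\sqrt d\lesssim n^2$, up to the constants $c^{(t)},c^{(n)}$, which is comfortably implied by $\mathfrak c\ll n^{1/2}d^{-1/6}$); together with Step 1 this keeps $|\phi_i(t)|\le\tfrac{\mathfrak c}{8}\sqrt q$, closing the induction (possibly after absorbing the drift by a cosmetic adjustment of the constant $P$). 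Setting $t=t_0$ yields conclusion (b); running the same induction over the whole window in the no-crossing case yields the ``moreover'' statement.

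\textbf{Step 3 ($\zeta^{(s)}$ does not overshoot at $t_0$), and the main obstacle.} Apply the one-step update \cref{eq:descrete-zeta} from $t_0-1$ to $t_0$: with $|\zeta^{(s)}(t_0-1)|\le\sqrt q$ and $|\phi_i(t_0-1)|\le\tfrac{\mathfrak c}{8}\sqrt q$ from Step 2, the sum over $\mathscr C_s$ (which has $n/4$ samples) contributes at most $\tfrac{\tau^2\eta}{2n}\cdot\tfrac n4\cdot(1+\tfrac{\mathfrak c}{8})^2 q$ and the additive noise $\tfrac32\eta\tau\kappa\mathcal L$ is of strictly lower order (since $\tau^2 q\gg\tau\kappa\mathcal L$), so $|\zeta^{(s)}(t_0)|\le\sqrt q+\tfrac14\eta\tau^2 q$; multiplying by $\eta\tau^2$, using $\eta\le 1$ and then condition \cref{stat:cond-init-eta} gives $\eta\tau^2|\zeta^{(s)}(t_0)|\le\eta\tau^2(\sqrt q+\tfrac14\tau^2 q)<\eta\tau^2(\sqrt q+\tau^2 q)\le\tfrac{\mathfrak c(8-c^{(t)})}{16}$, i.e. $|\zeta^{(s)}(t_0)|<\tfrac{\mathfrak c(8-c^{(t)})}{16\eta\tau^2}$, which is conclusion (a). The delicate part is Step 2: the $\phi$-drift must be summed over a horizon of length $\Theta\!\big((\eta\tau^2\sqrt q)^{-1}\big)$ and shown to remain negligible against $\mathfrak c\sqrt q$, which forces one to unwind all the relations among $\tau,\kappa,d,n,\mathfrak c$; one must also keep the induction strictly below $t_0$, since the bound on $\phi_i(t)$ relies on $|\zeta_i(t-1)|<\sqrt q$ — this is exactly why the conclusion is phrased at the crossing time rather than later.
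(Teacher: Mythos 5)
Your proposal matches the paper's proof essentially step for step: a single application of \cref{lem:bound-zeta-phi-init} to control $\phi_i(0)$, an induction showing the noise projections cannot accumulate while $\zeta^{(s)}$ stays below $\sqrt q$ (with the drift over the horizon $t_0\le c^{(t)}(\eta\tau^2\sqrt q)^{-1}$ reduced to a small multiple of $\mathfrak c\sqrt q$ via $\kappa^2 d/\tau^2=\mathfrak c^2/64$), and a one-step overshoot bound at the crossing time combined with the condition on $\eta$ to get conclusion (a). The only cosmetic difference is that the paper invokes \cref{lem:bound-zeta-phi-init} with $P=\tfrac{1}{64}$ so that the initial value $\tfrac{\mathfrak c}{64}\sqrt q$ plus the accumulated drift $\tfrac{7\mathfrak c}{64}\sqrt q$ sums exactly to $\tfrac{\mathfrak c}{8}\sqrt q$ — precisely the constant adjustment you anticipate at the end of your Step 2.
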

\begin{proof}
    If there does not exists a $t_0$ such that $\zeta^{(s)}(t_0 + 1)| \geq \sqrt{q}$, then we set $t_0 = c^{(t)}\left(\eta\tau^2 \sqrt{ q }\right)^{-1}$ in the following. With this notation we have $|\zeta^{(s)}(t)| \leq \sqrt{ q }$ for all $t < t_0$.
    \begin{itemize}
        \item Consider deduction on $\phi_k$. For a specific $t < t_0$ if for all $t' < t$ we have $|\phi_k(t')| \leq \frac{\mathfrak  c}{8} \times \sqrt{ q }$, then we have \begin{align}
            |\phi_k(t)| & \leq |\phi_k(0)| + \sum_{t' = 1}^{t-1} \left[\frac{3 \eta \kappa^2 d}{4n}\sigma'\left( |\phi_k(t')| + |\zeta^{(s)}(t')| \right) + \frac{5}{2} \eta \kappa \tau \mathcal L\right]
            \\ & \leq |\phi_k(0)| + \sum_{t' = 1}^{t-1} \left[\frac{3 \eta \kappa^2 d}{4n}\left[ \left(1 + \frac{\mathfrak  c}{8} \right) \sqrt{q }\right]^2 + \frac{5 \eta  \kappa^2 d q}{n}\right]
            \\ & \leq |\phi_k(0)| + t_0 \eta \left[\frac{3  \kappa^2 d q}{2n} + \frac{5 \kappa^2 d q}{n}\right]
            \\ & \leq |\phi_k(0)| + 7 c^{(t)} \mathfrak  c^4 \sqrt{q}
            \\ & \leq |\phi_k(0)| + \frac{7 \mathfrak  c}{64} \sqrt{ q }.
        \end{align} 
        From Lemma \ref{lem:bound-zeta-phi-init}, we have $\forall k \in \mathscr C_s, \phi_k(0) \leq \frac{\mathfrak  c}{64}\sqrt{q}$ holds with probability at least $1-\delta_1$, where $\delta_1 = \frac{n}{2}\exp(-d/64) + \frac{n}{2}\exp(-\frac{1}{32 \mathfrak  c}) + \mathfrak  c^{1/2}$. Through union bound, we have all $\frac{n}{4}$ satisfies this condition with probability at least $1 - \frac{n}{4}\delta_1$.
        \item Next, using \cref{eq:descrete-zeta} and Condition \ref{stat:cond-init-eta}, we have \begin{align}
            |\zeta^{(s)}(t_0)| & \leq |\zeta^{(s)}(t_0) - 1| +  \times \frac{\tau^2 \eta}{2n} \times \frac{n}{4} \times \left(\sqrt{q} + \frac{\mathfrak  c}{8} \sqrt{q}\right)^2 + \frac{3 \kappa \tau q \eta}{2}
            \\ & \leq \sqrt{q} + \frac{\eta \tau^2 q}{2} + \frac{3 \mathfrak  c \tau^2 q \eta}{2\sqrt{d}}
            \\ & \leq \frac{\mathfrak  c\left(8 - {c^{(t)}}\right)}{16 \eta \tau^2}.
        \end{align}
    \end{itemize}
    Consider the union bound of the two inequalities, the proposition is proved.
\end{proof}

\subsection{Bounding the Representation Distance}

In this section, we put all the results together and prove our main theorem. In this section we will consider all neurons. For $\zeta^{(s)}$ w.r.t. the $r$-th neuron, we denote it as $\zeta^{(s,r)}$, and for $\phi_k$ we denote it as $\phi_k^{(r)}$. Similar to before if $k \in \mathscr C_s$ we also write $\zeta^{(s,r)}$ as $\zeta_k^{(r)}$.

\begin{theorem}
Suppose the model and training process is described as before, and following conditions hold
\begin{enumerate}
    \item $n \leq d^{1/3}$;
    \item $\sqrt{\frac{32\mathcal L n}{d^{\frac{1}{2}}}}\leq \mathfrak c\leq \min\left\{ \sqrt{\frac{n}{d^{\frac{1}{3}}}} , \frac{1}{8}\right\}$
    \item $\eta \leq \min\left\{1 , \frac{\mathfrak  c^2 \times \left(8\mathfrak c + 4 \omega \tau \sqrt{\log(4m)} \right)}{4 \tau^2\left(\tau^2 + 1\right)}\right\}$;
    \item $\frac{2}{\tau} \times \sqrt{\frac{32 \mathcal Ln}{\mathfrak  c^2 \sqrt{d}}} \leq \omega \leq \frac{1}{4 \tau\sqrt{\log 4m}}$,
\end{enumerate}
then for all $t < t_0 = \frac{1 - 4\omega\tau\sqrt{\log{4m}}}{\frac{1}{8} + \mathfrak  c}\times \left(2\eta \tau^3  \omega \sqrt{\log(4m)}\right)^{-1}$, we have \begin{align}
    \frac{\left\|h\left({\boldsymbol{x}}_{i_1}\right) - h\left({\boldsymbol{x}}_{i_3}\right)\right\|}{\left\|h\left({\boldsymbol{x}}_{i_1}\right) - h\left({\boldsymbol{x}}_{i_2}\right)\right\|} & \geq \sqrt{\frac{1}{24\mathcal L'}} \left[\frac{1}{\frac{\mathfrak  c }{2} + \frac{10}{\sqrt{8 \mathcal L} } \times \frac{\mathfrak  c^3 \sqrt{d}}{n^{1.5}}}\right]^3
    = \Omega\left(\min\left\{\mathfrak  c^{-3}, \mathfrak  c^{-9}n^\frac{9}{2}d^{-\frac{3}{2}}\right\}\right)
\end{align}
With probability at least $1-\delta$, where \begin{align}\delta = 4mt_0 \exp\left(-\mathfrak  c^{-2}\right) + mn \exp\left(-\mathfrak  c^{-1}\right) + m \mathfrak  c^\frac{1}{2} + 2m \exp\left[-\frac{\mathfrak  c^2 \times \left(8\mathfrak c + 4 \omega \tau \sqrt{\log(4m)} \right)}{16 \eta^2 \omega^2 \tau^6}\right] + 2m^{-1}.\end{align}
\end{theorem}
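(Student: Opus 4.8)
The plan is to reduce the statement to the scalar quantities tracked in \cref{sec:dynamic-model-parameter}: with $\zeta_k^{(r)}(t)={\boldsymbol{w}}_r(t)^\top{\boldsymbol{\mu}}_k$ and $\phi_k^{(r)}(t)={\boldsymbol{w}}_r(t)^\top{\boldsymbol{\xi}}_k$, the $r$-th coordinate of $h({\boldsymbol{x}}_k,W(t))$ is exactly $\sigma\big(\zeta_k^{(r)}(t)+\phi_k^{(r)}(t)\big)$. First I would verify that, under hypotheses 1--4 and for an appropriate choice of the constants $c^{(t)},C^{(\phi)}$, the preconditions of \cref{cor:main} together with \cref{lem:bound-zeta-phi-init,lem:initialize-geq,lem:initialize-leq} hold simultaneously for every neuron $r\in[m]$; this is the bookkeeping step where hypotheses 1--4 get unpacked into the individual requirements on $\mathfrak c$, $\eta$, $\omega$ and $t_0$ that those lemmas impose. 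The horizon $t_0$ is set by the fastest-growing neuron: \cref{lem:bound-gaussian-max} bounds $\max_r|\zeta^{(s,r)}(0)|\le\tau\omega\sqrt{2\log(4m)}$ with high probability, and $t_0$ is chosen so that even the largest $|\zeta^{(s,r)}(t)|$ stays below $1$ for all $t<t_0$ --- precisely what licenses the simplification $\sigma(z)=\tfrac13 z^3,\ \sigma'(z)=z^2$ used throughout. Union-bounding over the $m$ neurons, and over the $t_0$ iterations for the per-step concentration events inside \cref{lem:w_on_mu,lem:w_on_xi}, produces the stated $\delta$. The conclusion of this step: for all $t<t_0$ and all $r$ we have $\tfrac12|\zeta^{(s,r)}(0)|\le|\zeta^{(s,r)}(t)|$ and $|\phi_i^{(r)}(t)|\le\tfrac{C^{(\phi)}\mathfrak c^2}{n}|\zeta^{(s,r)}(t)|+\tfrac{\mathfrak c}{8}|\zeta^{(s,r)}(0)|$.

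With these uniform bounds I would control the two distances against the common scale $R(t):=\big(\sum_{r=1}^{m}|\zeta^{(1,r)}(t)|^{6}\big)^{1/2}$. For the denominator, $i_1,i_2\in\mathscr C_1$ share the same signal $\zeta^{(1,r)}(t)$ in every coordinate, so $h({\boldsymbol{x}}_{i_1})_r-h({\boldsymbol{x}}_{i_2})_r=\sigma(\zeta^{(1,r)}(t)+\phi_{i_1}^{(r)}(t))-\sigma(\zeta^{(1,r)}(t)+\phi_{i_2}^{(r)}(t))$ is driven entirely by the noise gap; bounding it by the mean value theorem (with $|\sigma'|\le 4\,\zeta^{(1,r)}(t)^2$ on the active range) and by a careful estimate of $|\phi_{i_1}^{(r)}(t)-\phi_{i_2}^{(r)}(t)|$ --- sharper than $|\phi_{i_1}^{(r)}(t)|+|\phi_{i_2}^{(r)}(t)|$, obtained by tracking the difference dynamics --- and then substituting $\mathfrak c=8\kappa\sqrt d/\tau$ and $\mathcal L$, yields $\|h({\boldsymbol{x}}_{i_1})-h({\boldsymbol{x}}_{i_2})\|\lesssim\big(\tfrac{\mathfrak c}{2}+\tfrac{10}{\sqrt{8\mathcal L}}\,\tfrac{\mathfrak c^{3}\sqrt d}{n^{1.5}}\big)\,R(t)$, where the two terms of the prefactor are the contributions of the $\tfrac{\mathfrak c}{8}|\zeta(0)|$ and the $\tfrac{C^{(\phi)}\mathfrak c^{2}}{n}|\zeta(t)|$ pieces of the noise bound.

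For the numerator, $i_1\in\mathscr C_1$ and $i_3\in\mathscr C_2$, so I would compare against the noiseless cluster representations $g^{(s)}(t):=(\sigma(\zeta^{(s,r)}(t)))_{r=1}^{m}$ via the triangle inequality, $\|h({\boldsymbol{x}}_{i_1})-h({\boldsymbol{x}}_{i_3})\|\ge\|g^{(1)}(t)-g^{(2)}(t)\|-\|h({\boldsymbol{x}}_{i_1})-g^{(1)}(t)\|-\|h({\boldsymbol{x}}_{i_3})-g^{(2)}(t)\|$, where the last two terms are controlled exactly as in the denominator step and are $\lesssim\mathfrak c\,R(t)$. The heart of the matter is the lower bound $\|g^{(1)}(t)-g^{(2)}(t)\|=\Omega(R(t))$: because ${\boldsymbol{\mu}}^{(1)}\perp{\boldsymbol{\mu}}^{(2)}$, the families $\{\zeta^{(1,r)}(0)\}_r$ and $\{\zeta^{(2,r)}(0)\}_r$ are independent, and since the dynamics of $\zeta^{(1,\cdot)}$ uses only cluster-$1$ data (and likewise for cluster $2$), $g^{(1)}(t)$ and $g^{(2)}(t)$ are independent, identically distributed, mean-zero random vectors (odd $\sigma$, symmetric initialization); hence $\mathbb E\|g^{(1)}(t)-g^{(2)}(t)\|^{2}=2\,\mathbb E\|g^{(1)}(t)\|^{2}$, and a concentration argument together with \cref{lem:gaussian-anti-concentrate} (to rule out a degenerate $\|g^{(1)}(t)\|$) promotes this to a high-probability lower bound. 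Dividing the numerator bound by the denominator bound and collecting the powers of the cubic $\sigma$ then gives the stated ratio $\sqrt{\tfrac{1}{24\mathcal L'}}\big[\tfrac{1}{\frac{\mathfrak c}{2}+\frac{10}{\sqrt{8\mathcal L}}\frac{\mathfrak c^{3}\sqrt d}{n^{1.5}}}\big]^{3}=\Omega\!\big(\min\{\mathfrak c^{-3},\mathfrak c^{-9}n^{9/2}d^{-3/2}\}\big)$.

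The step I expect to be the real obstacle is the numerator lower bound: turning the initialization-level independence of $\zeta^{(1,\cdot)}$ and $\zeta^{(2,\cdot)}$ into a clean, uniform-in-$t$ bound $\|g^{(1)}(t)-g^{(2)}(t)\|=\Omega(R(t))$ that survives the nonlinear (and mildly blow-up) dynamics of \cref{lem:bound-zeta}, while keeping the $\phi$-perturbations strictly subdominant --- the subtlety being that the $\tfrac{\mathfrak c}{8}|\zeta(0)|$ term in the $\phi$-bound is only a constant factor below $|\zeta(t)|$, so the separation must beat it by a constant rather than merely asymptotically, which is what forces the careful difference-dynamics estimate for $\phi_{i_1}^{(r)}-\phi_{i_2}^{(r)}$ in the denominator step. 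Everything else --- the hypothesis translation and the assembly of $\delta$ --- is routine.
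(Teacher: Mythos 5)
Your overall skeleton --- reducing to the scalars $\zeta^{(s,r)}(t)$ and $\phi_k^{(r)}(t)$, verifying the preconditions of \cref{cor:main} neuron by neuron, capping the horizon $t_0$ via \cref{lem:bound-gaussian-max} so that $|\zeta^{(s,r)}(t)|\le\tfrac12$ and the cubic regime $\sigma(z)=\tfrac13z^3$ applies, and union-bounding over neurons and iterations to assemble $\delta$ --- matches the paper. Your denominator bound is also essentially the paper's, except that no difference-dynamics estimate for $\phi_{i_1}^{(r)}(t)-\phi_{i_2}^{(r)}(t)$ is needed: the paper uses the crude bound $|\phi_{i_1}^{(r)}(t)|+|\phi_{i_2}^{(r)}(t)|$ directly from \cref{cor:main}. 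This suffices because the required separation is only by the factor $\mathfrak c^{-3}$ (or the stated minimum), and the $\tfrac{\mathfrak c}{8}|\zeta^{(s,r)}(0)|$ term in the $\phi$-bound sits a factor $\mathfrak c\le\tfrac18$ (not merely a constant) below $|\zeta^{(s,r)}(t)|$, which after cubing by $\sigma$ yields exactly the $\bigl(\tfrac{\mathfrak c}{2}+\cdots\bigr)^{3}$ prefactor in the statement.

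The genuine gap is exactly where you flagged it: the numerator lower bound. The identity $\mathbb E\|g^{(1)}-g^{(2)}\|^2=2\,\mathbb E\|g^{(1)}\|^2$ controls only an expectation; promoting it to a high-probability lower bound requires anti-concentration of $\|g^{(1)}(t)-g^{(2)}(t)\|^2$, a sum of as few as $m=\log d$ heavy-tailed, non-Gaussian terms after the nonlinear dynamics, and \cref{lem:gaussian-anti-concentrate} (a statement about a single Gaussian) does not supply this. Moreover the claimed independence of $g^{(1)}(t)$ and $g^{(2)}(t)$ is not exact: the update of $\zeta^{(s,r)}$ in \cref{lem:w_on_mu} contains the cross term $\sum_{i=1}^n y_i\sigma'(\cdot)\,{\boldsymbol{\xi}}_i^\top{\boldsymbol{\mu}}^{(s)}$ over \emph{all} samples, so the two trajectories are coupled through the noise. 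The paper circumvents both issues by injecting randomness only at $t=0$: with $q=\tfrac{32\mathcal Ln}{\mathfrak c^2\sqrt d}$, it defines $\mathcal A=\{r:\zeta^{(2,r)}(0)<0\ \text{and}\ \zeta^{(1,r)}(0)\ge\sqrt q\}$, shows $\mathbb P\{r\in\mathcal A\}\ge\tfrac14$ using the independence of the two initial projections (orthogonal means) together with \cref{lem:gaussian-anti-concentrate} and $\omega\tau\ge2\sqrt q$, gets $|\mathcal A|\ge m/8$ by binomial concentration, and then invokes the sign preservation in \cref{cor:main} (positive initializations grow, negative ones stay in $[\zeta(0),0]$) so that for every $r\in\mathcal A$ the two activations have opposite signs and $\left|h({\boldsymbol{x}}_{i_1})-h({\boldsymbol{x}}_{i_3})\right|_r\ge\sigma\bigl(\tfrac12\sqrt q\bigr)$ holds deterministically on the good event. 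Summing over $\mathcal A$ gives the $\sqrt{m/8}\,\sigma\bigl(\tfrac12\sqrt q\bigr)$ lower bound with no further probabilistic input. To complete your plan you would need to replace the second-moment step with this coordinate-wise sign argument; as written, that step does not go through.
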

\begin{proof}
Let $q = \frac{32 \mathcal Ln}{\mathfrak  c^2 \sqrt{d}} \leq 1$ and $c^{(t)} = \frac{1 - 4\omega\tau\sqrt{\log{4m}}}{\frac{1}{8} + \mathfrak  c} \leq 8$, we have  $\eta \leq \frac{\mathfrak  c
        \left(8 - c^{(t)}\right)}{16 \tau^2 \left(\sqrt{q} + \tau^2 q\right)}$ and $\omega \tau \geq 2\sqrt{q}$. 

\begin{itemize}

\item First, notice that both Lemma \ref{lem:w_on_mu} and Lemma \ref{lem:w_on_xi} holds with high probability. Specifically, \cref{lem:w_on_mu} holds with probability at least $1- \exp(-n)$, and \cref{lem:w_on_xi} holds with probability at least $1-\delta_1'$, where $\delta_1' = \exp(-n) + 2n \exp\left(-\frac{1}{\mathfrak  c^2}\right) + 2\exp(-d/64) \leq 3n \exp\left(-\frac{1}{\mathfrak  c^2}\right)$. We have that \cref{cor:main} holds with probability at least $1 - mt_0 \delta_1$, where $\delta_1 = 4n\exp(-n)$.

\item From Lemma \ref{lem:bound-gaussian-max}, we have $\forall r \leq m,  \left|\zeta^{(1,r)}(0)\right| \leq 2 \omega\tau\sqrt{\log 4m}$ with probability at least $1-\delta_2$, where $\delta_2 = \frac{1}{m}$. Similarly the probability of $\forall r \leq m,  \left|\zeta^{(2,r)}(0)\right| \leq 2 \omega\tau\sqrt{\log 4m}$ is also at least $1-\delta_2$. In this case for any $s,r$ we have $t_0 \leq c^{(t)}\left(\eta \tau^2 \left|\zeta^{(s,r)}(0)\right|\right)^{-1}$.

If in addition \cref{cor:main} holds, then from \cref{lem:initialize-leq,lem:initialize-geq,cor:main}, for any $s \in \{1,2\}$ and $i \in \mathscr C_s$ we have $|\phi_i(t)| \leq \frac{C^{(\phi)} \mathfrak  c^2}{n} \left|\zeta^{(s)}(t)\right | +\frac{\mathfrak  c}{8} \sqrt{q}$ and $\zeta^{(s,r)}(t) \leq \left[1 - \left(\frac{1}{8} + \mathfrak 
 c\right)c^{(t)}\right]^{-1} \left|\zeta^{(1,r)}(0)\right|$ with probability at least $1-\delta_3$, where $\delta_3 = 2 \exp\left(-\frac{\mathfrak  c^2\left(1 - \frac{c^{(t)}}{8}\right)^2 }{8 \eta^2 \omega^2 \tau^6}\right) +  n\exp(-\frac{1}{4 \mathfrak c }) + \mathfrak  c^{1/2}$. In this case we have \begin{align}
\forall r \leq m, \forall s \in \{1,2\}, \forall t \leq t_0, \left|\zeta^{(s,r)}(t_0)\right| & \leq \left(\left|\zeta^{(s,r)}(0)\right|^{-1} - \left(\frac{1}{8} + \mathfrak  c\right)^{-1} \eta\tau^2 t\right)^{-1}
\\ & \leq \left[1 - c^{(t)}\left(\frac{1}{8} + \mathfrak  c\right)\right]^{-1} \left|\zeta^{(s,r)}(0)\right|
\\ & \leq \left[1 - c^{(t)}\left(\frac{1}{8} + \mathfrak  c\right)\right]^{-1} \times 2 \omega \tau \sqrt{\log 4m}
\\ & \leq \frac{1}{2}
\end{align}
with probability at least $1-2\delta_2 - m\delta_3$. 

With $|\zeta^{(s,r)}(t)| \leq \frac{1}{2}$ and $\left|\phi_i(t)\right| \leq |\zeta^{(s,r)}(t)| \leq \frac{1}{2}$ where $i \in \mathscr C_s$, we have $|{\boldsymbol{w}}(t)^\top {\boldsymbol{x}}_i| \leq 1$, which falls in the range where $\sigma({\boldsymbol{w}}(t)^\top {\boldsymbol{x}}_i) = \frac{1}{3} \left[{\boldsymbol{w}}(t)^\top {\boldsymbol{x}}_i\right]^3$, which fulfills our assumption in Section \ref{sec:dynamic-model-parameter}. In the following we will assume $\sigma(z) = \frac{1}{3}z^3$.

\item If the conclusion of \cref{lem:initialize-leq,lem:initialize-geq,cor:main} holds, for any $t \leq t_0$ we have \begin{align}
    \left|h({\boldsymbol{x}}_{i_1}) - h\left({\boldsymbol{x}}_{i_2}\right)\right|_r & = \left|\sigma\left({\boldsymbol{w}}_r(t)^\top {\boldsymbol{\mu}}^{(1)} + {\boldsymbol{w}}_r(t)^\top {\boldsymbol{\xi}}_{i_1}\right) - \sigma\left({\boldsymbol{w}}_r(t)^\top {\boldsymbol{\mu}}^{(1)} + {\boldsymbol{w}}_r(t)^\top {\boldsymbol{\xi}}_{i_2}\right)\right|
\\ & \leq \mathcal L' \sigma \left| {\boldsymbol{w}}_r(t)^\top {\boldsymbol{\xi}}_{i_1} - {\boldsymbol{w}}_r(t)^\top{\boldsymbol{\xi}}_{i_2} \right|
\\ & = \mathcal L'\sigma\left( \left|\phi_{i_1}^{(r)}(t)\right| + \left|\phi_{i_2}^{(r)}(t)\right|\right)
\\    & \leq \mathcal L' \sigma\left( \frac{\mathfrak  c}{4}\sqrt{q} + \frac{2 C^{(\phi)} \mathfrak  c^2}{n}\left|\zeta^{(1,r)}(t)\right| \right)
    \\ & \leq \mathcal L' \sigma\left( \frac{\mathfrak  c}{4}\sqrt{q} + \frac{C^{(\phi)} \mathfrak  c^2}{n}\right).
\end{align}

\item Notice that if \cref{cor:main} holds, then the sign of $\zeta^{(s,r)}(t)$, as well as $\zeta^{(s,r)}(t) - \phi_i^{(s)}(t)$, is determined by its initialization $\zeta^{(s,r)}(t)$. Since $\zeta^{(s,r)}(0) \sim \mathcal N(0,\omega^2 \tau^2)$, we with probability at least $0.5$ that $\zeta^{(2,r)}(0) \leq 0$ and since $\omega \geq \frac{2 \sqrt{q}}{\tau}$, from Lemma \ref{lem:gaussian-anti-concentrate} we have $\zeta^{(1,r)}(0) \geq \sqrt{q}$ with probability at least $\frac{1}{2}$. We denote $\mathcal A$ to be the neuron indices $r$ who satisfies $\zeta^{(2,r)}(0) < 0$ and $\zeta^{(1,r)}(0) \geq \sqrt{q}$. For each $r \leq m$, we have $\mathbb P\{r \in \mathcal A\} \geq \frac{1}{4}$. By calculating the concentration of Binomial distribution \citep{concen-survey}, we have $|\mathcal A| \geq \frac{m}{8}$ with probability at least $1 - \delta_4$, where $\delta_4 = \exp(-m/32)$. If $r \in \mathcal A$, then we have \begin{align}
\left|h({\boldsymbol{x}}_{i_1}) - h\left({\boldsymbol{x}}_{i_3}\right)\right|_r & = \left|\sigma\left({\boldsymbol{w}}_r(t)^\top {\boldsymbol{\mu}}^{(1)} + {\boldsymbol{w}}_r(t)^\top {\boldsymbol{\xi}}_{i_1}\right) - \sigma\left({\boldsymbol{w}}_r(t)^\top {\boldsymbol{\mu}}^{(2)} + {\boldsymbol{w}}_r(t)^\top {\boldsymbol{\xi}}_{i_2}\right)\right|
\\ & \geq \sigma\left(\left|\zeta^{(1,r)}(t) + \phi_{i_1}^{(r)}(t)\right|\right).
\\ & \geq \sigma\left(\left|\zeta^{(1,r)}(t)\right| - \left|\phi_{i_1}^{(r)}(t)\right|\right).
\\ & \geq \sigma\left[ \left(1 - \frac{C^{(\phi)} \mathfrak  c^2}{n}\right)\left|\zeta^{(1,r)}(t)\right | - \frac{\mathfrak  c}{8}|\zeta_k(0)|\right]
\\ & \geq \sigma\left(\frac{1}{2} \zeta_k(t)\right)
\\ & \geq \sigma\left( \frac{1}{2} \left[\zeta^{(1,r)}(0)^{-1} - \left(\frac{1}{8} - \mathfrak  c\right)\eta \tau^2 t\right]^{-1} \right)
\\ & = \sigma\left( \frac{1}{2} \left[1 - c^{(t)}\left(\frac{1}{8} - \mathfrak  c\right)\right]^{-1}\left|\zeta^{(1,r)}(0)\right| \right).
\end{align}
Notice that $\forall r \in \mathcal A,  \left|\zeta^{(1,r)}(0)\right| \geq \sqrt{q}$, we have \begin{align}
\left\|h\left({\boldsymbol{x}}_{i_1}\right) - h\left({\boldsymbol{x}}_{i_3}\right)\right\|^2 & = \sum_{r=1}^m \left|h({\boldsymbol{x}}_{i_1}) - h\left({\boldsymbol{x}}_{i_3}\right)\right|_r^2
\\ & \geq \sum_{r \in \mathcal A} \left|h({\boldsymbol{x}}_{i_1}) - h\left({\boldsymbol{x}}_{i_3}\right)\right|_r^2
\\ & \geq \frac{m}{8} \times \sigma\left( \frac{1}{2} \left[1 - c^{(t)}\left(\frac{1}{8} - \mathfrak  c\right)\right]^{-1}\sqrt{q} \right)^2.
\\ & \geq \frac{m}{8} \times \sigma\left( \frac{1}{2} \sqrt{q} \right)^2.
\end{align}
\item To put things together, notice that since $\sigma(z) = \frac 13z^3$, we have $\sigma(a)/\sigma(b) = 3 \sigma(a/b)$. Let $\delta = mt_0 \delta_1 + 2 \delta_2 + m\delta_3 + \delta_4$, with probability at least $1-\delta$ we have \begin{align}
\frac{\left\|h\left({\boldsymbol{x}}_{i_1}\right) - h\left({\boldsymbol{x}}_{i_3}\right)\right\|^2}{\left\|h\left({\boldsymbol{x}}_{i_1}\right) - h\left({\boldsymbol{x}}_{i_2}\right)\right\|^2} &  \geq \frac{3\times \frac{m}{8}}{m \mathcal L'} \times \sigma\left(\frac{\frac{1}{2} \sqrt{q} }{\frac{\mathfrak  c}{4}\sqrt{q} + \frac{ C^{(\phi)} \mathfrak  c^2}{n}}\right)^2
\\ & = \frac{3}{8 \mathcal L'}\sigma\left(\frac{1}{\frac{\mathfrak  c }{2} + \frac{2 C^{(\phi)} \mathfrak  c^2}{n\sqrt{q}}}\right)^2
\\ & \geq \frac{3}{8 \mathcal L'}\sigma\left(\frac{1}{\frac{\mathfrak  c }{2} + \frac{2 C^{(\phi)} \mathfrak  c^2}{n\sqrt{\frac{32\mathcal L n}{\mathfrak  c^2 d^{1/2}}}}}\right)^2
\\ & = \frac{3}{8 \mathcal L'}\sigma\left(\frac{1}{\frac{\mathfrak  c }{2} + \frac{C^{(\phi)}}{\sqrt{8 \mathcal L} } \times \frac{\mathfrak  c^3 \sqrt{d}}{n^{1.5}} }\right)^2.
\end{align}
Notice that $\sigma(z) = \frac{1}{3}z^3$, we have \begin{align}
    \frac{\left\|h\left({\boldsymbol{x}}_{i_1}\right) - h\left({\boldsymbol{x}}_{i_3}\right)\right\|}{\left\|h\left({\boldsymbol{x}}_{i_1}\right) - h\left({\boldsymbol{x}}_{i_2}\right)\right\|} & \geq \sqrt{\frac{1}{24\mathcal L'}} \left[\frac{1}{\frac{\mathfrak  c }{2} + \frac{C^{(\phi)}}{\sqrt{8 \mathcal L} } \times \frac{\mathfrak  c^3 \sqrt{d}}{n^{1.5}}}\right]^3
    \\ & = \Omega\left(\min\left\{\mathfrak  c^{-3}, \mathfrak  c^{-9}n^\frac{9}{2}d^{-\frac{3}{2}}\right\}\right),
\end{align}
and simply taking $C^{(\phi)} = 10$  proves the proposition.

\end{itemize}

\end{proof}

\section{Experiment on Mixture of Gaussian with Coarse Labels}\label{sec:additional-synthetic}

In this section, we reproduce the experiments under the setting of classifying mixture of Gaussian using a 2-layer MLP.

Formally, we create a dataset from a mixture of Gaussian, where the input from the $c$-th cluster is generated from $\mathcal N({\boldsymbol{\mu}}^{(c)}, {\boldsymbol{I}})$, and each cluster mean ${\boldsymbol{\mu}}^{(c)}$ is drawn i.i.d. from $\mathcal N({\boldsymbol{0}}, \sigma^2{\boldsymbol{I}})$. 
The class label of each datapoint is the index of the cluster it belongs to.
The larger $\sigma^2$ is, the larger the separation between each two clusters is, and the more likely it is to observe a fine-grained representation structure when given coarse labels.

We perform the same coarsening process described in \Cref{sec:experiment-setup} (by combining two classes into one super-class) and train a $2$-layer MLP on the coarsely labeled dataset. We measure the significance of the fine-grained structure using the ratio between \{the average squared distance between representations in the same super-class but different sub-classes\} and \{the average squared distance between representations in the same subclass\}, which we call the Mean Squared Distance Ratio (MSDR). Mathematically, it is defined as
\[
\mathrm{MSDR} = \frac{\mathrm{average}_{i\not=j \text{ in same super-class}}\{D_{i,j}\}}{\mathrm{average}_i \{D_{i,i}\}},
\]
where $D_{i,j}$ is defined in \eqref{eqn:mean-square-distance}. A larger $\mathrm{MSDR}$ means that the fine-grained structure is more pronounced, while $\mathrm{MSDR} \approx 1$ indicates no fine-grained structure.


By varying training and data-generating parameters, we investigate factors that impact the significance of the fine-grained structure. Specifically, we vary the input dimension, hidden dimension in the network, and weight decay rate, and plot how MSDR scales with $\sigma^2$. The results are shown in \cref{fig:d-inp,fig:d-hid,fig:wd}. Note that in each figure we use two different scales in the x-axes to differentiate the cases of $\sigma^2<2$ and $\sigma^2>2$. 

From the figures, we observe that both input and hidden dimensions exhibit a clear positive correlation with $\mathrm{MSDR}$. On the other hand, the weight decay rate does not have an impact on $\mathrm{MSDR}$ in this setting.


\begin{figure}[h]
\begin{minipage}{0.49\linewidth}
    \centering
    \includegraphics[scale=0.6]{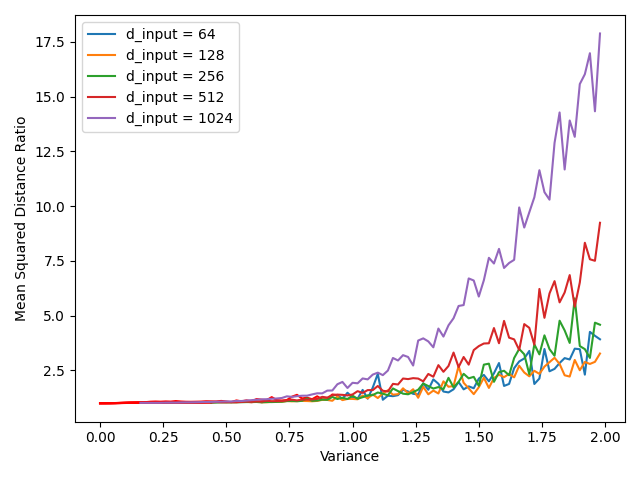}
\end{minipage}
\hfill
\begin{minipage}{0.49\linewidth}
    \centering
    \includegraphics[scale=0.6]{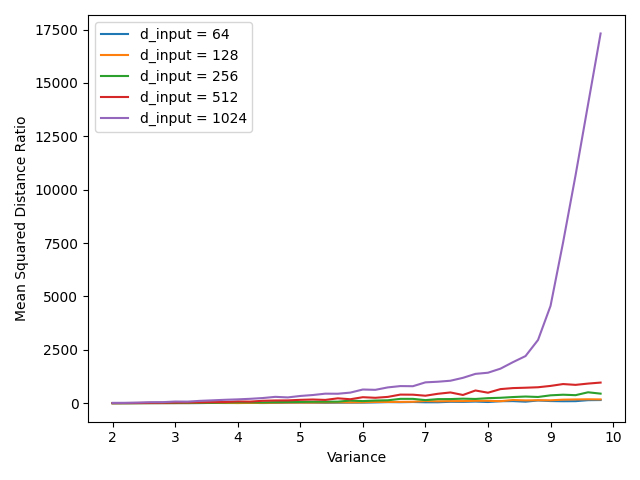}
\end{minipage}
\hfill
    \caption{Mean Squared Distance Ratio vs. variance $\sigma^2$ for different input dimensions. Red lines on the left end are cases where the training accuracy does not reach $100\%$.}
    \label{fig:d-inp}
\end{figure}

\begin{figure}
\begin{minipage}{0.49\linewidth}
    \centering
    \includegraphics[scale=0.6]{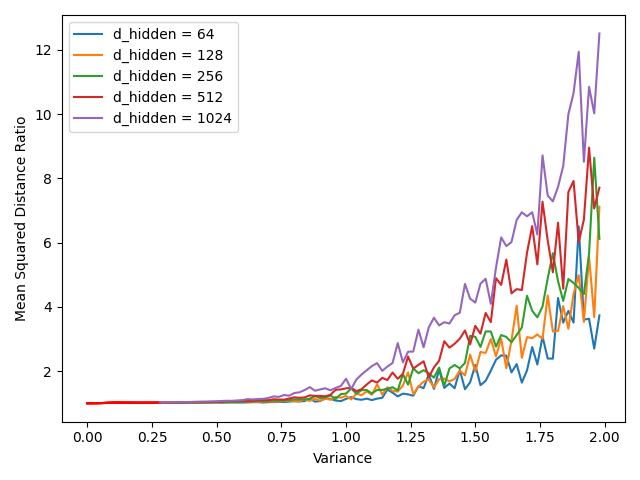}
\end{minipage}
\hfill
\begin{minipage}{0.49\linewidth}
    \centering
    \includegraphics[scale=0.6]{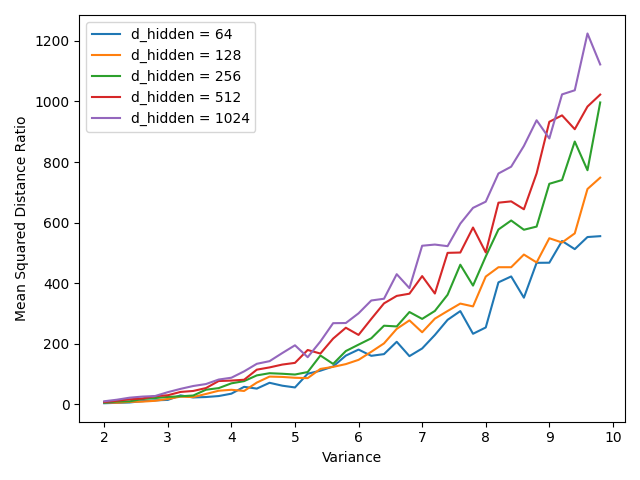}
\end{minipage}
\hfill
    \caption{Mean Squared Distance Ratio vs. variance $\sigma^2$ for different hidden dimensions. Red lines on the left end are cases where the training accuracy does not reach $100\%$.}
    \label{fig:d-hid}
\end{figure}

\begin{figure}[h]
\begin{minipage}{0.49\linewidth}
    \centering
    \includegraphics[scale=0.6]{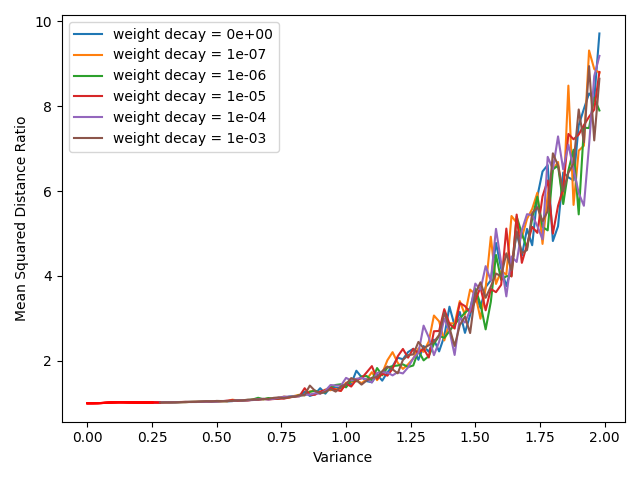}
\end{minipage}
\hfill
\begin{minipage}{0.49\linewidth}
    \centering
    \includegraphics[scale=0.6]{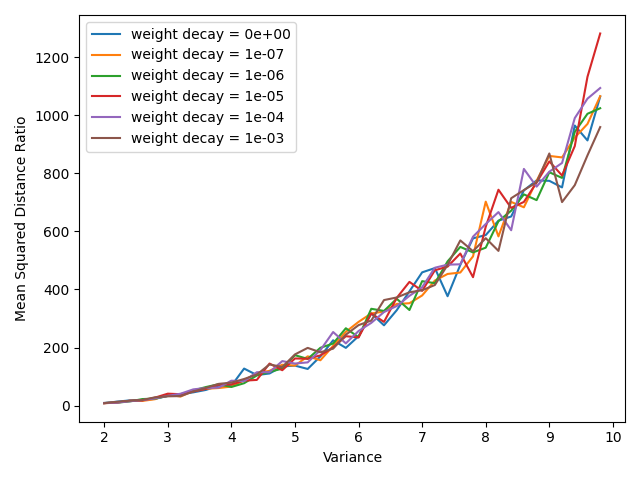}
\end{minipage}
\hfill
    \caption{Mean Squared Distance Ratio vs. variance $\sigma^2$ for different weight decay rates. Red lines on the left end are cases where the training accuracy does not reach $100\%$.}
    \label{fig:wd}
\end{figure}

\paragraph{Training details.}
We generate data from $8$ clusters, each having $500$ samples. We train the model with gradient descent for $1\text{,}000$ steps. The results are averaged over $10$ runs. When varying one hyper-parameter, other hyper-parameters are set to their default values: $d_\mathrm{input} = 512, d_\mathrm{hidden} = 512, \mathrm{weight\ decay} = 0$.

\subsection{Modeling Semantic Similarity} \label{sec:synthetic-similarity}

In \Cref{sec:semantic-similarity}, we showed that the emergence of fine-grained representations depends on the semantic similarity between sub-classes.
Now we take a step in investigating this question by creating ``similar'' and ``dissimilar'' sub-classes in the Gaussian mixture model considered in this section. 

In particular, we use the same data-generating process described above, except that half of the super-classes will be altered so that they consist of ``similar'' sub-classes, and we say that the other super-classes consist of ``dissimilar'' sub-classes. The way to generate similar sub-classes it to first sample $ {\boldsymbol{\mu}} \sim \mathcal N({\boldsymbol{0}}, \sigma^2 {\boldsymbol{I}})$ and then generate two means ${\boldsymbol{\mu}}^{(c)}, {\boldsymbol{\mu}}^{(c')} \sim \mathcal N({\boldsymbol{\mu}}, \tau^2 {\boldsymbol{I}})$.
Therefore, we can vary $\tau^2$ to control the level of similarity between similar sub-classes.


We use default hyper-parameters described above, fix $\sigma^2 = 4$, and vary $\tau^2$.  \cref{fig:sep} shows the Mean Squared Distance Ratio for similar and dissimilar subclasses, respectively.
We see that fine-grained structure within a super-class does require sufficient dissimilarity between its sub-classes, which agrees with our observation from \Cref{sec:semantic-similarity} on CIFAR-100.

\begin{figure}[ht]
\begin{minipage}{0.49\linewidth}
    \centering
    \includegraphics[scale=0.4]{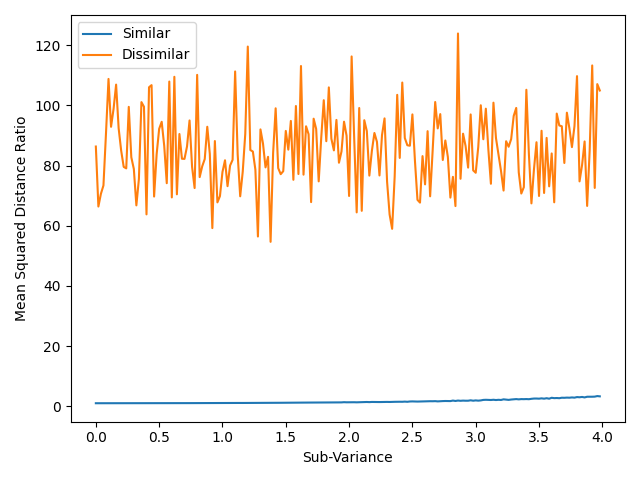}
\end{minipage}
\hfill
\begin{minipage}{0.49\linewidth}
    \centering
    \includegraphics[scale=0.4]{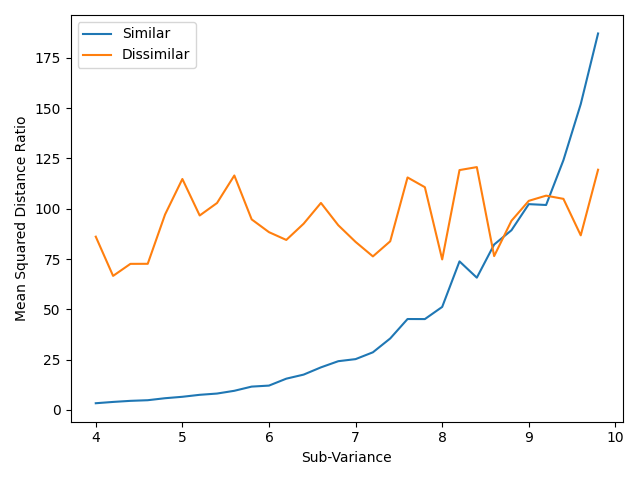}
\end{minipage}
\hfill
     \caption{Mean Squared Distance Ratio vs. sub-variance $\tau^2$. The setting is described in \Cref{sec:synthetic-similarity}.}
    \label{fig:sep}
\end{figure}

\section{Complete Coarse CIFAR-10 Experiment Results}\label{sec:complete-cifar-5}

In the following sections, we provide extended experiment results. As mentioned in \Cref{sec:experiment-setup}, we use learning rate in $\{10^{-1} , 10^{-2} , 10^{-3}\}$ and weight decay rate in $\{5\times 10^{-3} , 5\times 10^{-4} , 5 \times 10^{-5}\}$. Generally, the results will be shown in a $3 \times 3$ table, of which each grid represents the result of one hyper-parameter combination, with each row has the same learning rate and each column has the same weight decay rate.

In this section, we repeat the experiments in \Cref{sec:experiment-on-coarse-cifar-10} with all learning rate and weight-decay rate combinations.  Firstly, we present the training statistics (accuracy, loss) of all hyper-parameters in \Cref{fig:train_statistics_acc,fig:train_statistics_loss} as an reference. It can be observed that all hyper-parameter groups achieved very low training error except the first one (weight decay = $5\times 10^{-3}$, learning rate = $10^{-1}$). In fact, the last two hyper-parameter combinations (learning rate = $10^{-3}$, weight decay $\in \{5 \times 10^{-4} , 5 \times 10^{-5}\}$) didn't achieve exactly $0$ training error (their training error are $<0.5\%$ but not exactly $0$), and the other $6$ hyper-parameter combinations all achieved exactly $0$ training error.

\begin{figure}[h]
    \begin{minipage}{0.49\linewidth}
        \centering
        \includegraphics[scale=0.30]{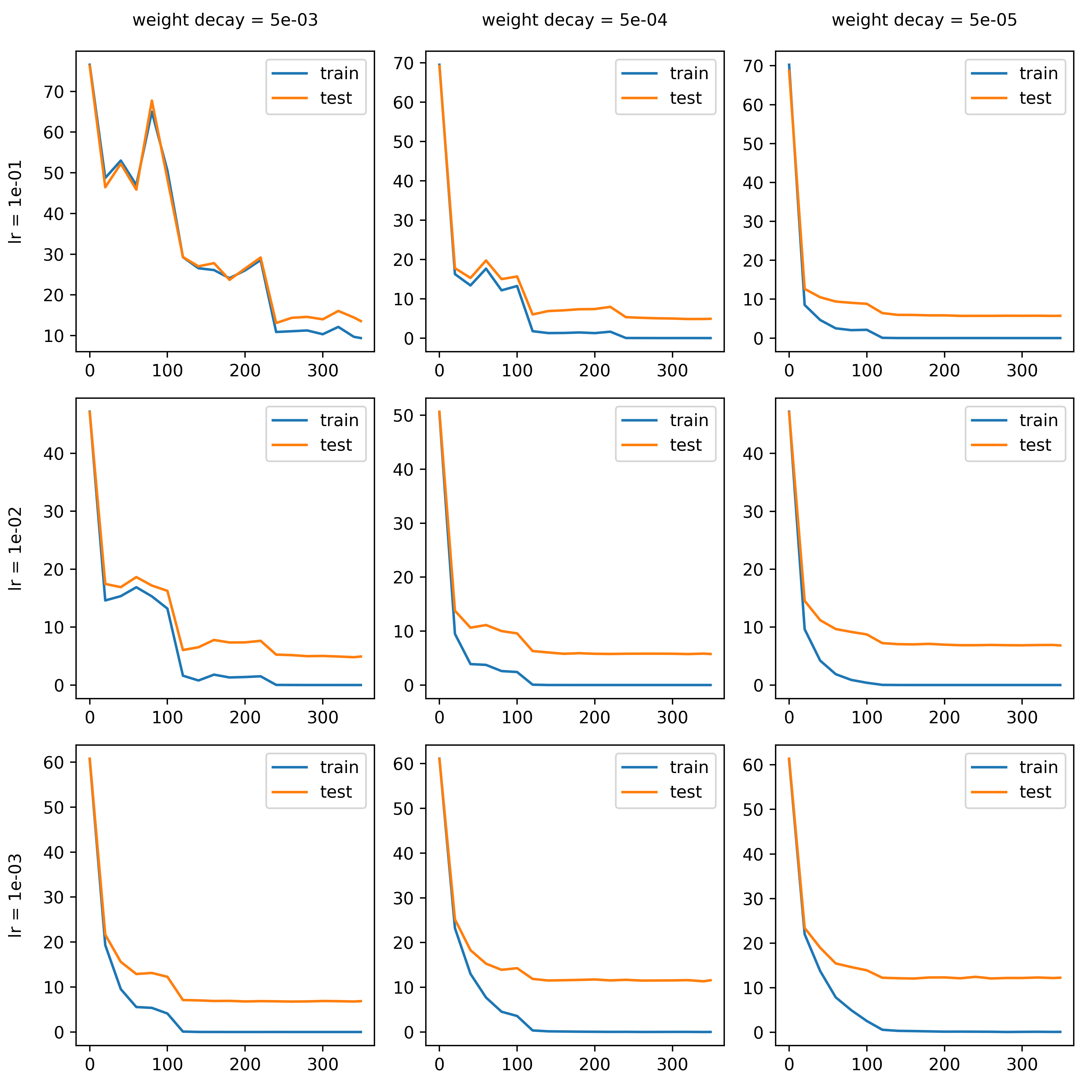}
        \caption{Training and test error during training.}
        \label{fig:train_statistics_acc}
    \end{minipage}
    \hfill
    \begin{minipage}{0.49\linewidth}
        \centering
        \includegraphics[scale=0.30]{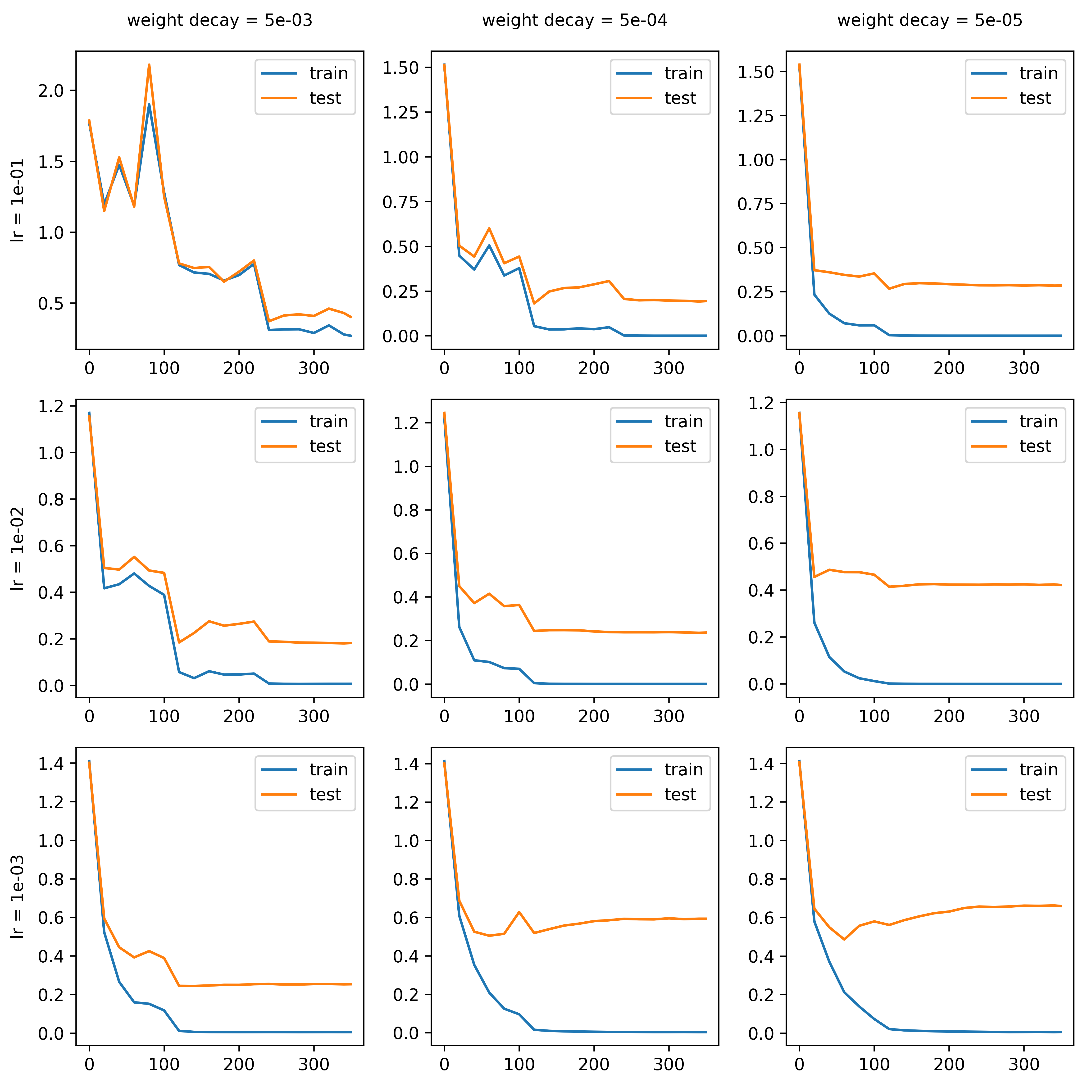}
        \caption{Training and test loss during training.}
        \label{fig:train_statistics_loss}
    \end{minipage}
\end{figure}

\subsection{Class Distance}

Here we present the visualization of the heatmap of class distance matrix $D$ which is defined in \Cref{sec:class-distance}. We choose $4$ epochs to show the trend during training. The results are presented in \Cref{fig:distance-cifar-5-20,fig:distance-cifar-5-120,fig:distance-cifar-5-240,fig:distance-cifar-5-350}, whose epoch numbers are $20,120,240$ and $349$ respectively.

\begin{figure}
    \begin{minipage}{0.49\linewidth}
        \centering
        \includegraphics[scale=0.30]{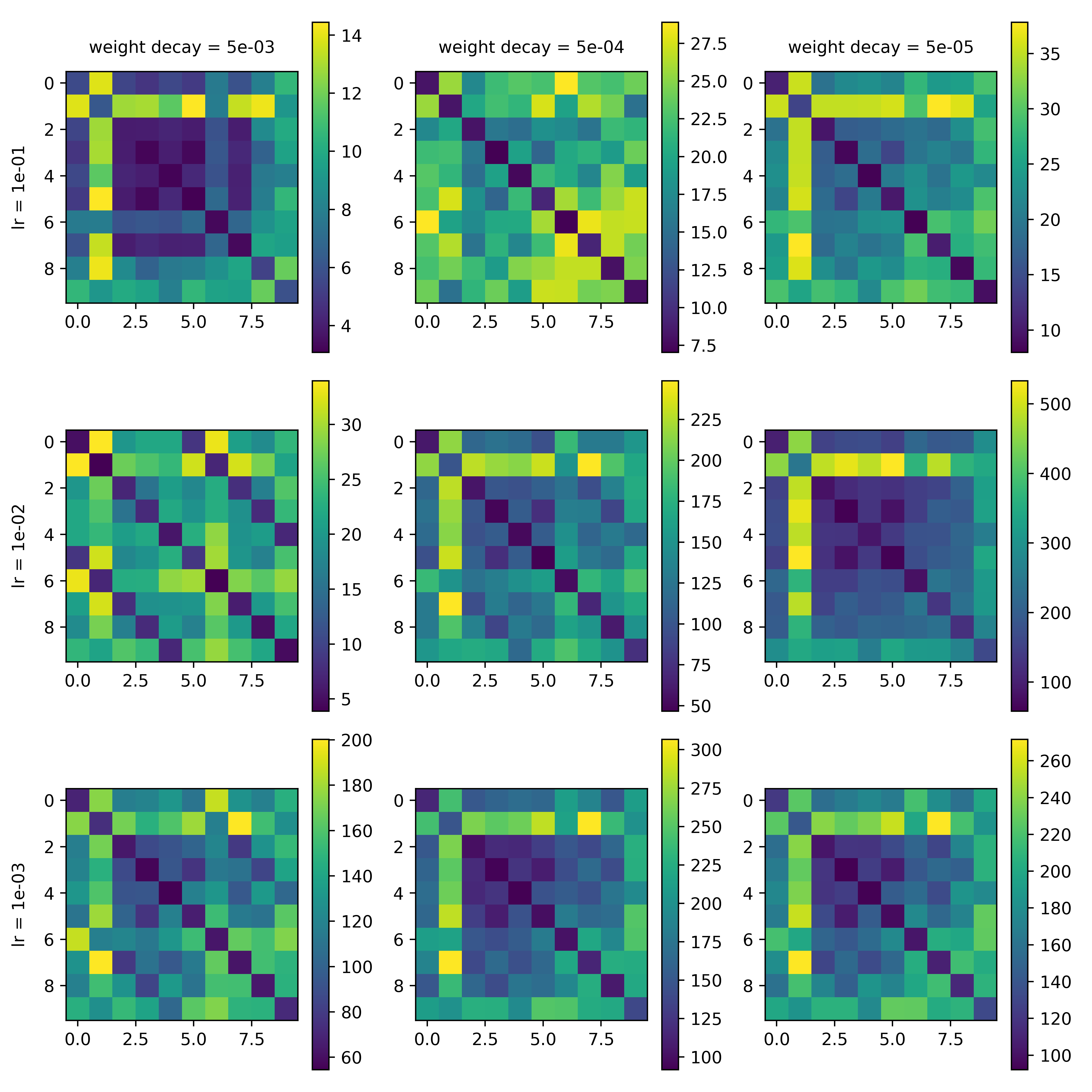}
        \caption{The heatmaps of class distance matrices of different hyper-parameter combinations at epoch $20$.}
        \label{fig:distance-cifar-5-20}
    \end{minipage}
    \hfill
    \begin{minipage}{0.49\linewidth}
        \centering
        \includegraphics[scale=0.30]{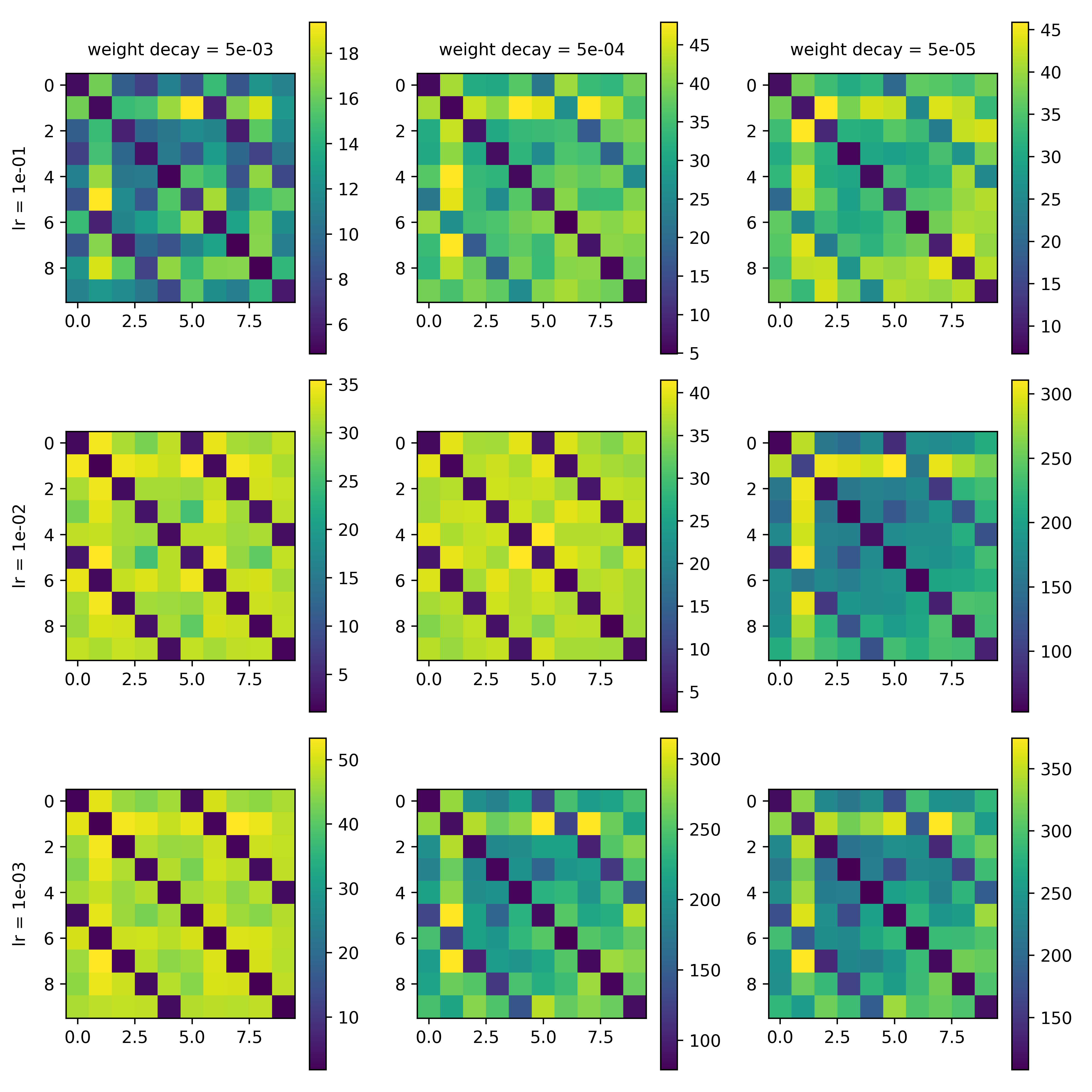}
        \caption{The heatmaps of class distance matrices of different hyper-parameter combinations at epoch $120$.}
        \label{fig:distance-cifar-5-120}
    \end{minipage}
\end{figure}

\begin{figure}
    \begin{minipage}{0.49\linewidth}
        \centering
        \includegraphics[scale=0.30]{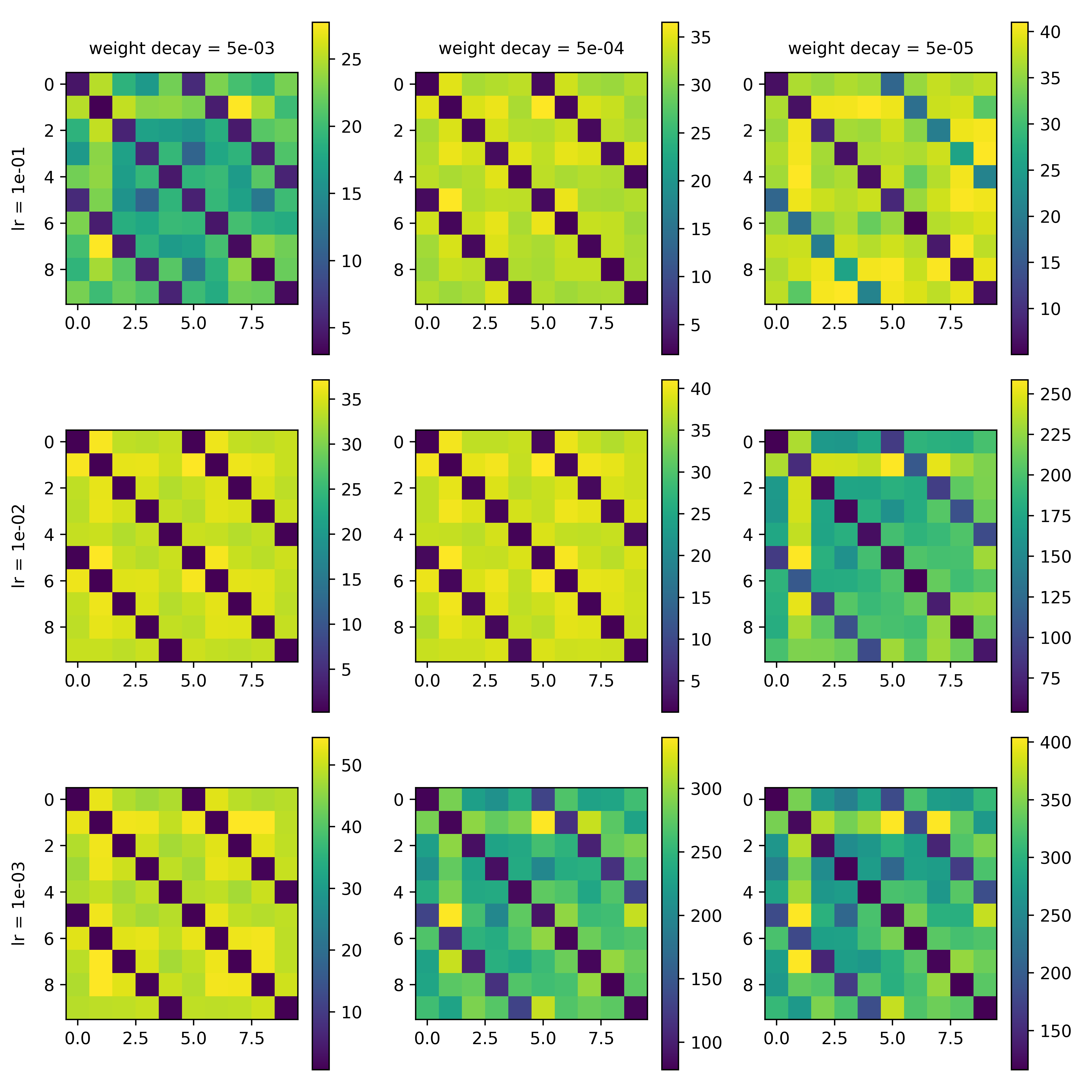}
        \caption{The heatmaps of class distance matrices of different hyper-parameter combinations at epoch $240$.}
        \label{fig:distance-cifar-5-240}
    \end{minipage}
    \hfill
    \begin{minipage}{0.49\linewidth}
        \centering
        \includegraphics[scale=0.30]{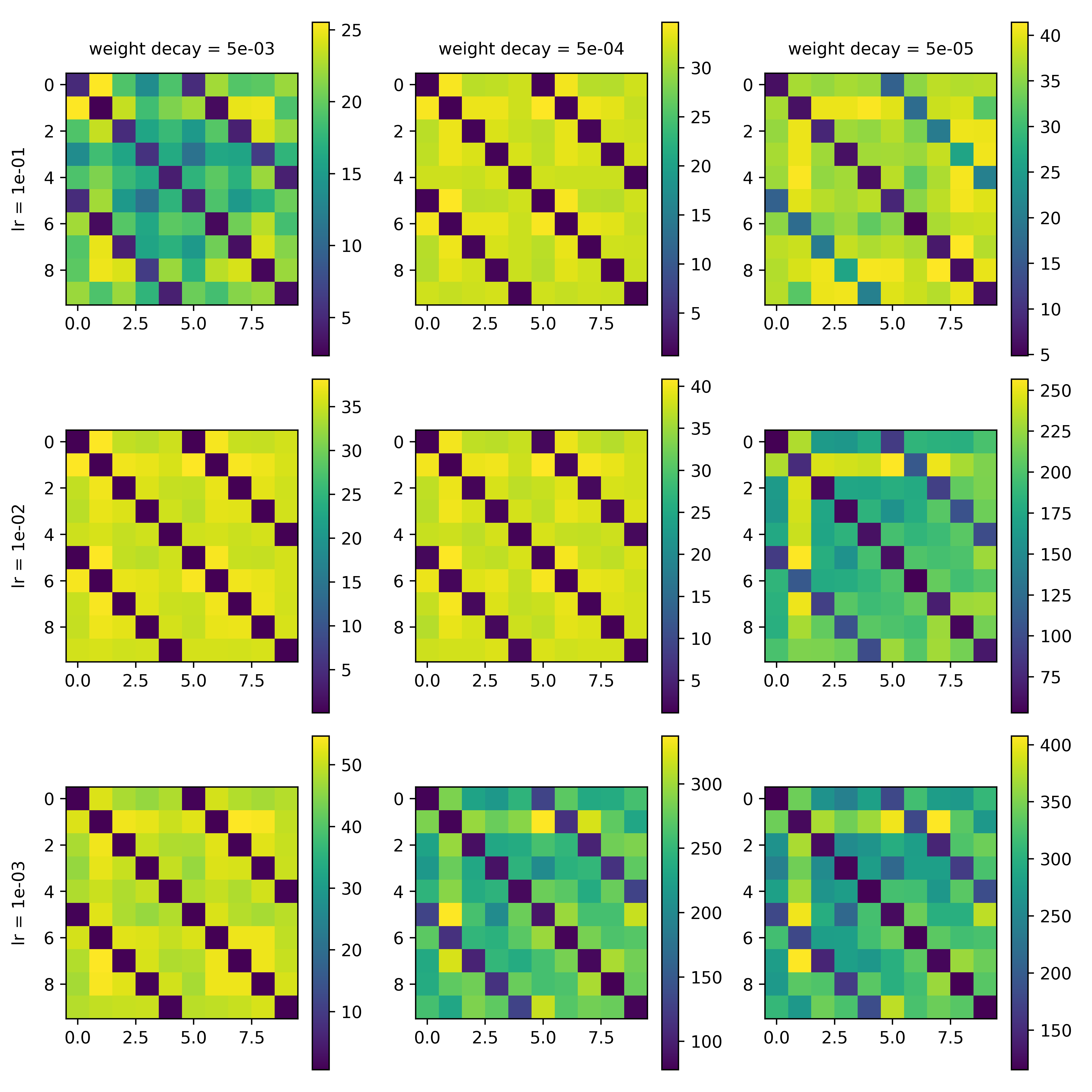}
        \caption{The heatmaps of class distance matrices of different hyper-parameter combinations at epoch $349$.}
        \label{fig:distance-cifar-5-350}
    \end{minipage}
\end{figure}

\subsection{Visualization}

In this section, we present the t-SNE visualization result of ResNet-18 on Coarse CIFAR-10 in \Cref{fig:visual-cifar-5-123,fig:visual-cifar-5-456,fig:visual-cifar-5-789}. The results are divided into three groups, each of which has the same learning rate and the format of each group is the same as \Cref{fig:tsne-cifar-5-349-group2}.

\begin{figure}
    \centering
    \begin{minipage}{1\linewidth}
\centering
        \includegraphics[scale=0.12]{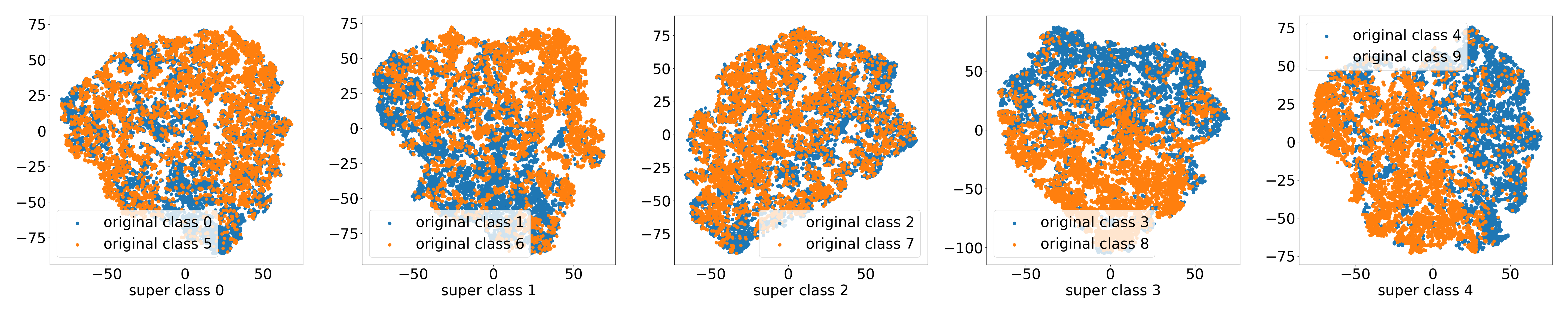}
    \end{minipage}
    \begin{minipage}{1\linewidth}
\centering
        \includegraphics[scale=0.12]{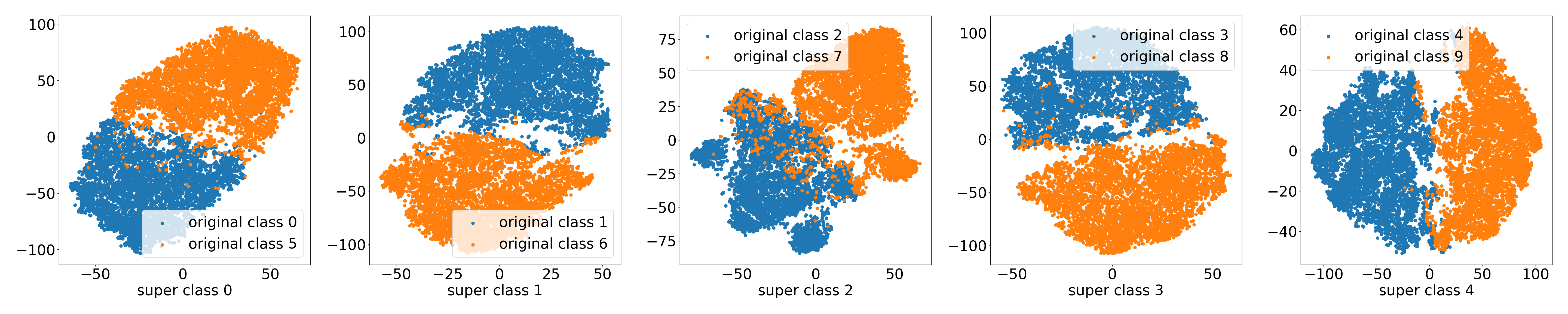}
    \end{minipage}
    \begin{minipage}{1\linewidth}\centering
        \includegraphics[scale=0.12]{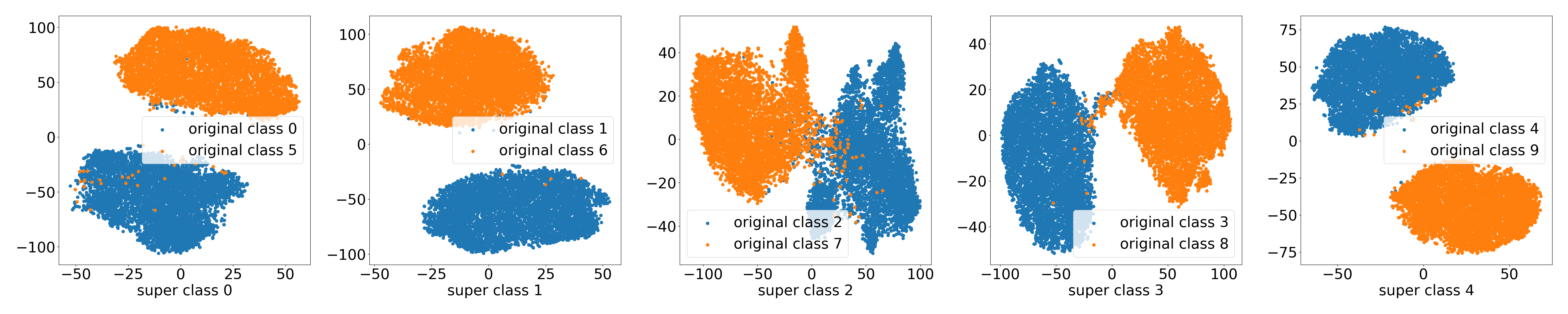}
    \end{minipage}
    \caption{Visualization of last layer representations of ResNet-18 trained on Coarse CIFAR-10 with learning rate = $0.1$. Each row represents a hyper-parameter combination and each column represents a super-class. The weight decay rates from top to bottom are $5\times 10^{-3},5\times 10^{-4},5\times 10^{-5}$.}
    \label{fig:visual-cifar-5-123}
\end{figure}
\begin{figure}
\centering
    \begin{minipage}{1\linewidth}\centering
        \includegraphics[scale=0.12]{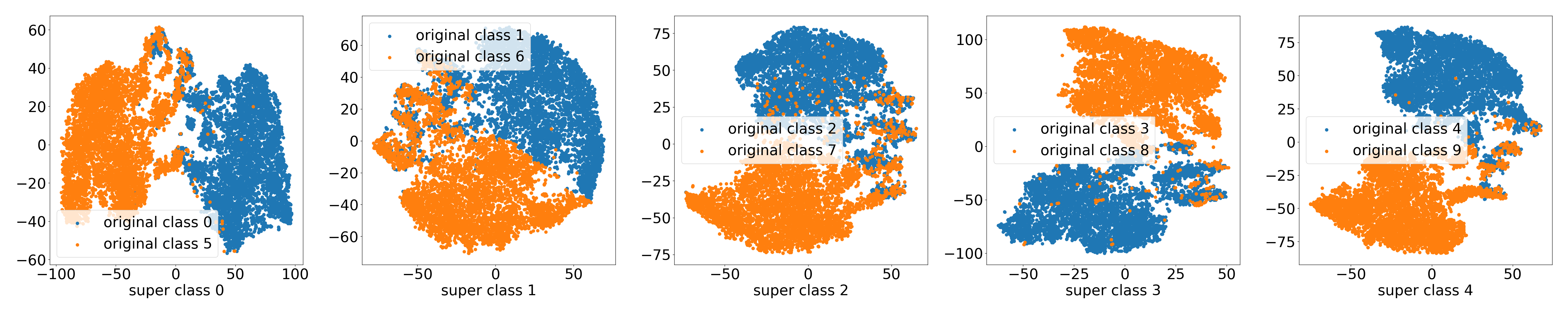}
    \end{minipage}
    \begin{minipage}{1\linewidth}\centering
        \includegraphics[scale=0.12]{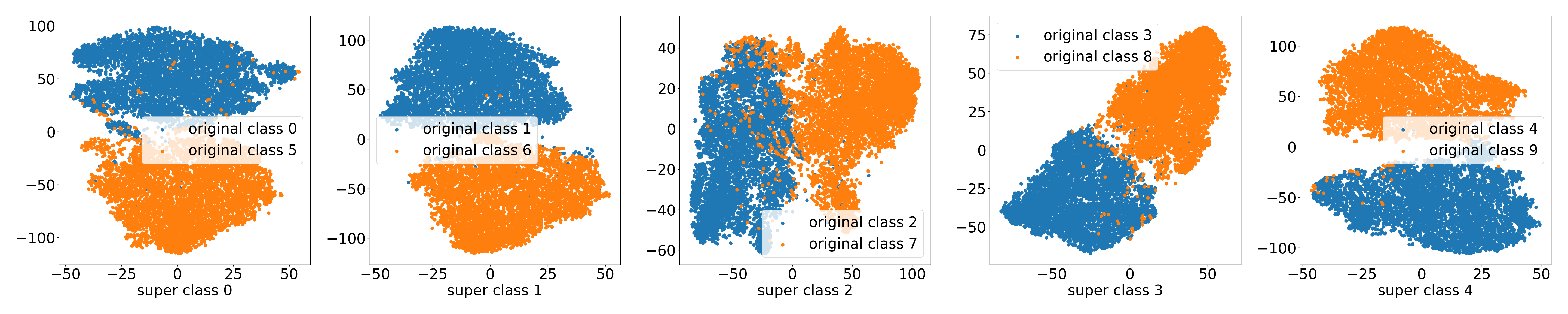}
    \end{minipage}
    \begin{minipage}{1\linewidth}\centering
        \includegraphics[scale=0.12]{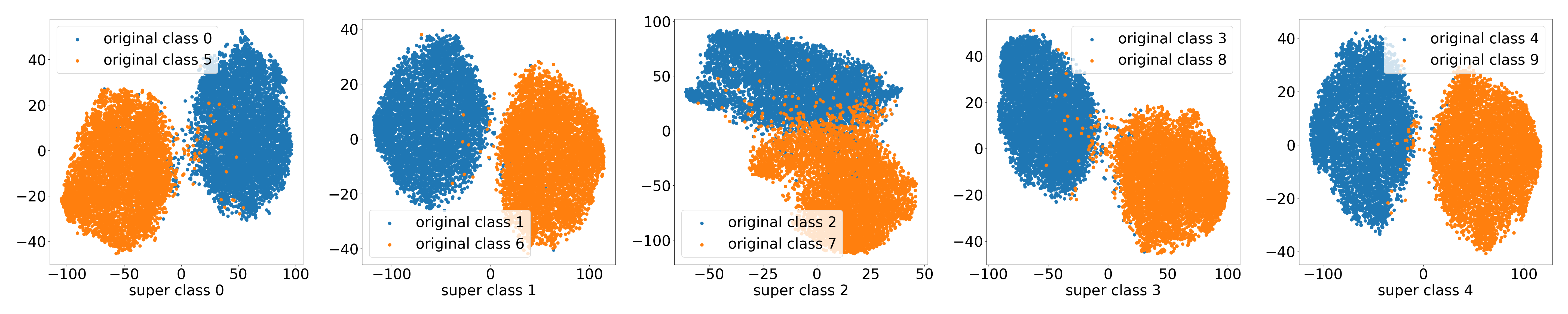}
    \end{minipage}
    \caption{Visualization of last layer representations of ResNet-18 trained on Coarse CIFAR-10 with learning rate = $0.01$. Each row represents a hyper-parameter combination and each column represents a super-class. The weight decay rates from top to bottom are $5\times 10^{-3},5\times 10^{-4},5\times 10^{-5}$.}
    \label{fig:visual-cifar-5-456}
\end{figure}
\begin{figure}
\centering
    \begin{minipage}{1\linewidth}\centering
        \includegraphics[scale=0.12]{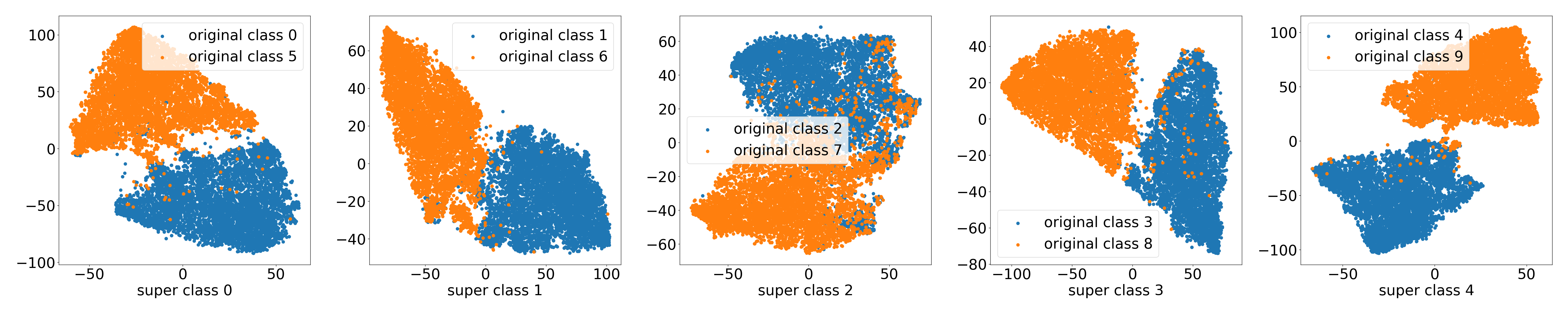}
    \end{minipage}
    \begin{minipage}{1\linewidth}\centering
        \includegraphics[scale=0.12]{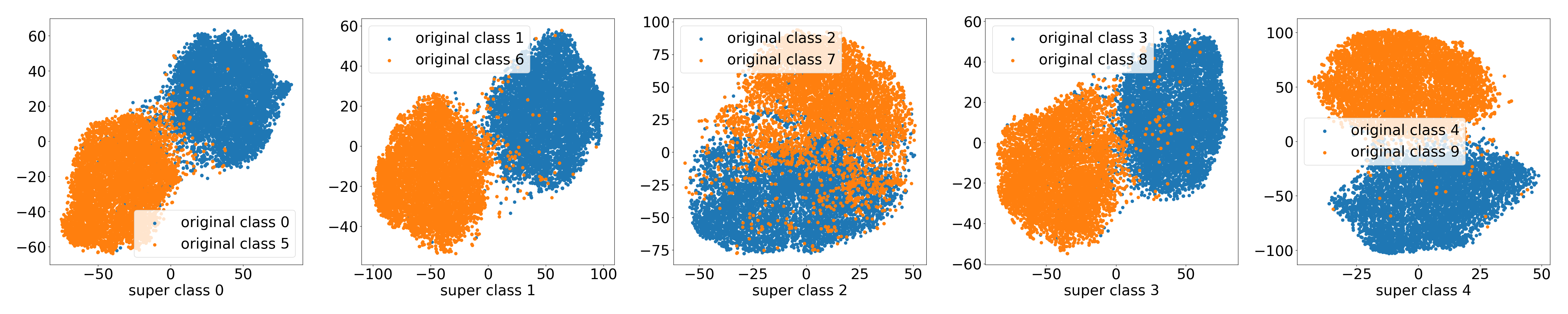}
    \end{minipage}
    \begin{minipage}{1\linewidth}\centering
        \includegraphics[scale=0.12]{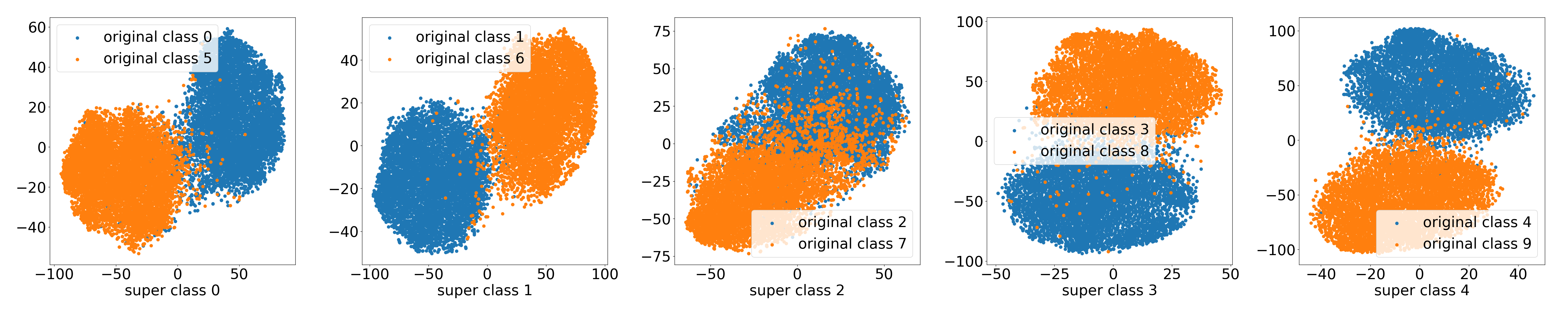}
    \end{minipage}
    \caption{Visualization of last layer representations of ResNet-18 trained on Coarse CIFAR-10 with learning rate = $0.001$. Each row represents a hyper-parameter combination and each column represents a super-class. The weight decay rates from top to bottom are $5\times 10^{-3},5\times 10^{-4},5\times 10^{-5}$.}
    \label{fig:visual-cifar-5-789}
\end{figure}

\subsection{Cluster-and-Linear-Probe}

The Cluster-and-Linear-Probe test results of ResNet-18 trained on Coarse CIFAR-10 with all hyper-parameter combinations are presented in \Cref{fig:cluster-and-train-cifar-5}. 

\begin{figure}
    \begin{minipage}{0.49\linewidth}
        \centering
        \includegraphics[scale=0.30]{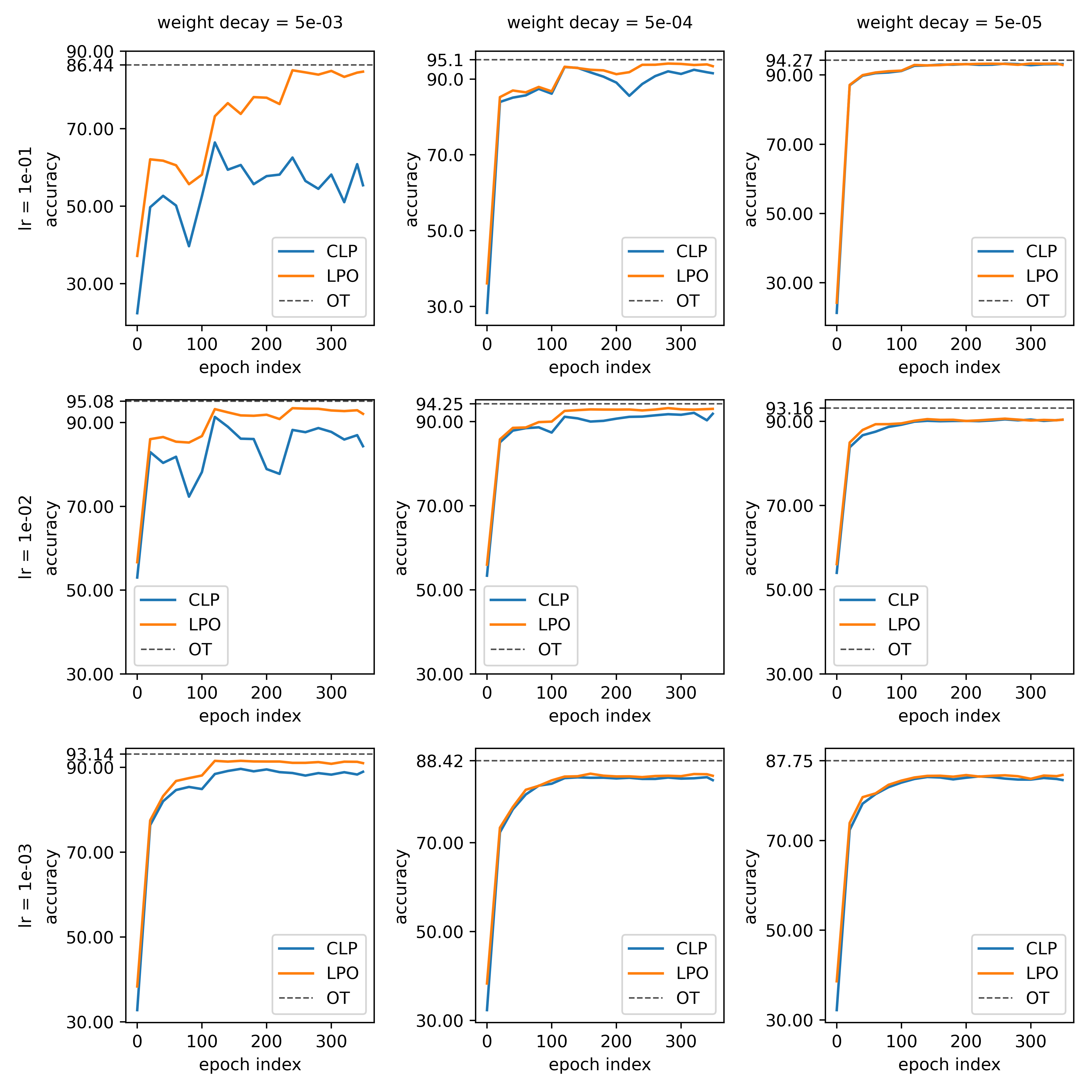}
        \caption{The result of Cluster-and-Linear-Probe test. In the figure``CLP'' refers to Cluster-and-Linear-Probe, ``LPO'' refers to linear probe with original labels and ``OT' refers to the test set accuracy of model trained on original CIFAR-10.}
        \label{fig:cluster-and-train-cifar-5}
    \end{minipage}
    \hfill
    \begin{minipage}{0.49\linewidth}
        \centering
        \includegraphics[scale=0.30]{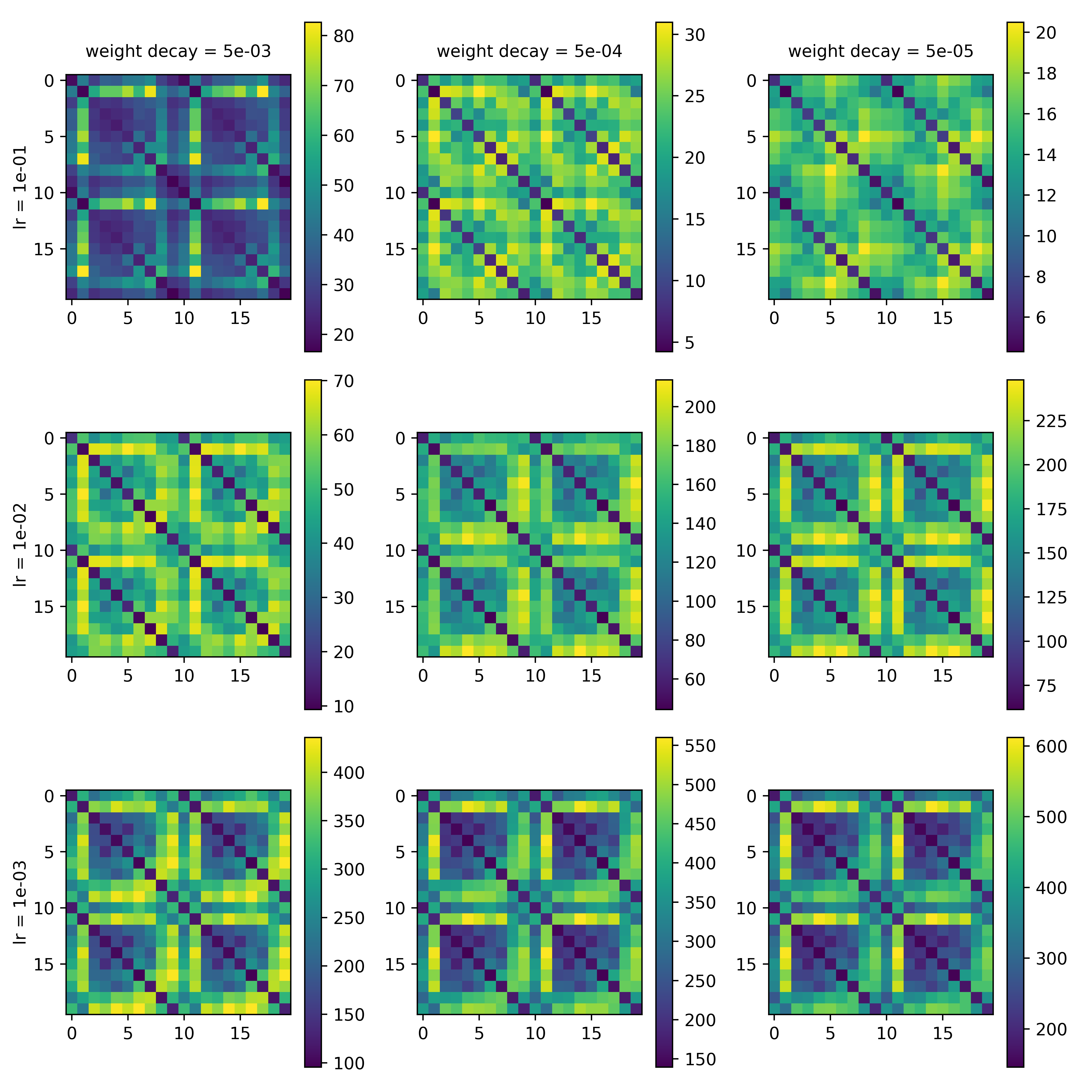}
        \caption{The heatmaps of class distance matrices of different hyper-parameter combinations on Fine CIFAR-10 at epoch $20$.}
        \label{fig:distance-cifar-20-20}
    \end{minipage}

\end{figure}

\section{Complete Class Distance Result of Fine CIFAR-10}\label{sec:complete-cifar-20}

In this section, we provide the visualization of the class distance matrix of Fine CIFAR-10 with all hyper-parameter combinations, which has been partially displayed in \Cref{sec:a-refined-dataset}. As before, multiple epochs during training are selected to display a evolutionary trend of the class distance matrices. The results are presented in \Cref{fig:distance-cifar-20-20,fig:distance-cifar-20-200,fig:distance-cifar-20-350}, whose epoch numbers are $20,200$ and $350$ respectively.

\begin{figure}
    \begin{minipage}{0.49\linewidth}
        \centering
        \includegraphics[scale=0.30]{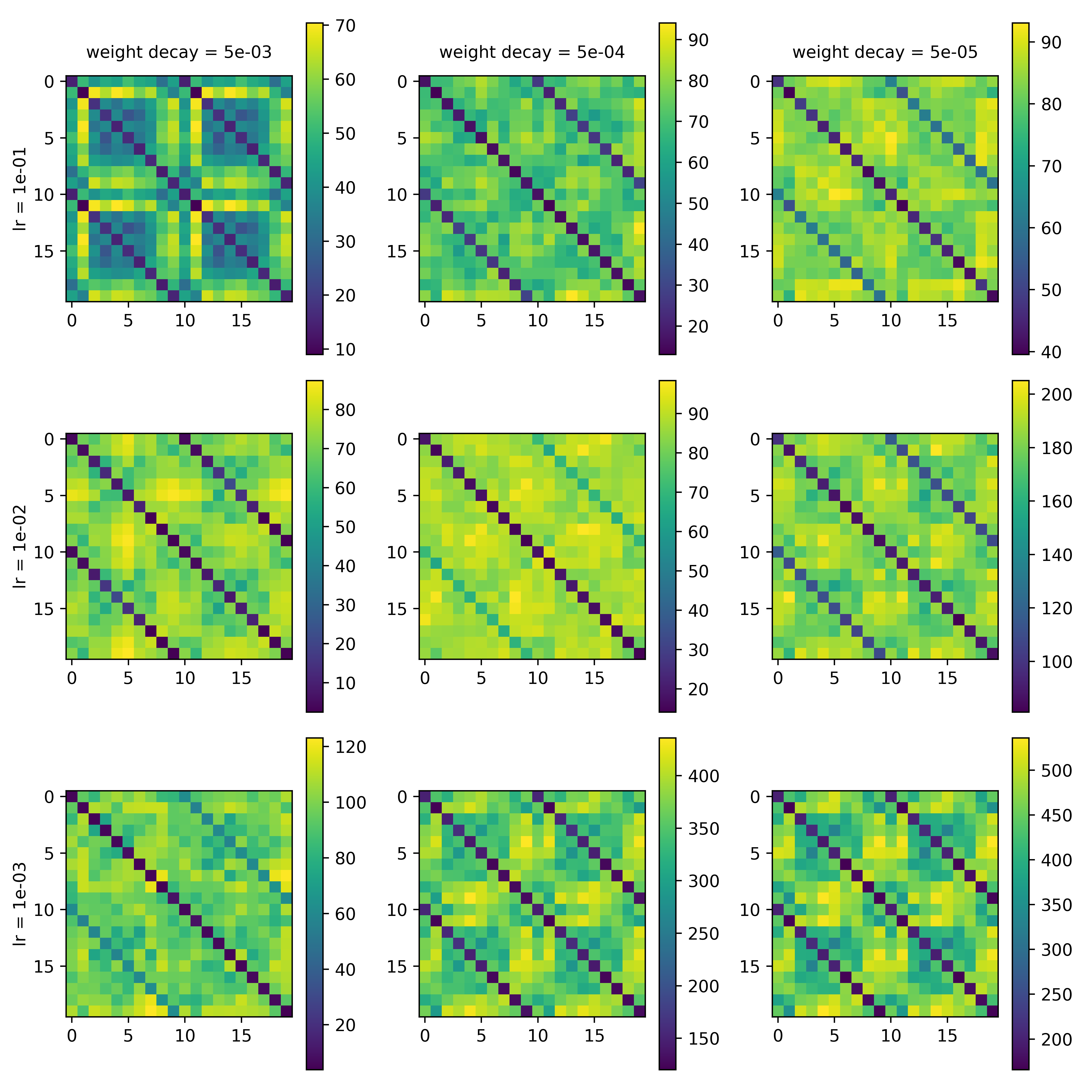}
        \caption{The heatmaps of class distance matrices of different hyper-parameter combinations on Fine CIFAR-10 at epoch $200$.}
        \label{fig:distance-cifar-20-200}
    \end{minipage}
    \hfill
    \begin{minipage}{0.49\linewidth}
        \centering
        \includegraphics[scale=0.30]{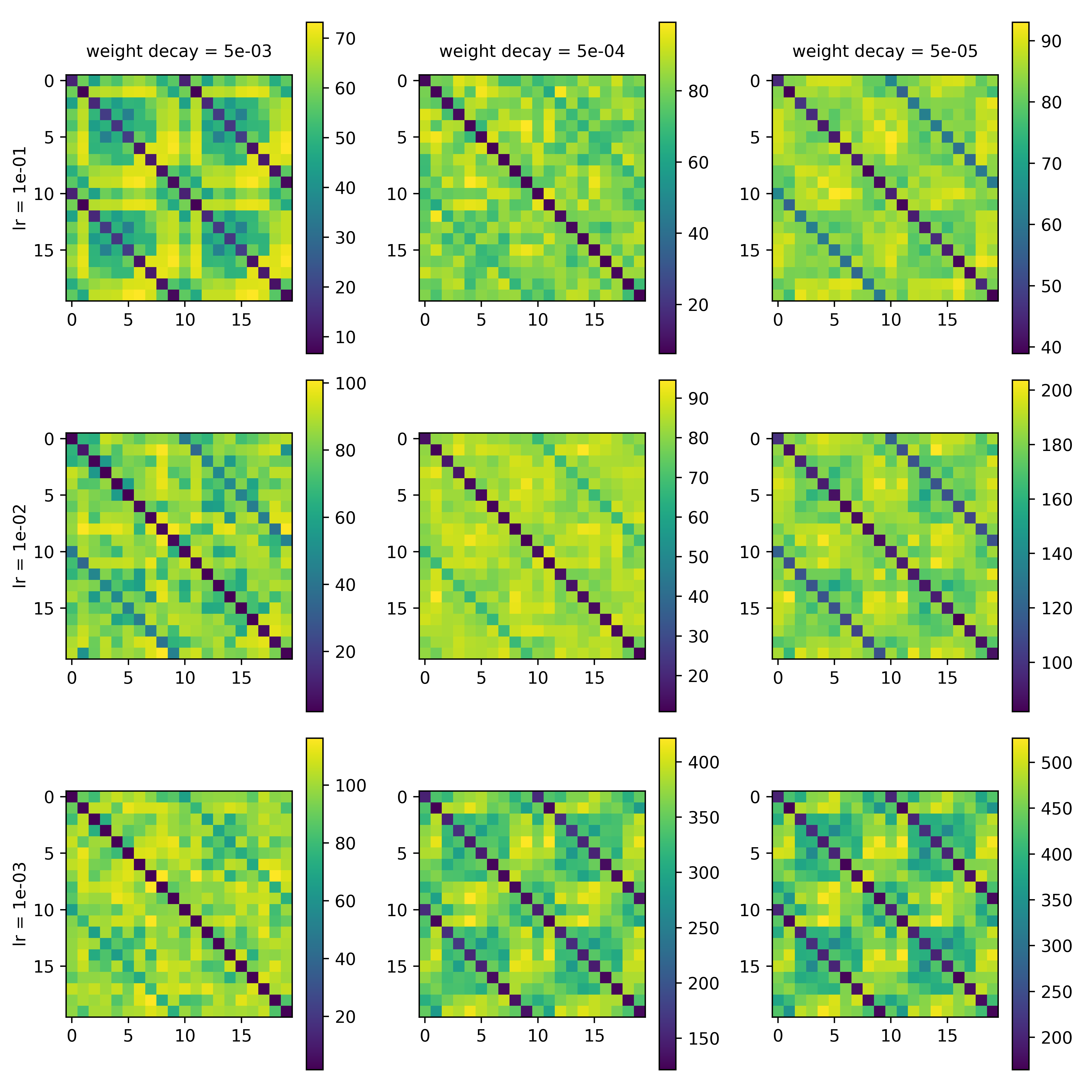}
        \caption{The heatmaps of class distance matrices of different hyper-parameter combinations on Fine CIFAR-10 at epoch $350$.}
        \label{fig:distance-cifar-20-350}
    \end{minipage}
\end{figure}

\section{Experiment of ResNet-18 on Coarse CIFAR-100}\label{sec:complete-rand-cifar-100}

In this section, we report extended experiment results on Coarse CIFAR-100. The same with the case of Coarse CIFAR-10, we construct CIFAR-100 through the label coarsening process described in \Cref{sec:experiment-setup} and choose $\tilde C = 20$, so that every $5$ original classes are merged into one super-class. We repeat most of experiments in \Cref{sec:experiment-on-coarse-cifar-10}.

\subsection{Class Distance}

The heatmaps of distance matrices are presented in \Cref{fig:distance-cifar-100-20,fig:distance-cifar-100-200,fig:distance-cifar-100-350}, whose epoch number are $20,200$ and $350$ respectively.

\begin{figure}
    \begin{minipage}{0.49\linewidth}
        \centering
        \includegraphics[scale=0.30]{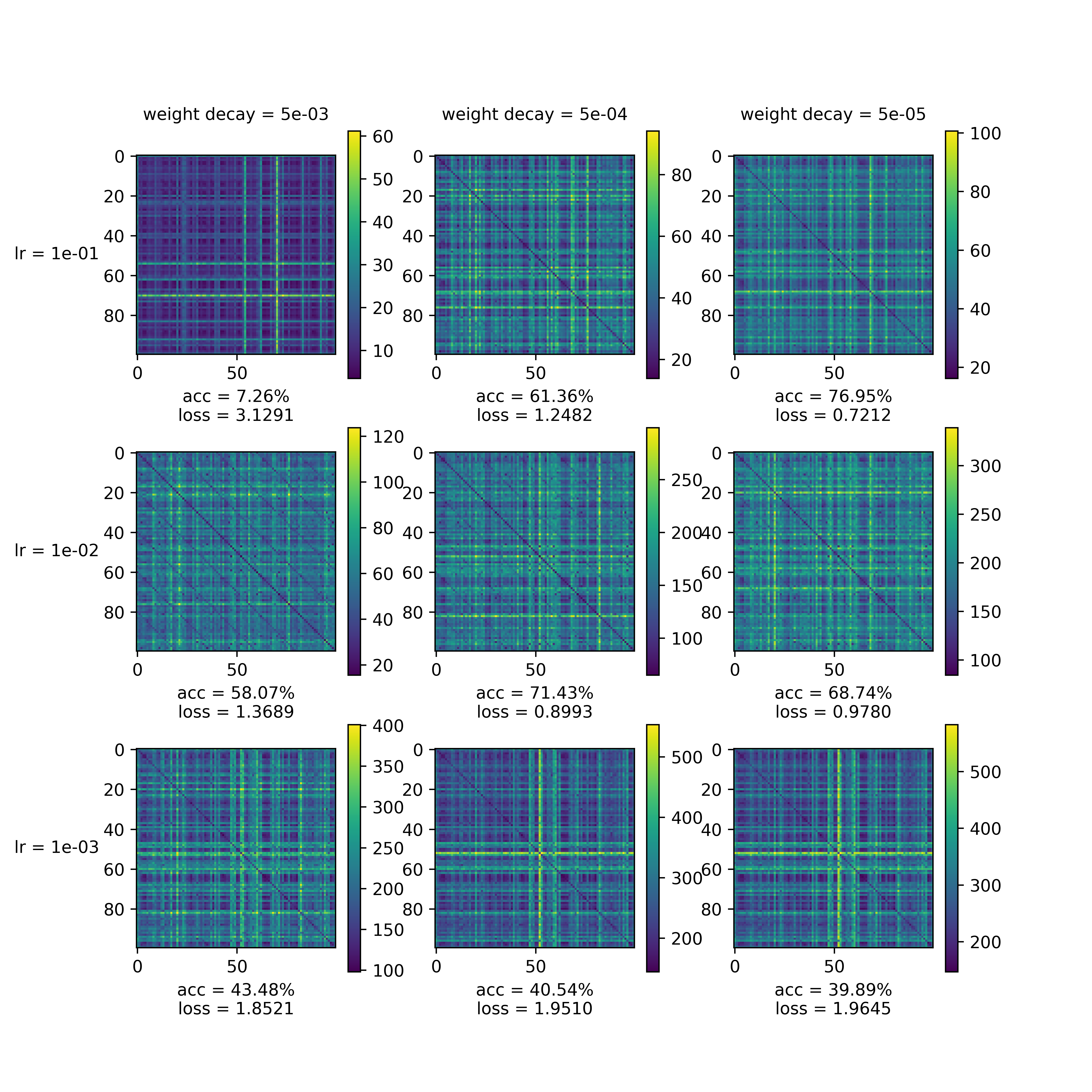}
        \caption{The heatmaps of class distance matrices of different hyper-parameter combinations on Coarse CIFAR-100 at epoch $20$.}
        \label{fig:distance-cifar-100-20}
    \end{minipage}
    \hfill
    \begin{minipage}{0.49\linewidth}
        \centering
        \includegraphics[scale=0.30]{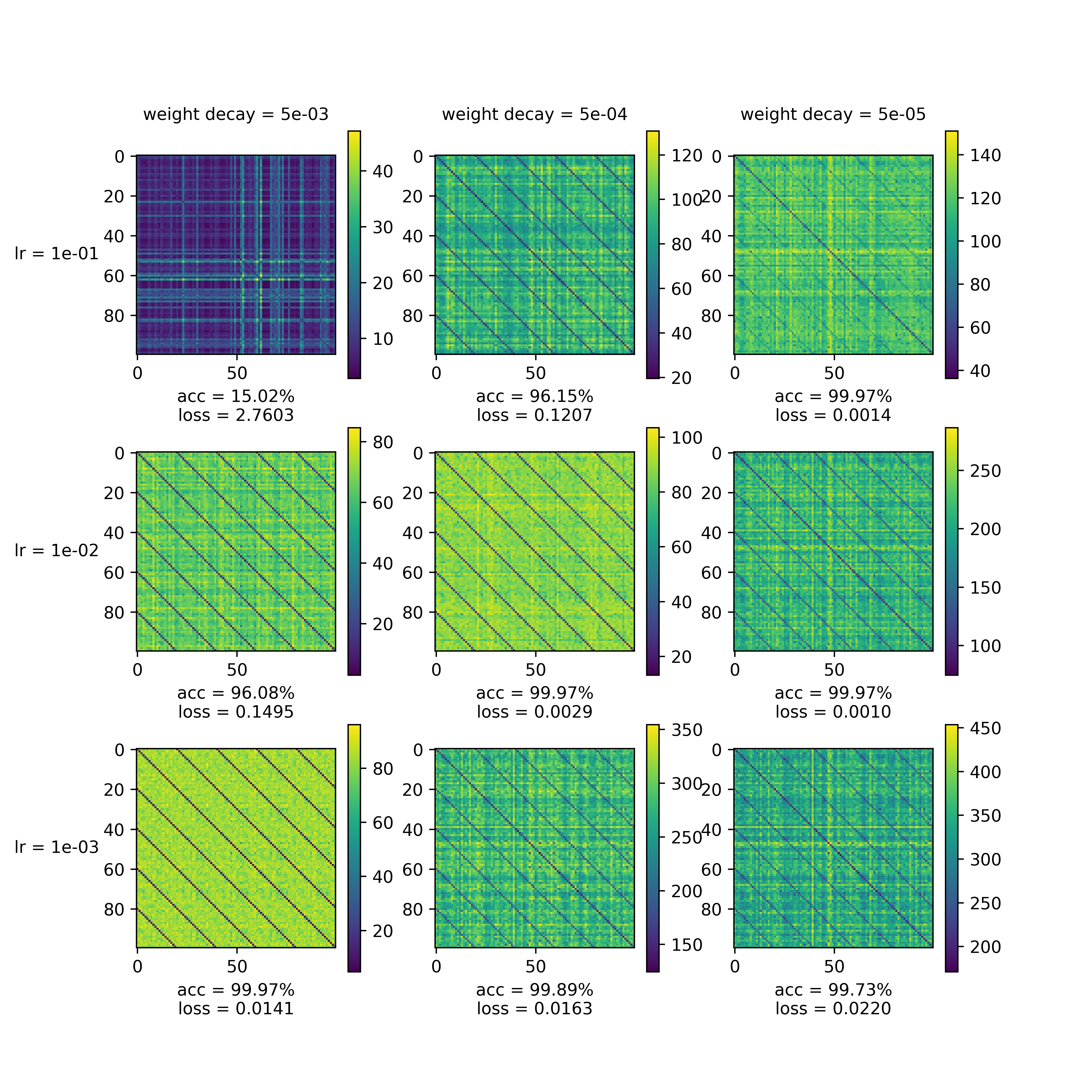}
        \caption{The heatmaps of class distance matrices of different hyper-parameter combinations on Coarse CIFAR-100 at epoch $200$.}
        \label{fig:distance-cifar-100-200}
    \end{minipage}
\end{figure}

\subsection{Visualization}

In this section, we present the t-SNE visualization result of ResNet-18 on Coarse CIFAR-100. We only put the result for learning rate = 0.01 and weight decay = 1e-4 here as a demonstration in \Cref{fig:visual-cifar-100-1}.

\begin{figure}
    \centering
    \includegraphics[scale=0.15]{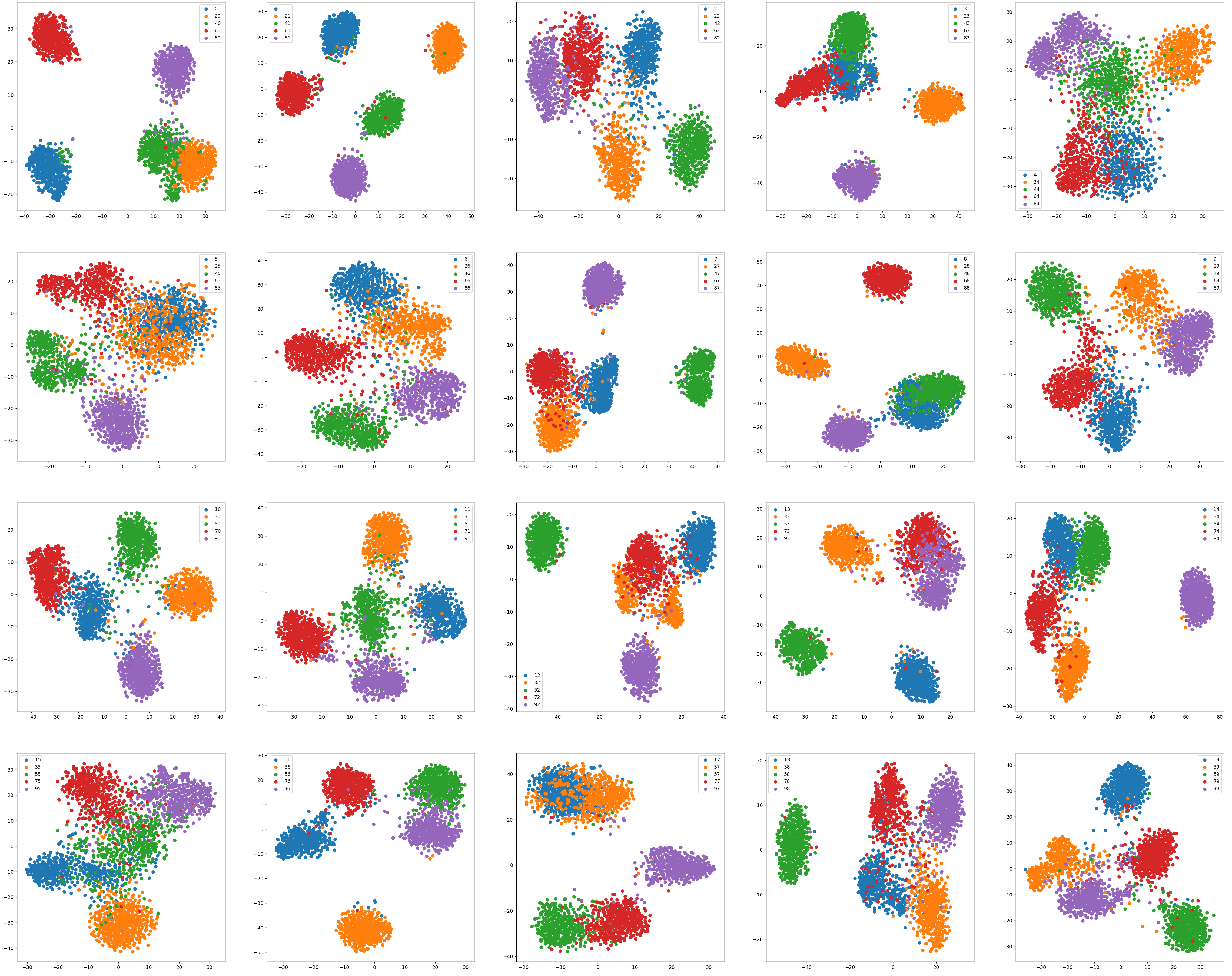}
    \caption{Visualization of last layer representations of ResNet-18 trained on Coarse CIFAR-100. Each grid represents a super-class and each color in a grid represents a sub-class.}
    \label{fig:visual-cifar-100-1}
\end{figure}

\subsection{Cluster-and-Linear-Probe}

Notice that we omit the visualization result of Coarse CIFAR-100 since there are too many figures. We present the Cluster-and-Linear-Probe results to reflect the clustering property of last-layer representations learned on Coarse CIFAR-100. The CLP results are presented in \Cref{fig:cluster-and-train-cifar-100-rand}.

\begin{figure}
    \begin{minipage}{0.49\linewidth}
        \centering
        \includegraphics[scale=0.30]{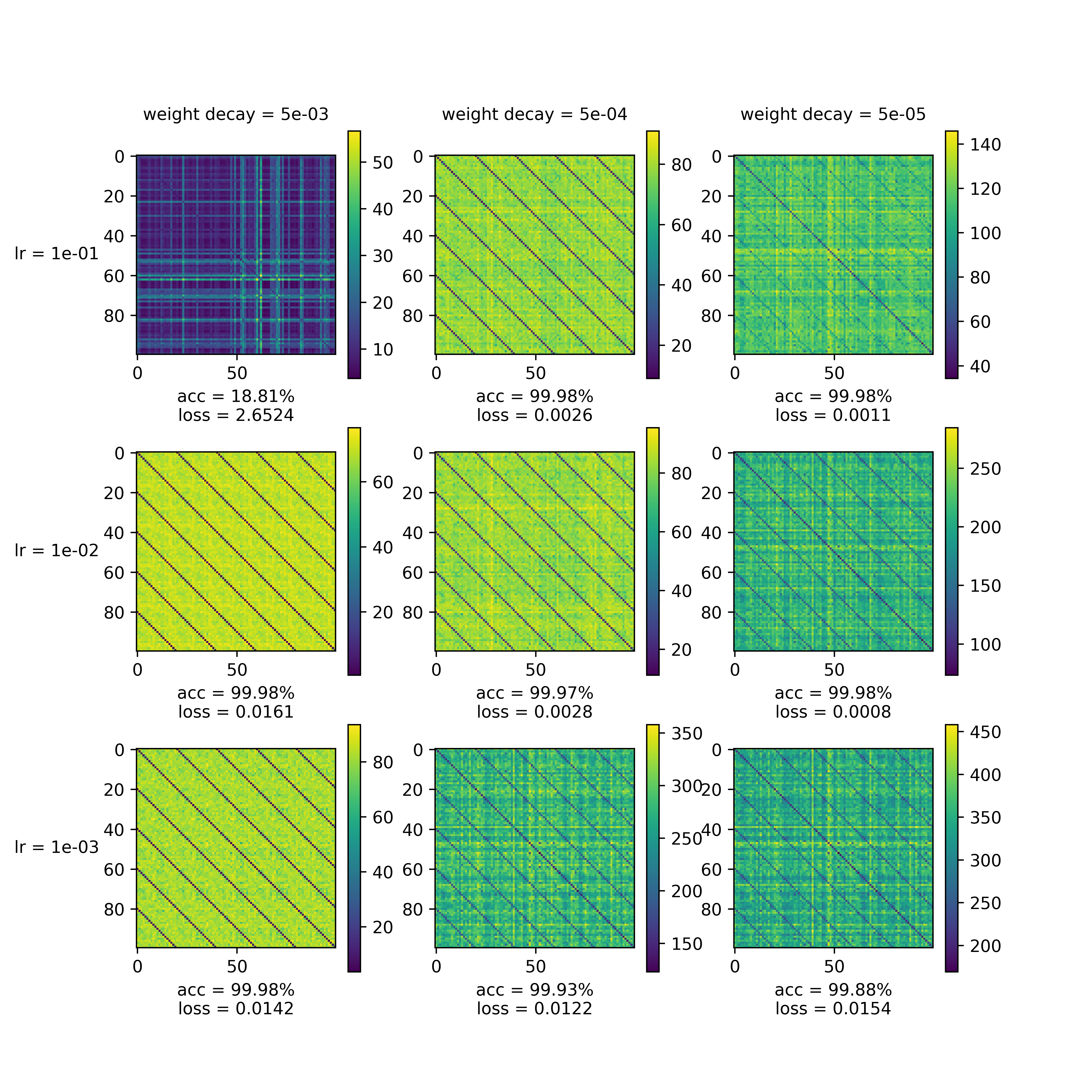}
        \caption{The heatmaps of class distance matrices of different hyper-parameter combinations on Coarse CIFAR-100 at epoch $350$.}
        \label{fig:distance-cifar-100-350}
    \end{minipage}
    \hfill
    \begin{minipage}{0.49\linewidth}
        \centering
        \includegraphics[scale=0.30]{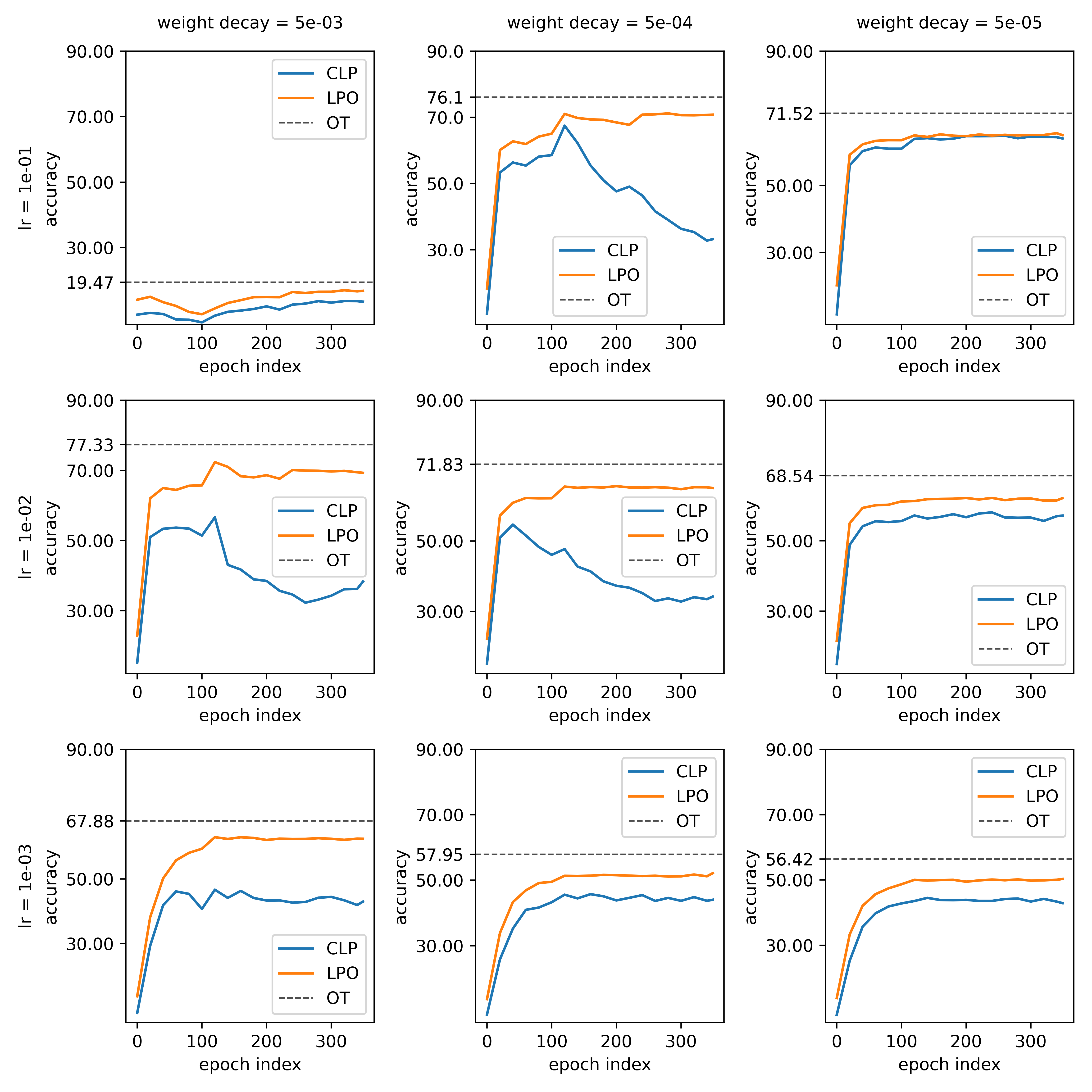}
        \caption{The result of Cluster-and-Linear-Probe test. In the figure``CLP'' refers to Cluster-and-Linear-Probe, ``LPO'' refers to linear probe with original labels and ``OT' refers to the test set accuracy of model trained on original CIFAR-100.}
        \label{fig:cluster-and-train-cifar-100-rand}
    \end{minipage}
\end{figure}

\section{Random Coarse CIFAR-10}\label{sec:complete-rand-cifar-10}

In this section, as mentioned in \Cref{sec:experiment-setup}, we make our experiment more complete by performing a random combination of labels on CIFAR-10 rather than using a determined coarsening process as in the main paper. The dataset construction is almost the same as the process of assigning coarse labels described in \Cref{sec:experiment-setup}, except here we randomly shuffle the class indices before coarsening them. 

The class distance matrices of three difference epochs are shown in \Cref{fig:distance-cifar-5-rand-20,fig:distance-cifar-5-rand-200,fig:distance-cifar-5-rand-350}. From the results we can see, although there are no longer three dark lines, for each row there are generally two dark blocks, represents the original classes belongs to the same super-class, and the same observations in \Cref{sec:experiment-on-coarse-cifar-10} can still be made here.

\begin{figure}
    \begin{minipage}{0.49\linewidth}
        \centering
        \includegraphics[scale=0.30]{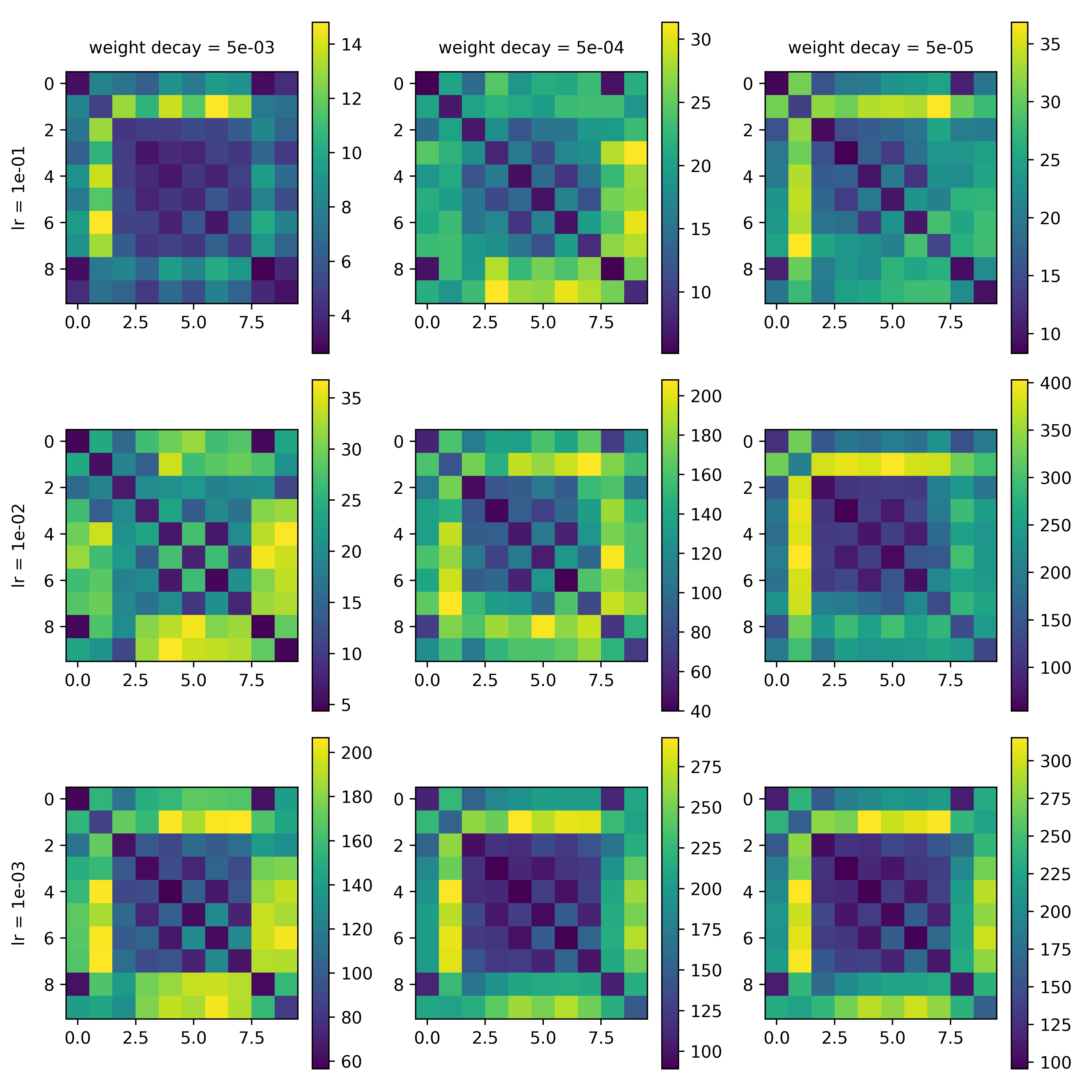}
        \caption{The heatmaps of class distance matrices of different hyper-parameter combinations on Random Coarse CIFAR-10 at epoch $20$.}
        \label{fig:distance-cifar-5-rand-20}
    \end{minipage}
    \hfill
    \begin{minipage}{0.49\linewidth}
        \centering
        \includegraphics[scale=0.30]{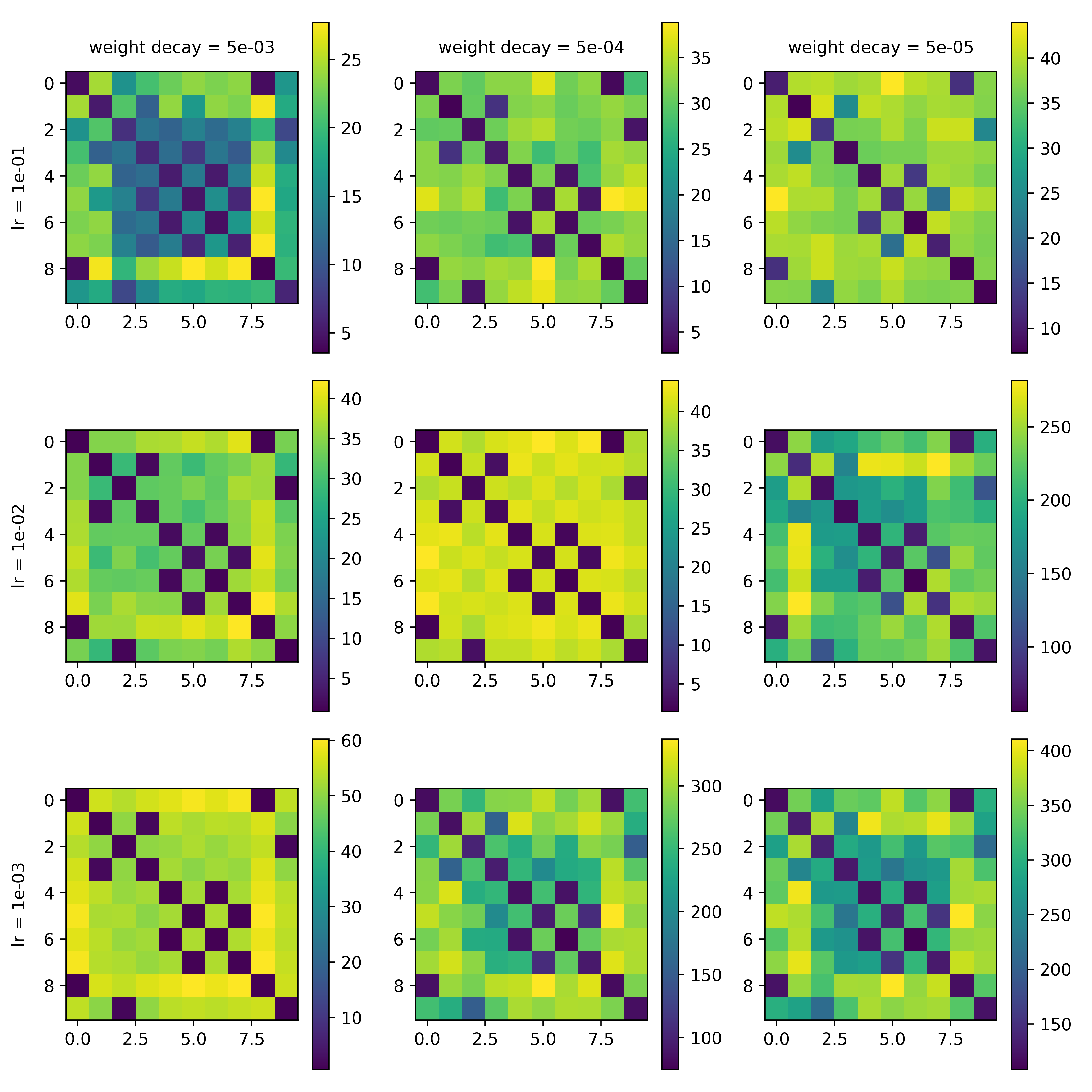}
        \caption{The heatmaps of class distance matrices of different hyper-parameter combinations on Random Coarse CIFAR-10 at epoch $200$.}
        \label{fig:distance-cifar-5-rand-200}
    \end{minipage}
\end{figure}

\begin{figure}
    \begin{minipage}{0.49\linewidth}
        \centering
        \includegraphics[scale=0.30]{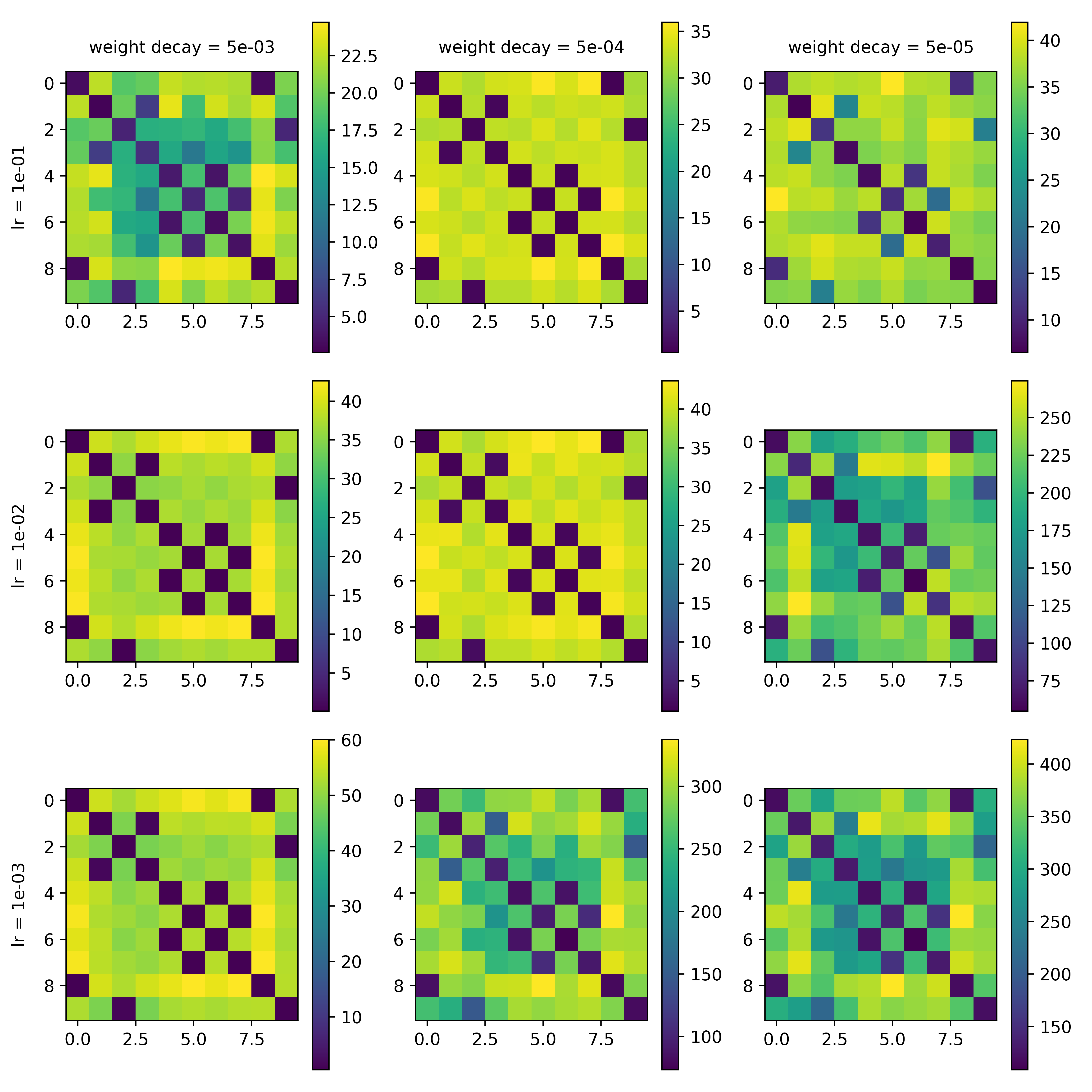}
        \caption{The heatmaps of class distance matrices of different hyper-parameter combinations on Random Coarse CIFAR-10 at epoch $350$.}
        \label{fig:distance-cifar-5-rand-350}
    \end{minipage}
    \hfill
    \begin{minipage}{0.49\linewidth}
        \centering
        \includegraphics[scale=0.30]{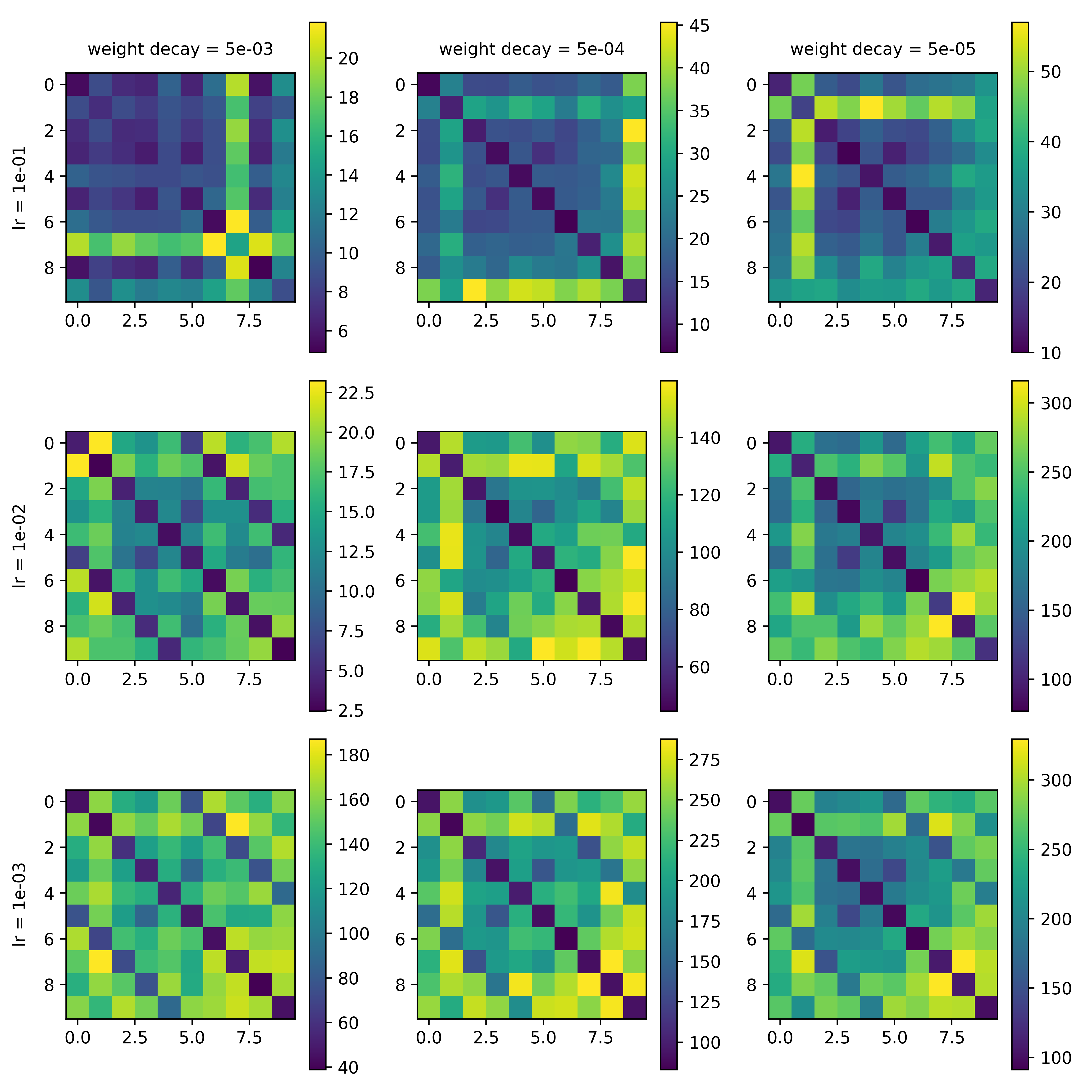}
        \caption{The heatmaps of class distance matrices of different hyper-parameter combinations with DenseNet-121 on Coarse CIFAR-10 at epoch $20$.}
        \label{fig:distance-cifar-5-densenet-20}
    \end{minipage}
\end{figure}

\section{Experiment with DenseNet}\label{sec:densenet}

We also perform our experiments with different neural network structures for completeness. In this section, we show the result with DenseNet-121 on Coarse CIFAR-10. The experiments with DenseNet is supportive to our observations in the main paper.

\subsection{Class Distance}

The class distance matrices of three epochs during training are presented in \Cref{fig:distance-cifar-5-densenet-20,fig:distance-cifar-5-densenet-200,fig:distance-cifar-5-densenet-350}.

\begin{figure}
    \begin{minipage}{0.49\linewidth}
        \centering
        \includegraphics[scale=0.30]{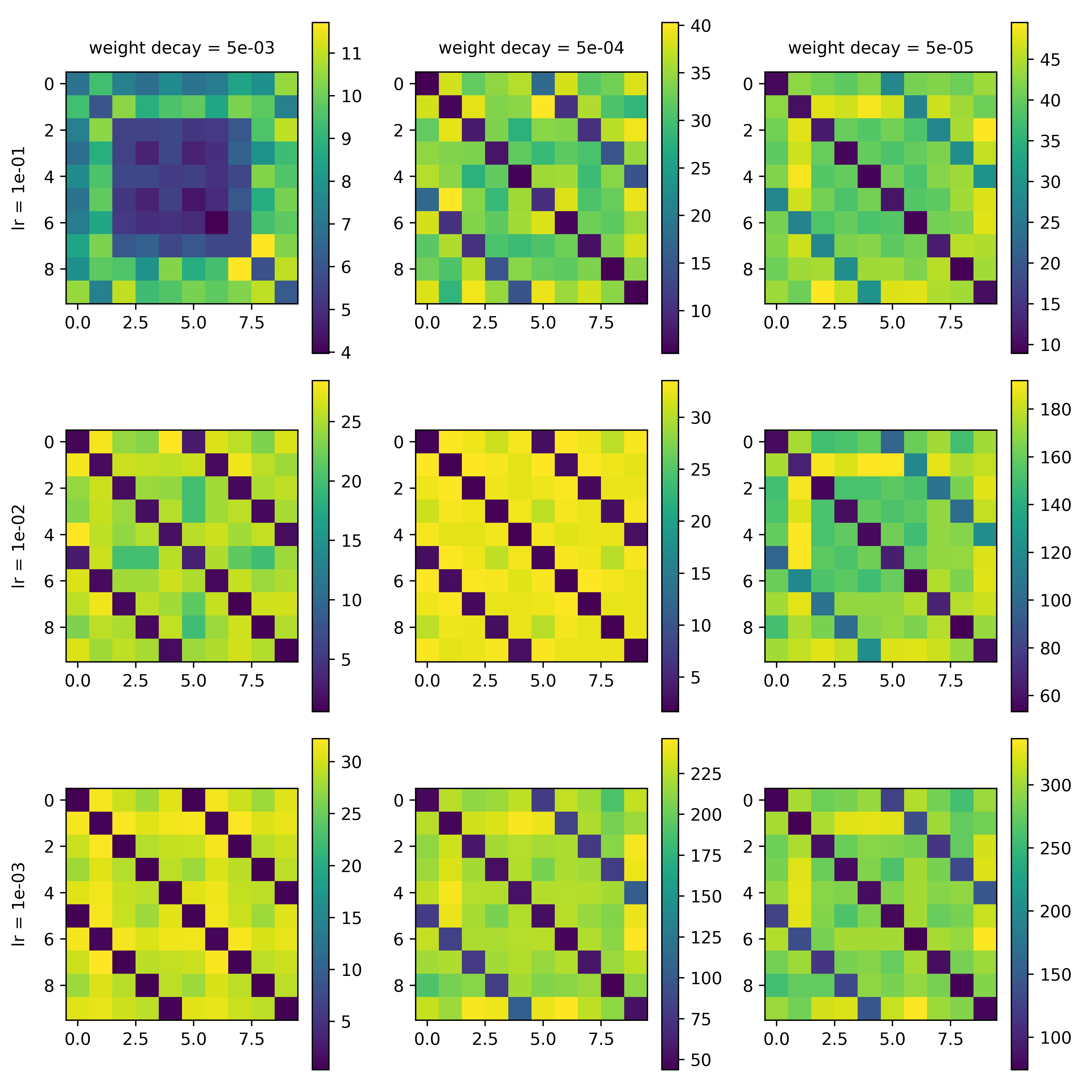}
        \caption{The heatmaps of class distance matrices of different hyper-parameter combinations with DenseNet-121 on Coarse CIFAR-10 at epoch $20$.}
        \label{fig:distance-cifar-5-densenet-200}
    \end{minipage}
    \hfill
    \begin{minipage}{0.49\linewidth}
        \centering
        \includegraphics[scale=0.30]{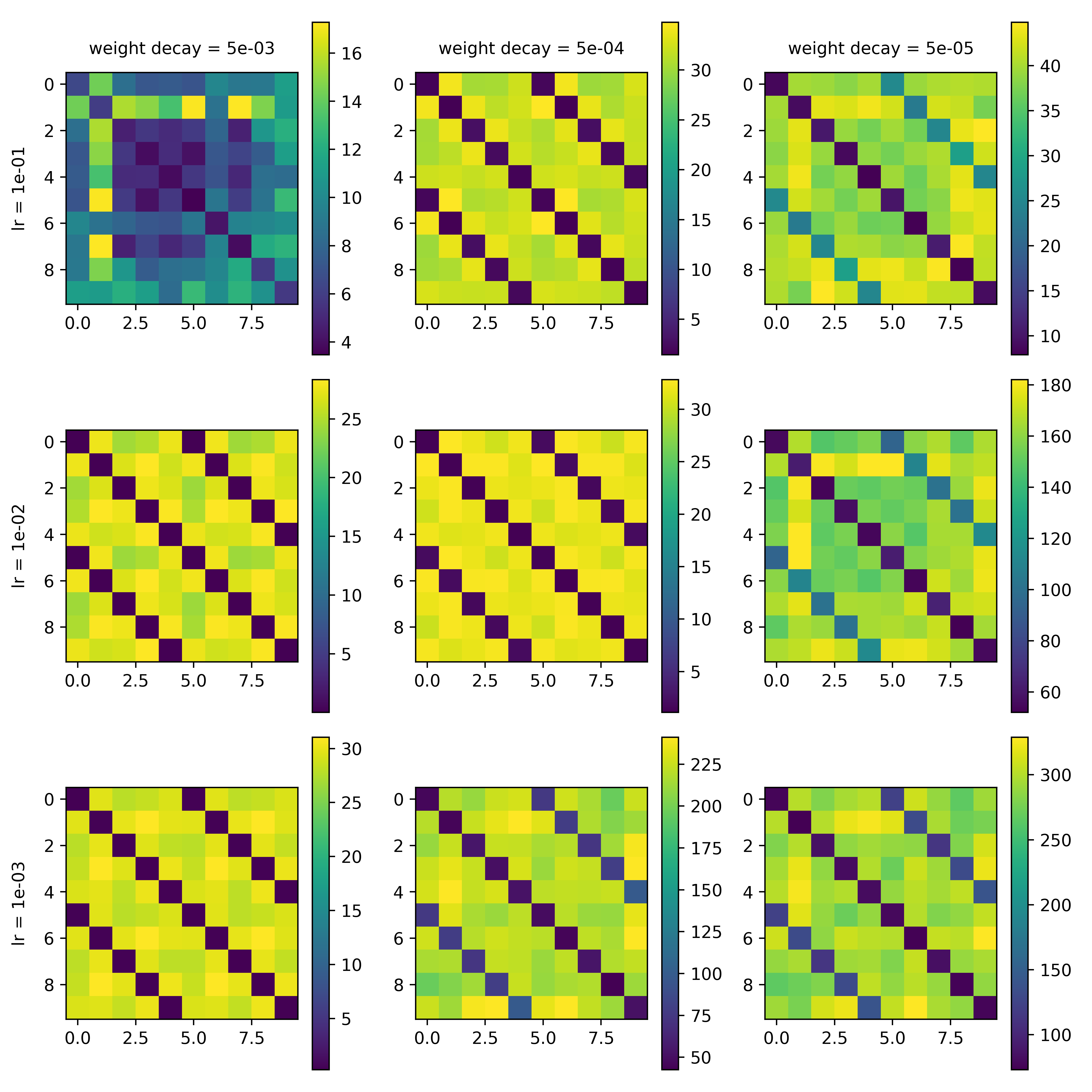}
        \caption{The heatmaps of class distance matrices of different hyper-parameter combinations with DenseNet-121 on Coarse CIFAR-10 at epoch $20$.}
        \label{fig:distance-cifar-5-densenet-350}
    \end{minipage}
\end{figure}

\subsection{Cluster-and-Linear-Probe}

The Cluster-and-Linear-Probe test results are presented in \Cref{fig:cluster-and-train-cifar-5-densenet}.

\begin{figure}
    \begin{minipage}{0.49\linewidth}
        \centering
        \includegraphics[scale=0.30]{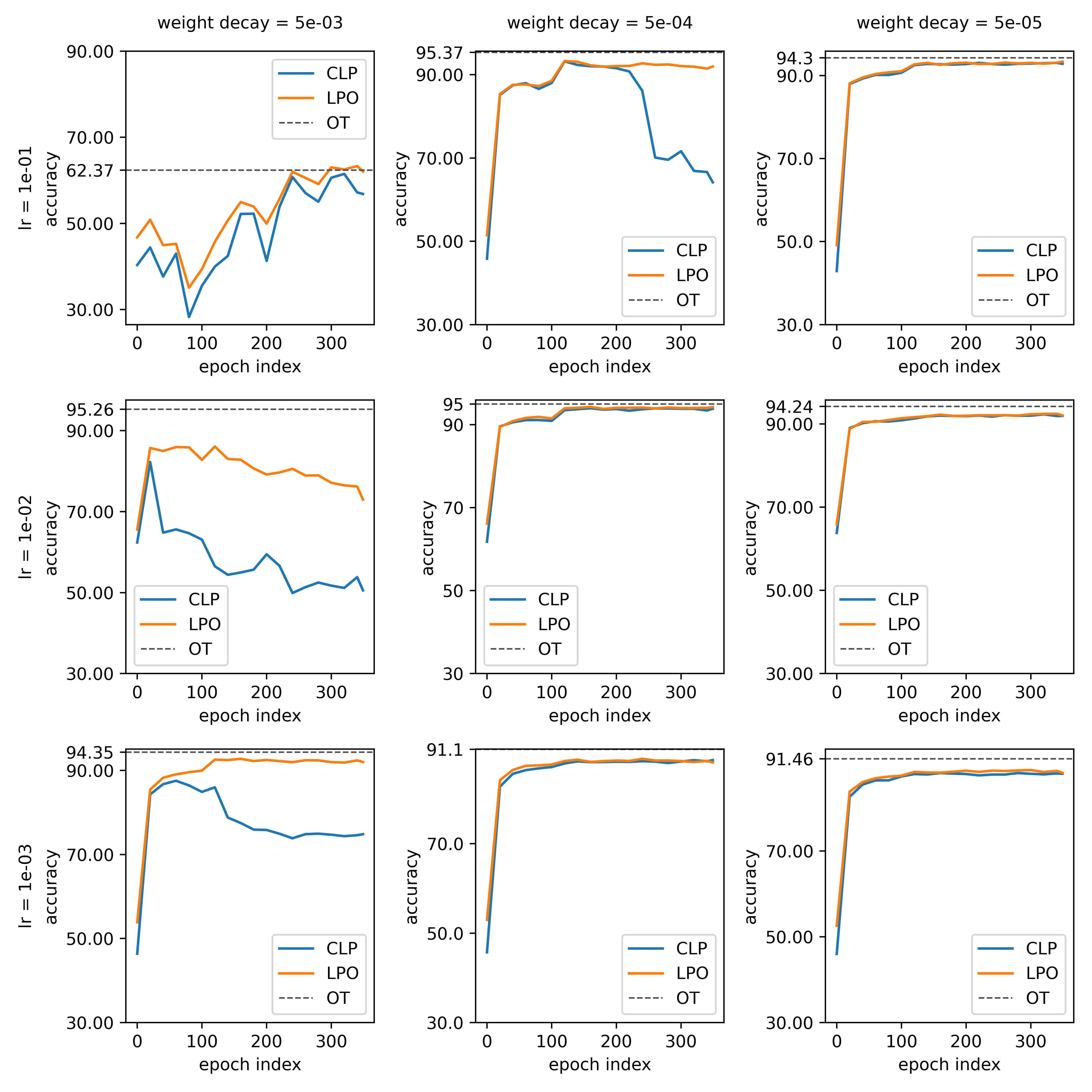}
        \caption{The result of Cluster-and-Linear-Probe test with DenseNet-121. In the figure``CLP'' refers to Cluster-and-Linear-Probe, ``LPO'' refers to linear probe with original labels and ``OT' refers to the test set accuracy of model trained on original CIFAR-10.}
        \label{fig:cluster-and-train-cifar-5-densenet}
    \end{minipage}
    \hfill
    \begin{minipage}{0.49\linewidth}
        \centering
        \includegraphics[scale=0.30]{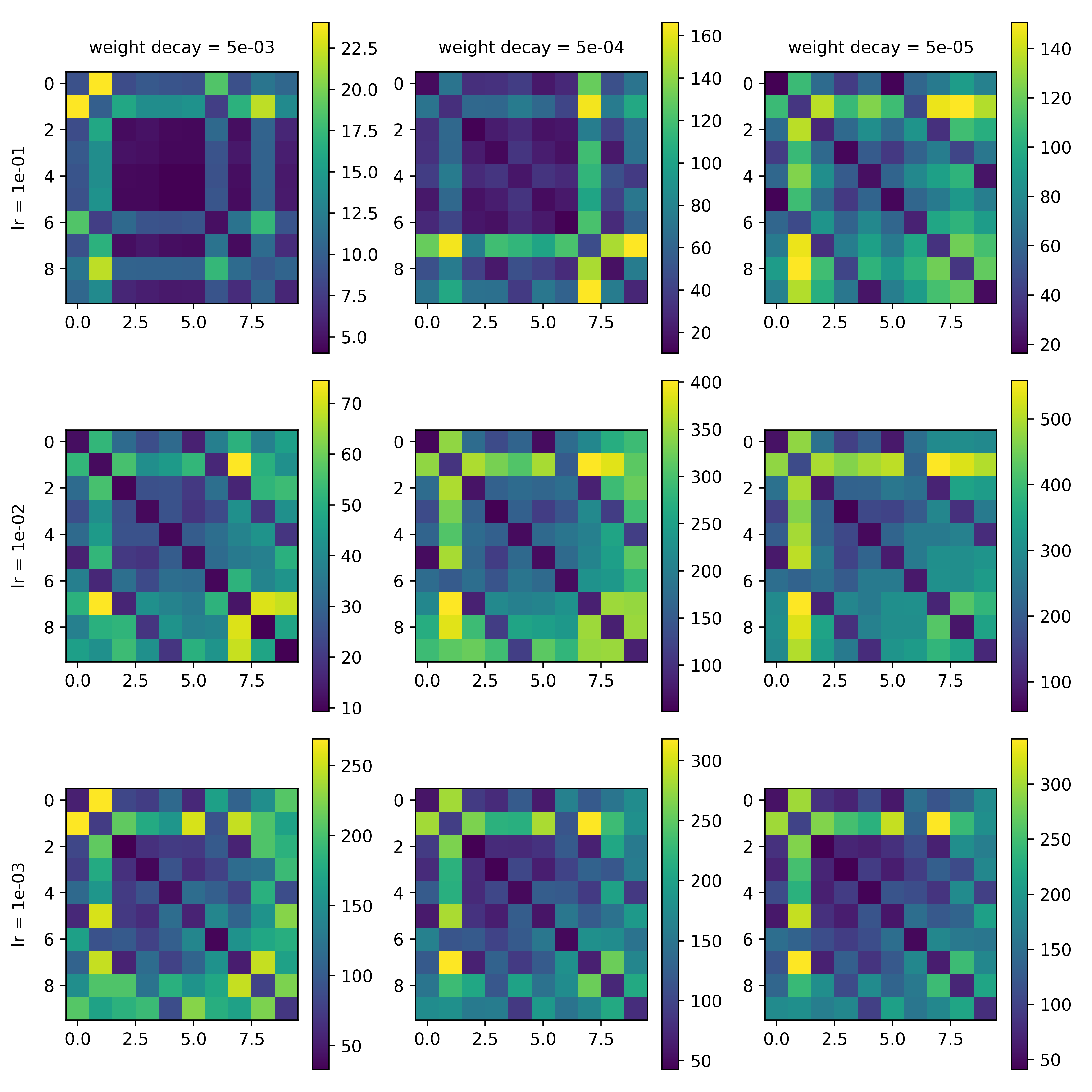}
        \caption{The heatmaps of class distance matrices of different hyper-parameter combinations with VGG-18 on Coarse CIFAR-10 at epoch $20$.}
        \label{fig:distance-cifar-5-vgg-20}
    \end{minipage}
\end{figure}

\section{Experiment with VGG}\label{sec:vgg}

We also extend our experiments to VGG-18. Interestingly, VGG to some extent is a counter example of the observations made in the main paper: it only displays Neural Collapse, and can not distinguish different original classes within one super-class, even in an early stage of training. The reason why VGG is abnormal requires further exploration.

The class distance matrices of three epochs with VGG during training are shown in \Cref{fig:distance-cifar-5-vgg-20,fig:distance-cifar-5-vgg-200,fig:distance-cifar-5-vgg-350}. It can be observed that the three dark lines appears almost at the same time and always be of nearly the same darkness. This represents the trend predicted by Neural Collapse (\Cref{fig:distance-intro} (a)), but rejects the prediction made by (\Cref{fig:distance-intro} (b)).

\begin{figure}
    \begin{minipage}{0.49\linewidth}
        \centering
        \includegraphics[scale=0.30]{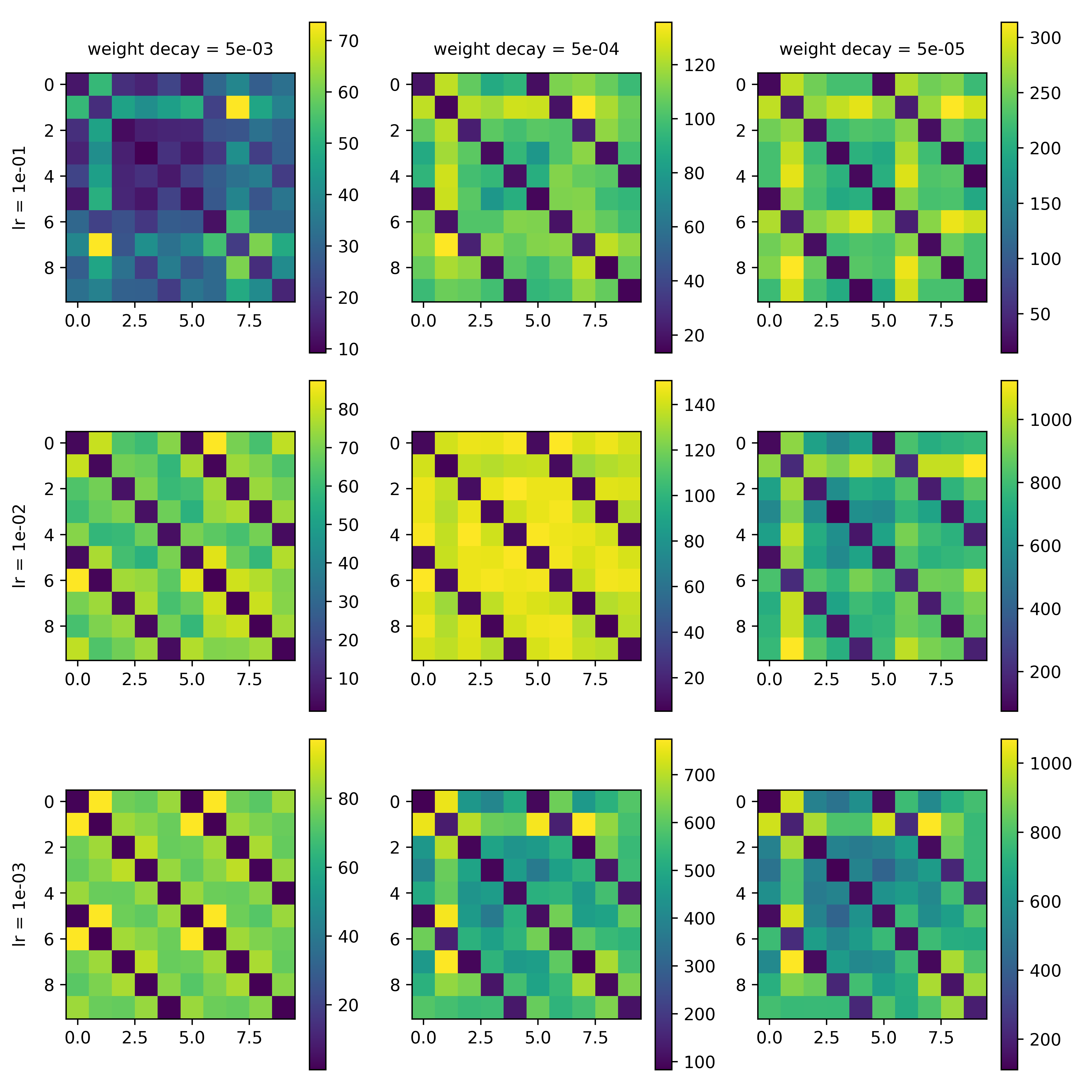}
        \caption{The heatmaps of class distance matrices of different hyper-parameter combinations with VGG-18 on Coarse CIFAR-10 at epoch $200$.}
        \label{fig:distance-cifar-5-vgg-200}
    \end{minipage}
    \hfill
    \begin{minipage}{0.49\linewidth}
        \centering
        \includegraphics[scale=0.30]{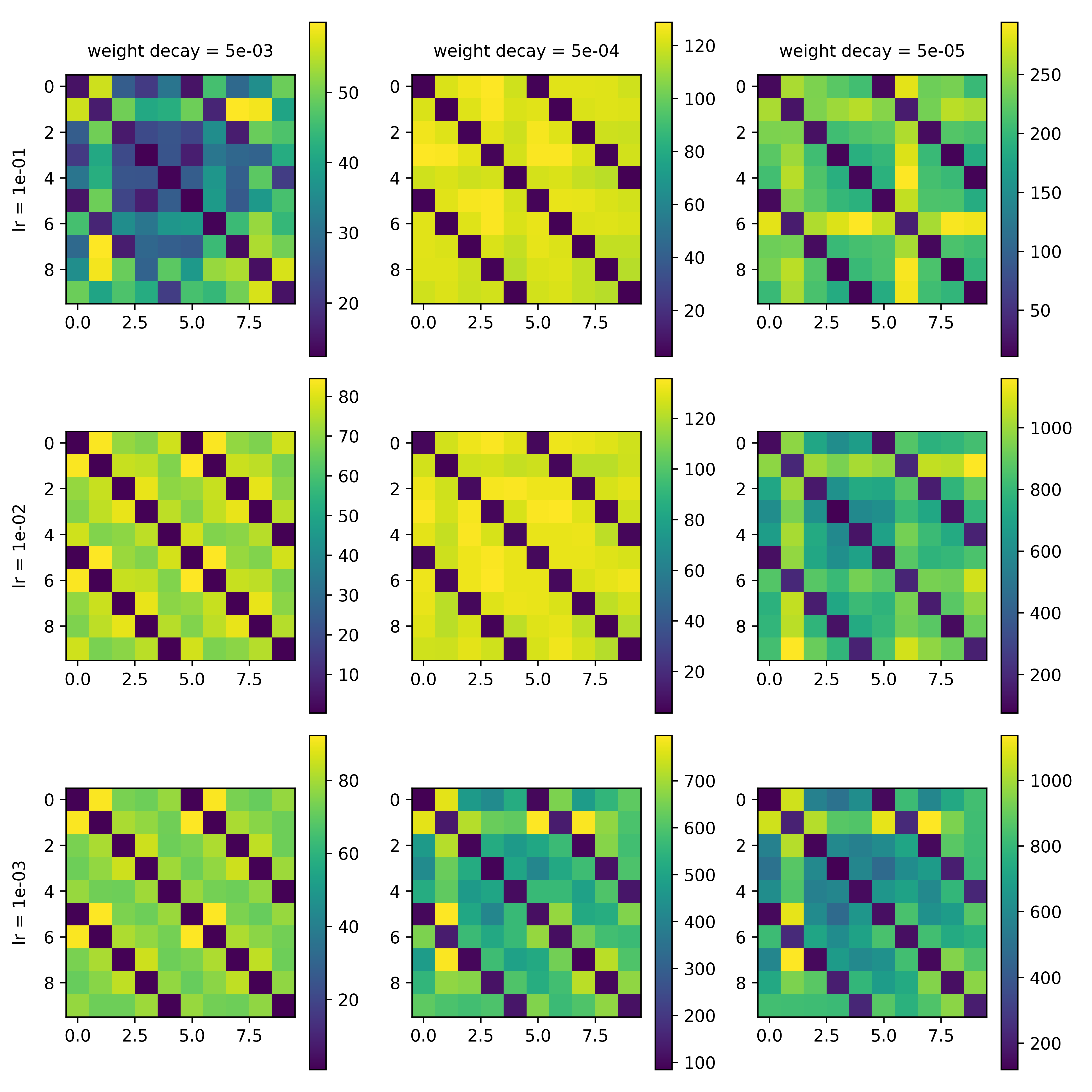}
        \caption{The heatmaps of class distance matrices of different hyper-parameter combinations with VGG-18 on Coarse CIFAR-10 at epoch $350$.}
        \label{fig:distance-cifar-5-vgg-350}
    \end{minipage}
\end{figure}

\end{document}